\newcommand{\abs}[1]{\left\vert #1 \right\vert}
\def\reals{{\mathcal R}}
\newcommand{\ignore}[1]{}
\def\reals{{\mathbb R}}
\def\bold0{\mathbf{0}}
\newcommand\E{\mathbb{E}}
\def\al#1\eal{\begin{align}#1\end{align}}
\def\als#1\eals{\begin{align*}#1\end{align*}}
\newcommand{\non}{\nonumber\\}
\newtheorem{theorem}{Theorem}[section]
\newtheorem{lemma}{Lemma}[section]
\newtheorem{assumption}{Assumption}[section]
\newcommand{\gnote}[1]{}
\newcommand{\snote}[1]{}
\newcommand{\MSnote}[1]{}
\newcommand{\dnote}[1]{}
\newcommand{\RB}[1]{}
\newcommand{\EM}[1]{}
\title{DropCompute: simple and more robust  distributed synchronous training via compute variance reduction}
\author{
Niv Giladi\textsuperscript{1,2}\thanks{Equal contribution}\quad
Shahar Gottlieb\textsuperscript{1,2}\footnotemark[1]\quad
Moran Shkolnik\textsuperscript{1,2}\quad
Asaf Karnieli\textsuperscript{2}\AND
Ron Banner\textsuperscript{2}\quad
Elad Hoffer\textsuperscript{2}\quad
Kfir Yehuda Levy\textsuperscript{1}\quad
Daniel Soudry\textsuperscript{1}\quad
\\[0.2cm]
\textsuperscript{\textbf{1}}Technion - Israel Institute of Technology\\
\textsuperscript{\textbf{2}}Habana-Labs
\\[0.2cm]
\small{\texttt{\{giladiniv, moranshkolnik, elad.hoffer, kfiryehud, daniel.soudry\}@gmail.com}}\\
\small{\texttt{\{sgottlieb, akarnieli, rbanner\}@habana.ai}}
}
\begin{document}

\maketitle

\begin{abstract}
\textbf{Background.} Distributed training is essential for large scale training of deep neural networks (DNNs). The dominant methods for large scale DNN training are synchronous (e.g. \textit{All-Reduce}), but these require waiting for all workers in each step. Thus, these methods are limited by the delays caused by straggling workers.\\  
\textbf{Results.} We study a typical scenario in which workers are straggling due to variability in compute time. We find an analytical relation between compute time properties and scalability limitations, caused by such straggling workers. With these findings, we propose a simple yet effective decentralized method to reduce the variation among workers and thus improve the robustness of synchronous training. This method can be integrated with the widely used \textit{All-Reduce}. Our findings are validated on large-scale training tasks using 200 Gaudi Accelerators. A reference implementation\footnote[2]{\url{https://github.com/paper-submissions/dropcompute}} is provided.
\end{abstract}

\section{Introduction}

Deep Neural Networks (DNNs) training continues to scale over size and computational footprint, as a result of a higher number of trainable parameters, wider and deeper models, and growing amounts of training data. As improvements in model quality (measured by test loss, for example) \citep{scalingnlp} lead over hardware capabilities \citep{hwlottery}, this scale-up translates into a need for a growing number of training devices working in tandem \citep{chowdhery2022palm}, turning distributed training to the standard approach for training DNNs on a large scale. 

\begin{figure}[th!]
  \centering
  \begin{subfigure}[b]{0.45\textwidth}
      \includegraphics[width=\textwidth]{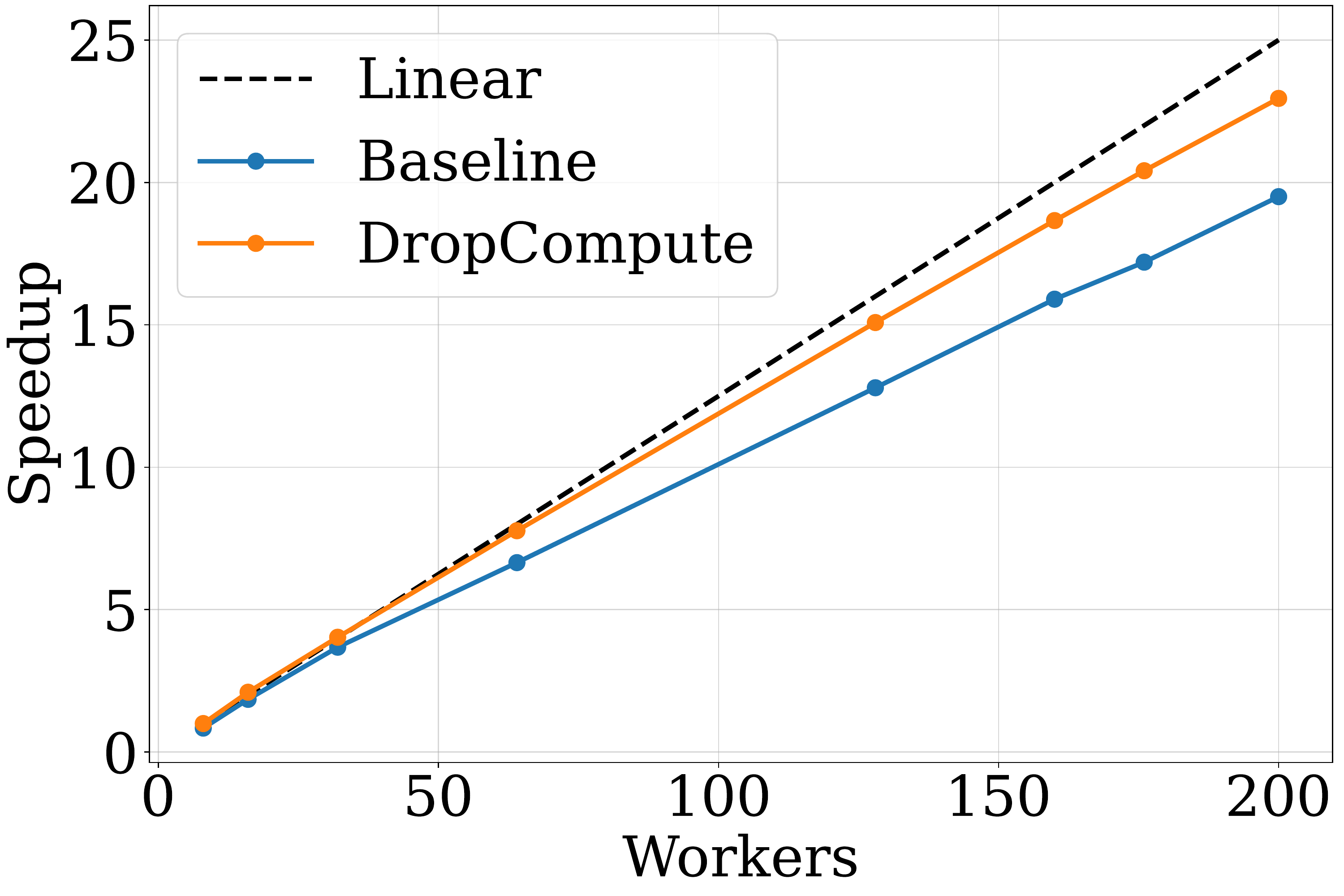}
  \end{subfigure}
  \hfill
  \begin{subfigure}[b]{0.45\textwidth}
      \includegraphics[width=\textwidth]{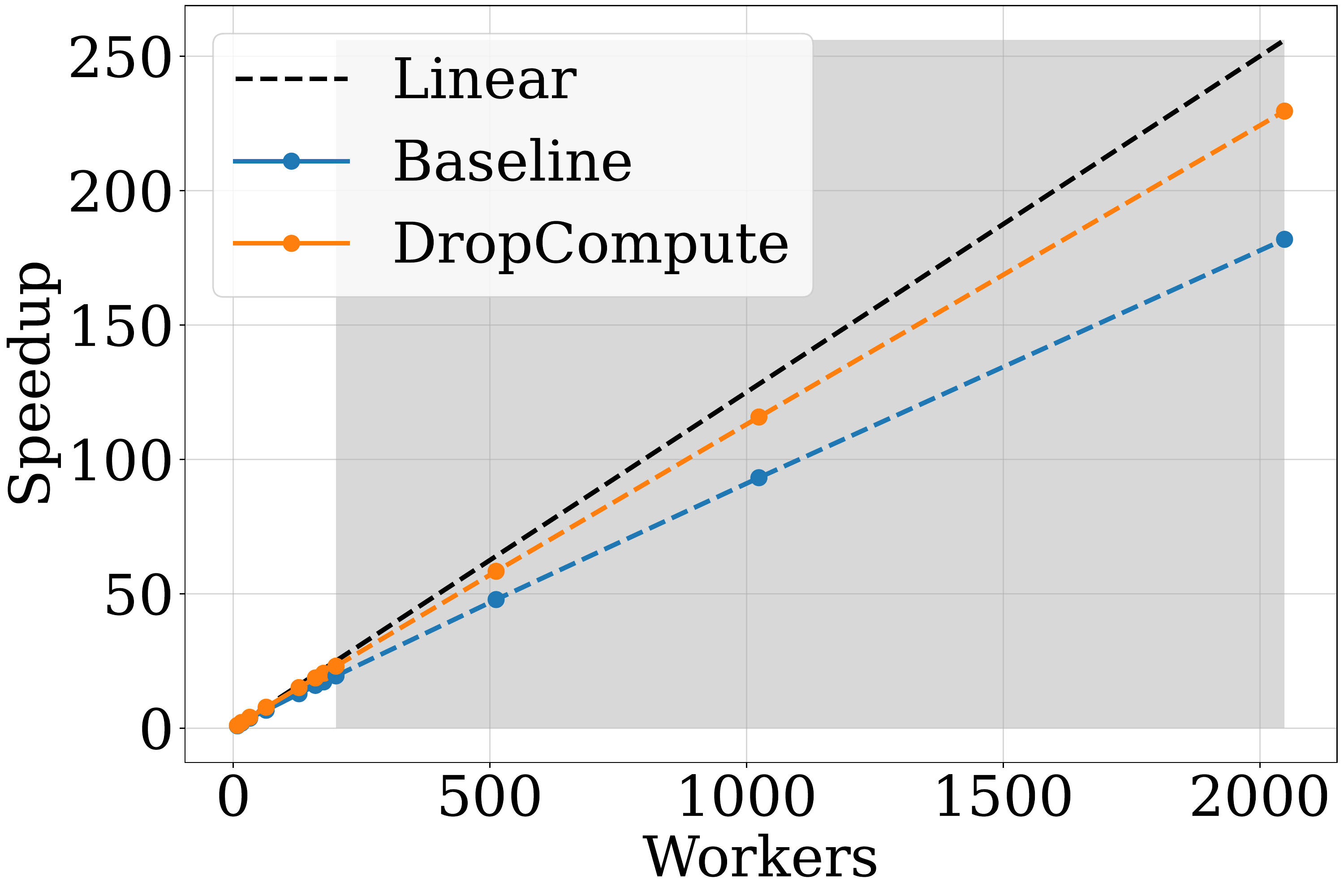}
  \end{subfigure}
  \caption{\textbf{\textit{DropCompute} improves robustness and scalability of synchronous training.} A scale graph, showcasing the proposed method runtime performance of synchronous training of a 1.5 billion parameter model with additive noise, simulating compute variance. The baseline represents existing synchronous training and the dashed black line is linear scaling. (left) Real measurements of up to 200 workers. (right) An extrapolation to 2048 workers using a theoretical estimation, also proposed in the paper. More details are provided in section \ref{sec:runtime_perf}.}
  \label{fig:abstract} \vspace{-2mm}
\end{figure}\vspace{-2mm}

Distributed training typically refers to three parallelism paradigms --- data parallel, model parallel and layer pipelining \citep{DemystifyingDistributed}. Several variants and hybrid solutions exist in modern implementations such as tensor parallel \citep{Megatron2021} and parameter sharding \citep{rajbhandari2020zero, rasley2020deepspeed}. These can be used separately or combined as they are orthogonal to each other. Mainly, data parallelism is straightforward, where the data is sharded among workers, and all workers share the same global model state. At each step, workers compute gradients locally and then aggregate them before taking an optimization step. 
When training synchronously, workers update their parameters in lockstep. This ensures that all workers hold a consensus on the same model and that gradients are averaged over all workers before being applied to the model. This approach is easy to implement and allows for good convergence properties, and correspondingly is the prevalent optimization method.

Although state-of-the-art models use synchronous optimization for training, synchronous methods scale poorly, as stragglers and communication overhead might severely deteriorate system utilization. Many resources are invested in alleviating these issues in large-scale training systems since even minor improvements can be worth hundreds of thousands of dollars. These issues are exacerbated as the required scale grows, even in homogeneous high-performance computing clusters \citep{petrini2003case, hoefler2010characterizing}. In this paper, we are interested in cases where significant computing variance between the workers exists. This includes (but is not limited to) straggling workers. 

For instance, certain learning tasks entail heterogeneity in the required computation of data, such as varying sentence lengths in language processing, or different image sizes and frame numbers in computer vision. In addition, recent state-of-the-art models use all three parallelism paradigms (data (DP), tensor (TP), and pipeline (PP) parallelism), thus each data parallel node is a set of processing units (accelerators) communicating between them (via TP and PP) to calculate the model gradients collectively. This could potentially intensify compute variance between data parallel workers. Moreover, sub-optimal hardware systems can also lead to straggling workers. Although some sources of heterogeneity can be mitigated by non-trivial engineering work, they still persist as challenges that can introduce compute variance and substantial performance drops. This is further discussed in appendix \ref{appendix:discussion}.

In this paper, we suggest a simple, yet effective method called \textit{DropCompute} to improve the robustness and scalability of synchronous optimization in the face of compute variance. We model the compute time as a random variable and show that under reasonable assumptions, a tail of straggling workers slows down the system at a rate that is not proportional to the contributed compute by these straggling workers.
We harness the gradient accumulation method widely used in Large Language Models (LLMs) \citep{ott-etal-2018-scaling, roberta} to implement the method in a few lines of code on a relevant large-scale learning task.

The contributions of our work include:
\begin{itemize}
    \item \textit{DropCompute}: a novel, decentralized method to better handle heterogeneity or stragglers without additional hyper-parameters. \textit{DropCompute} is hardware and framework agnostic, runs on top of existing optimizers, and can also be combined with other methods that improve other aspects of robustness such as communication overhead.
    \item A theoretical convergence proof of SGD with stochastic batch size, as in \textit{DropCompute}.
    \item A theoretical runtime analysis on standard synchronous training and the proposed method. We find an approximation of the expected speedup using \textit{DropCompute}, and show this speedup goes to infinity as the number of workers grows. 
    \item An empirical evaluation of the proposed method on a relevant large scale task, using up to 200 accelerators connected with high bandwidth communication. For example, when using \textit{DropCompute} before and after system optimizations (for both software and hardware) we show accelerations of 18\% and 5\%, respectively.
    \end{itemize}


\section{Related Work}

The challenge of training deep neural networks on a large scale has been extensively explored. With rapidly growing models and data sizes, numerous works tackled the weaknesses in synchronous DNN training on a large scale and suggested methods to alleviate these weaknesses.

\textbf{Redundancy methods.} This line of work addresses the straggling worker problem using a redundancy mechanism. Redundant workers or redundant data are used such that straggling workers will not slow down the entire system \citep{chen2016revisiting, bitar2020stochastic}. These methods provide better robustness to synchronous training, even in the event of a complete failure of a subset of the workers or considerable communication slowdown. However, the robustness is limited by the redundancy factor, and more generally, more compute resources are required and full utilization cannot be achieved. In addition, some coordination method is required to keep the training synchronous, i.e., keeping consensus between the model replicas of the workers. In particular, \citet{chen2016revisiting, bitar2020stochastic} use a centralized approach of a parameter server to determine which workers are left out at each iteration. Modern large-scale systems use decentralized variants of \textit{All-Reduce} \citep{baidu_ringreduce, patarasuk2009bandwidth}, so it is not trivial to determine which workers should be considered at each step, given that each worker can see a different subset of straggling workers. Moreover, combining redundancy with communication primitive collectives (e.g., \textit{All-Reduce}) requires adaptation to existing underlying frameworks \citep{sanders2019sequential}.

\textbf{Asynchronous optimization.} Another approach is introducing asynchrony to the optimization. Asynchronous training is inherently more scalable than synchronous training by being robust to all kinds of workers and communication faults. This includes periodic synchronization by exchanging parameters every $\tau$ optimization steps \citep{stich2018local, Lin2020Don't, wang2021cooperative, zhang2015deep, Wang2020SlowMo, li2020federated, li2020breaking}, approximate distributed averaging where each worker communicates with a subset of workers each step \citep{jiang2017collaborative, lian2017can, assran2019stochastic, yang2020mitigating}, and many more. These works provide better scale-up properties and improve time performance. The main drawback is asynchronous optimization itself. In practice, the convergence is less stable, and more optimization steps are needed to generalize as well as synchronous training. In addition, hyperparameters should be chosen more precisely to guarantee convergence and generalization properties \citep{giladi2019stability, mitliagkas2016asynchrony}. Due to these issues, asynchronous methods are less commonly used on a large scale.

\textbf{Sparse and compressed communication.} Alternatively, several works addressed only the communication overhead. A common technique is to reduce the amount of data exchanged by the workers at each step. This can be done by gradient pruning \citep{xu2021deepreduce}, gradient compression \citep{1bitsgd, chen2020scalecom, 1bitadam} or low-rank approximation \citep{vogels2019powersgd}. These works reduce the communication overhead in a deterministic form while ignoring any compute variance across processing workers. This makes these works orthogonal to ours and potentially can be combined with our method.

\section {Reducing Compute Variance} \label{section:distributed_training}
This paper proposes a method called \textit{DropCompute} that improves the robustness of synchronous training by reducing compute variance. First, we describe the vanilla synchronous training framework. Then, we introduce the proposed approach.

\subsection{Problem setup} \label{subsection:problem_setup}
We start with a model formulation for data-parallel synchronous SGD training with $N$ workers, where each worker holds a replica of the model parameters $\theta$. In parallelism paradigms that combine TP or PP with DP, $N$ represents the count of data-parallel workers, not the total number of workers involved in training. Given a dataset $\mathcal{D}$, we are interested in minimizing the empirical loss
\begin{equation*}
    \mathcal{L}(\mathcal{D},\theta)=\frac{1}{\abs{\mathcal{D}}}\sum_{z\in\mathcal{D}}\ell(z,\theta),
\end{equation*}
where $\ell(z,\theta)$ is the loss with respect to data-point $z$ and the model parameters $\theta$.
At each step, the workers calculate gradients based on a local batch and then aggregate the gradients before taking an optimization step. An equivalent strategy would be that the workers aggregate the parameters after taking an optimization step based on the local gradients. The aggregation is done in a decentralized fashion, such as \textit{AllReduce}. 

We also consider the use of gradient accumulation to increase the size of the local batch. Gradient accumulation breaks down each worker's local batch into $M$ micro-batches for computation, which enables reaching a large global batch size beyond hardware capacity. This is common practice in training of LLM on large scale where a substantial number of accumulations are being utilized \citep{smith2022using, nvidiaMlperfGPT3}.
In each iteration $i$ and accumulation $m$, the gradients of the loss function with respect to the model parameters are computed, denoted by 
\begin{equation*}
    g_{n}^{(m)}(\theta_i) = \nabla \mathcal{L}(\mathcal{D}_{i,n}^{(m)}, \theta_i)\,,
\end{equation*}
where $\mathcal{D}_{i,n}^{(m)}\subseteq \mathcal{D}$ is the micro-batch $m$ of worker $n$, sampled without replacement from $\mathcal{D}$, and we assume a constant micro-batch size $|\mathcal{D}_{i,n}^{m}|$.
The gradients accumulated by worker $n$ at step $i$ are
\begin{equation*}
    g_{n}(\theta_i) = \frac{1}{M} \sum_{m=1}^{M} g_{n}^{(m)}(\theta_i) \,.
\end{equation*}
Finally, the workers aggregate and average the computed gradients to update the model parameters
\begin{equation}
    \theta_{i+1} = \theta_i - \eta g(\theta_i);\quad  g(\theta_i) = \frac{1}{N} \sum_{n=1}^{N} g_{n}(\theta_i)    \,,
    \label{eq:theta}
\end{equation}
where $\eta$ is the learning rate. 
Equation \ref{eq:theta} requires all workers to receive all gradients before updating the parameters. This communication restriction is what makes the training synchronous and ensures the model parameters are the same for all workers. However, due to this restriction, the slowest worker at each step dictates the iteration time. More generally, any variation in computation time among workers will lead to workers with idle time. Therefore, to improve the efficiency and robustness of the synchronous training process, we need to reduce the computation time variance among workers (namely, the compute variance). 

\subsection{Our method: \textit{DropCompute}} \label{sec:our_method}
To mitigate the effect of compute variance and to improve the robustness of synchronous training, we propose a simple yet effective method called \textit{DropCompute}. \textit{DropCompute} reduces the compute variance by introducing a \textit{compute threshold} after which workers have to stop calculating local gradients and move directly to the communication phase, i.e., Equation \ref{eq:theta}.

In each iteration $i$, each worker $n$ measures the time while calculating its local batch, and compares it against a given threshold $\tau$, which is set as described in section \ref{section:automatic_threshold}. 
If that time exceeds the \textit{compute threshold}, the worker stops and sends the gradients it has calculated so far. The rest of the training remains unchanged, thus synchronous training is intact.
The pseudo-code of \textit{DropCompute} is shown in Algorithm \ref{algorithm:drop_compute}. 
This method maintains a decentralized approach
while improving its robustness to compute variance and stragglers in particular. Since our method drops samples, the batch size is no longer deterministic. Specifically, the total batch size (the total number of samples used in one iteration by all workers) can be smaller than the maximal total batch size  $b_{\max}=NM|\mathcal{D}_{i,n}^{(m)}|$. Nevertheless, we show both theoretically and empirically that this makes no difference to convergence or generalization when the number of computed samples remains the same. Next, we analyze this method in section \ref{section:analysis} and evaluate it in section \ref{section:experiments}.

\begin{algorithm}[h!]
    \caption{DropCompute on worker $n$ at iteration $i$}
    \label{algorithm:drop_compute}
    \begin{algorithmic}[1]
        \Input  \,\,model parameters $\theta_i$; total number of micro-batches $M$; compute threshold $\tau$
            \State local batch data $\mathcal{D}_{i,n}=\{\mathcal{D}_{i,n}^{(1)}, \mathcal{D}_{i,n}^{(2)}, \cdots, \mathcal{D}_{i,n}^{(M)}\}$;  step time $T_n$ to compute local batch $\mathcal{D}_{i,n}$
        \EndInput
        \State \textbf{Initialize} step time $T_n=0$  and accumulated gradients $g_{n}(\theta_i)=0$       
        \InParallel
            \NumFor{1}{$m=1, \dots, M$}
                \State \hskip1.5em $g_{n}^{(m)}(\theta_i) = \nabla \mathcal{L}(\mathcal{D}_{i,n}^{(m)}, \theta_i)$ \Comment{Compute gradient}
                \State \hskip1.5em $g_{n}(\theta_i) \leftarrow g_{n}(\theta_i) + g_{n}^{(m)}(\theta_i)/M$ \Comment{Accumulate gradients (atomic)}
            \EndNumFor
            \State \textbf{(2) wait for $T_n>\tau$ and break for loop (1)}
            
        \EndInParallel
        \State \textbf{Output: }$g_{n}(\theta_i)$ \Comment{\textit{AllReduce}}
    \end{algorithmic}
\end{algorithm}\vspace{-2mm}

\section{Method Analysis} \label{section:analysis}

After establishing the notion of compute variance and the formulation of the proposed method, we analyze synchronous training and the potential value of the proposed method on the training time performance. To do so, we start with convergence guarantees when using \textit{DropCompute}. Then, we theoretically analyze the computation time with synchronous training and when using \textit{DropCompute}. Through this analysis, we estimate the potential speedup over vanilla synchronous training.

\subsection{Convergence analysis of \textit{DropCompute}} \label{subsec:convergence}
Using \textit{DropCompute}, the batch size is no longer fixed, but a random variable, which can also potentially depends on the data samples. To the best of our knowledge, this setting is somewhat different from existing convergence guarantees. Therefore, in this section, we provide convergence guarantees for \textit{DropCompute} by analyzing the convergence of SGD with stochastic batch size.

\begin{assumption} \label{assumption:convergence}
    Following the notations in section \ref{subsection:problem_setup}, consider a possibly non-convex smooth loss function $\mathcal{L}(\mathcal{D},\theta)$, with a global minimum $\theta^*\in\reals^d$,  and the following (commonly used) assumptions
    \begin{enumerate}
        \item \textbf{L-smooth}: All functions $\mathcal{L}(\cdot,\theta)$ are with L-Lipschitzian gradients.
        \item \textbf{Unbiased estimation}: Each $\nabla\ell(z,\theta_i) ,z\in\mathcal{D}$ is an unbiased estimate of $\nabla \mathcal{L}(\mathcal{D},\theta_i)$ which is the true (full batch) gradient  at $\theta_i$\footnote{This is easily satisfied when all workers can access all data.}. Namely,
        $\forall i: \E[\nabla\ell(z,\theta_i)\vert \theta_i] = \nabla\mathcal{L}(\mathcal{D},\theta_i)\,.$
        \item \textbf{Bounded variance}: The variance of the stochastic gradient is bounded by a constant $\sigma$,
        $$\forall i:  \E[\|\nabla\ell(z,\theta_i)-\nabla\mathcal{L}(\mathcal{D},\theta_i)\|^2\vert \theta_i] \leq \sigma^2\,.$$
    \end{enumerate}
\end{assumption}

\begin{theorem} \label{theorem:non_convex} 
    Under the above assumption, applying SGD with \textit{DropCompute} (Algorithm \ref{algorithm:drop_compute}), ensures
    \begin{equation} \label{eq:NonCvxSGD_GUARANTEES_final}
        \mathbb{E}\|\nabla\mathcal{L}(\mathcal{D},\bar{\theta})\|^2 \leq \frac{2L b_{\max} (\mathcal{L}(\mathcal{D},\theta_1) - \mathcal{L}(\mathcal{D},\theta^*))}{K} + \frac{2\sigma\sqrt{L(\mathcal{L}(\mathcal{D},\theta_1) - \mathcal{L}(\mathcal{D},\theta^*)) }}{\sqrt{K}}~,
    \end{equation}
    where $b_{\max}$ is the maximal total batch size, $K$ is the total number of samples that are used throughout the training, $\theta_1$ is the initial model, and $\bar{\theta}$ is a random sample of $\theta_i$ from the trajectory obtained by Algorithm \ref{algorithm:drop_compute}, where $i$ is selected with probability proportional to the total batch size at iteration $i$. The expectation is with respect to the randomization introduced due to sampling from $\mathcal{D}$ throughout the optimization process and with respect to choosing $\bar{\theta}$.
\end{theorem}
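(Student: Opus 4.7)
The plan is to recast Algorithm~\ref{algorithm:drop_compute} as a single SGD-style recursion with a random, data-dependent effective step size proportional to the number of samples that clear the compute threshold, and then to run a weighted non-convex descent argument whose natural iterate weights coincide exactly with the sampling rule defining $\bar\theta$.

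First I would pin down the effective update. Because each worker divides its accumulator by the nominal $M$ rather than by the number $M_n$ of micro-batches it actually completed, the aggregated gradient can be written $g(\theta_i)=(K_i/b_{\max})\tilde g_i$, where $K_i\le b_{\max}$ is the total surviving sample count at iteration $i$ and $\tilde g_i$ is the uniform average of those $K_i$ per-sample gradients. Under Assumption~\ref{assumption:convergence}, and assuming the compute-time-driven threshold is independent of the gradient realizations within an iteration, $\tilde g_i$ is conditionally unbiased given $(\theta_i,K_i)$ with conditional variance at most $\sigma^2/K_i$. The parameter update is therefore $\theta_{i+1}=\theta_i-\eta_i\tilde g_i$ with $\eta_i=\eta K_i/b_{\max}$.

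Next I would apply $L$-smoothness to obtain the one-step descent inequality, take conditional expectation given $(\theta_i,K_i)$, and use $K_i\le b_{\max}$ together with $\eta\le 1/L$ to absorb the quadratic $\tfrac{L\eta_i^2}{2}\|\nabla\mathcal{L}(\theta_i)\|^2$ term into half of the linear one. This leaves
\begin{equation*}
\tfrac{\eta K_i}{2b_{\max}}\|\nabla\mathcal{L}(\theta_i)\|^2 \;\le\; \mathcal{L}(\theta_i)-\mathbb{E}[\mathcal{L}(\theta_{i+1})\mid\theta_i,K_i]+\tfrac{L\eta^2\sigma^2 K_i}{2b_{\max}^2}.
\end{equation*}
Summing over $i$, telescoping the loss differences against $\mathcal{L}(\mathcal{D},\theta^*)$, and dividing by $K=\sum_i K_i$ yields a bound of the form $\mathbb{E}\|\nabla\mathcal{L}(\bar\theta)\|^2\le \tfrac{2b_{\max}\Delta}{\eta K}+\tfrac{L\eta\sigma^2}{b_{\max}}$ with $\Delta=\mathcal{L}(\mathcal{D},\theta_1)-\mathcal{L}(\mathcal{D},\theta^*)$, since weighting iteration $i$ by $K_i/K$ is precisely the definition of $\bar\theta$. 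Minimizing the right-hand side over $\eta\in(0,1/L]$ by $\eta=\min\{1/L,\;b_{\max}\sqrt{2\Delta/(L\sigma^2 K)}\}$ recovers the two-term bound in~\eqref{eq:NonCvxSGD_GUARANTEES_final}: the first summand is tight in the clipped regime $\eta=1/L$, while the second dominates in the interior regime.

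The principal obstacle is handling the joint randomness of the $K_i$'s and of $\tilde g_i$ carefully enough that $\tilde g_i$ remains unbiased and variance-bounded after conditioning; one must make explicit that the threshold crossing is driven by (exogenous) compute times, so that conditional on $K_i$ the surviving set is an unbiased subsample. A related subtlety is that $K$ and the iteration count are themselves random, which is why the theorem re-weights $\bar\theta$ by $K_i/K$: this is the change of measure that turns the weighted descent inequality into a clean bound on $\mathbb{E}\|\nabla\mathcal{L}(\bar\theta)\|^2$ without having to push Jensen through a random denominator.
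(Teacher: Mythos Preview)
Your high-level skeleton---rewrite the update as $\theta_{i+1}=\theta_i-\eta_i\tilde g_i$ with $\eta_i\propto K_i$, apply the $L$-smoothness descent inequality, sum with weights $K_i$, telescope, and tune $\eta$---matches the paper exactly, and the observation that the $K_i/K$ weighting is precisely the law of $\bar\theta$ is the right one.

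The gap is in the step you correctly flag as the ``principal obstacle.'' You resolve it by \emph{assuming} the compute threshold is exogenous, so that conditional on $K_i$ the surviving samples are still an i.i.d.\ draw and $\tilde g_i$ is unbiased with variance $\sigma^2/K_i$. But the theorem (and the discussion preceding it) is stated precisely to cover the case where the batch size ``can also potentially depend on the data samples''---think variable-length inputs where slow micro-batches are the long ones. Under such dependence, conditioning on $K_i$ biases the surviving subsample, and your conditional unbiasedness and $\sigma^2/K_i$ variance claim both fail. The paper avoids conditioning on $K_i$ altogether: it treats $b_i$ as a bounded \emph{stopping time} with respect to the filtration generated by the samples, forms the martingale $M_k=\sum_{s=1}^k(g_i^s-\nabla\mathcal{L}(\theta_i))$, and invokes Doob's optional stopping theorem to get $\mathbb{E}[\alpha_i(g_i-\nabla\mathcal{L}(\theta_i))\mid\theta_i]=0$ (Lemma~\ref{lem:Unbiased}) and, via a supermartingale version of the same device, $\mathbb{E}\|\sum_{s=1}^{b_i}\xi_i^s\|^2\le\sigma^2\,\mathbb{E}b_i$ (Lemma~\ref{lem:BoundG1}). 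These two lemmas are what replace your conditional-on-$K_i$ moments and are the technical heart of the proof; once they are in place, the rest of the argument proceeds exactly as you outline. So your diagnosis of where the difficulty lies is right, but the fix is optional stopping, not an added independence assumption.
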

    
The bound in Theorem~\ref{theorem:non_convex} is similar to existing fixed batch-size guarantees~\citep{dekel2012optimal}. The second term in the bound does not degrade with the batch sizes, and behaves like $O(1/\sqrt{K})$, while the first term behaves like $O(b_{\max} /{K})$. This implies that as long as $b_{\max}  \leq O(\sqrt{K})$, the second term is dominant and we are in the regime of linear speedup. This shows we can attain a linear speedup despite using changing batch sizes, as long as the maximal batch size is bounded.
    
Similarly, in Theorem \ref{theorem:convex_case} in appendix \ref{appendix:convex_proof} we show that the loss itself converges, in the convex case along with a proof. The proof of Theorem \ref{theorem:non_convex} is provided in appendix \ref{appendix:non_convex_proof}. Lastly, we discuss the impact of the stochastic batch size on generalization in appendix \ref{app:generalization_discussion}.

\subsection{Iteration time in standard synchronous distributed training} \label{sec:analysis_iteration_time}

We start with finding a closed-form expression for the cumulative distribution function (CDF) of the iteration time, denoted as $T$, defined as
\begin{equation*}
    T=\max(T_1,T_2,...,T_N)\,,
\end{equation*}
where $T_n$ represents the time taken by worker $n$ to compute its local batch, which follows some cumulative probability function
$F_{T_n}(x)=\mathbb{P}(T_n<x),$
which we assume is independent for each worker. 
Let $F_T(x)$ represent the cumulative distribution function and $f_T(x)$ represent the probability density function of the maximum iteration time $T$. The relation between $F_T(x)$ and $F_{T_n}(x)$ is
\begin{equation*}
    F_T(x) = \mathbb{P}\left(\max \left(T_1, \ldots, T_N\right) \leq x\right) =\prod_{n=1}^N F_{T_n}(x)\,.
\end{equation*}
Differentiating with respect to $x$ and applying the chain rule gives:
\begin{equation*}
    f_T(x)=\frac{d F_T}{d x}(x)=\sum_{n=1}^N f_{T_n}(x)\prod_{n'\neq n}^{N}F_{T_{n'}}(x) \,.
\end{equation*}


In the special case where all of the workers' iteration time distributions are identically and independently distributed (i.i.d.), this reduces to the well-known formula:
\begin{equation}
    \label{eq:latency_pdf}
    f_T(x)=N \cdot f_{T_n}(x) \cdot F_{T_n}(x)^{N-1}
\end{equation}
If the iteration time of each worker is distributed normally ($\sim \mathcal{N}(\mu, \sigma^2)$), the expected value of $T$ can be approximated as shown by \citet{bailey2014pseudomathematics}: 
\begin{equation}
\mathbb{E}(T) \approx \sigma\cdot \left( (1-\gamma)\cdot \Phi^{-1}\left(1-\frac{1}{N} \right) + \gamma\cdot \Phi^{-1}\left(1-\frac{1}{e\cdot N}\right)\right) + \mu
\label{eq:estimate_T}
\end{equation}
where $\Phi$ is the CDF of the standard normal distribution, and $\gamma$ is the euler-mascheroni constant. Asymptotically the total iteration time is $\mathbb{E}[T]=\Theta(\sqrt{\text{log} N})$.
When the number of micro-batches $M\gg1$, we can make a similar approximation under Central Limit Theorem (CLT) conditions. 
More details are in appendix \ref{appendix:speedup_analysis}.
\subsection{Iteration time and number of micro-batches with \textit{DropCompute}}

When using \textit{DropCompute} with a constant threshold $\tau$, each worker is preempted at $\tilde{T}_n=\min\left\{\tau,T_n\right\}$ and joins the \textit{AllReduce}. Therefore, the total iteration time with \textit{DropCompute} is
$$ \tilde{T} + T^c = \min\left\{\tau,T\right\}+T^c \, ,$$
where $T^c$ is a serial latency present in each iteration, which includes the \textit{AllReduce} step.
This upper limit serves to clip extreme values of $T_n$, effectively constraining the range of potential outcomes for $\tilde{T}$. As a result, the compute time variability decreases, leading to a narrower distribution and enhanced compute efficiency. These effects are illustrated in Figure \ref{fig:experiement_vs_theory}.

As a consequence of preempting each worker at $\tilde{T}_n$, the number of micro-batches computed in each step varies. Denote as $t_n^{(m)}$  the compute latency of a single micro-batch $m$ for worker $n$, and $T_n^{(m)}=\sum_{j=1}^mt_n^{(j)}$. We can define the average number of micro-batches computed by each worker before reaching threshold $\tau$ as
$$\tilde{M}=\frac{1}{N}\sum_{n=1}^N\sum_{m=1}^M \left\{
                            \begin{array}{lr}
                                1, & \text{if } T_n^{(m)}<\tau\\
                                0, & \text{otherwise }
                            \end{array}\right\}~.$$
Under CLT conditions, the expected value for $\tilde{M}$ can be approximated in a closed form:
\begin{equation}
    \mathbb{E}[\tilde{M}] \approx \sum_{m=1}^M \Phi\left(\frac{\tau - m\cdot \mu}{\sqrt{m\cdot \sigma^2}} \right)
    \label{eq:estimate_M}
\end{equation} 
where $\mu, \sigma^2$ are the mean and variance for a single micro-batch $t_n^{(m)}$ compute latency, and $\Phi$ is the CDF of the standard normal distribution. 
This approximation closely fits the real value of $\tilde{M}$ and can be used to analyze the expected gain from \textit{DropCompute}. More details in appendix \ref{appendix:speedup_analysis}.
\begin{figure}[h!]
    \centering
    \begin{subfigure}[]{0.48\textwidth}
        \includegraphics[width=\textwidth]{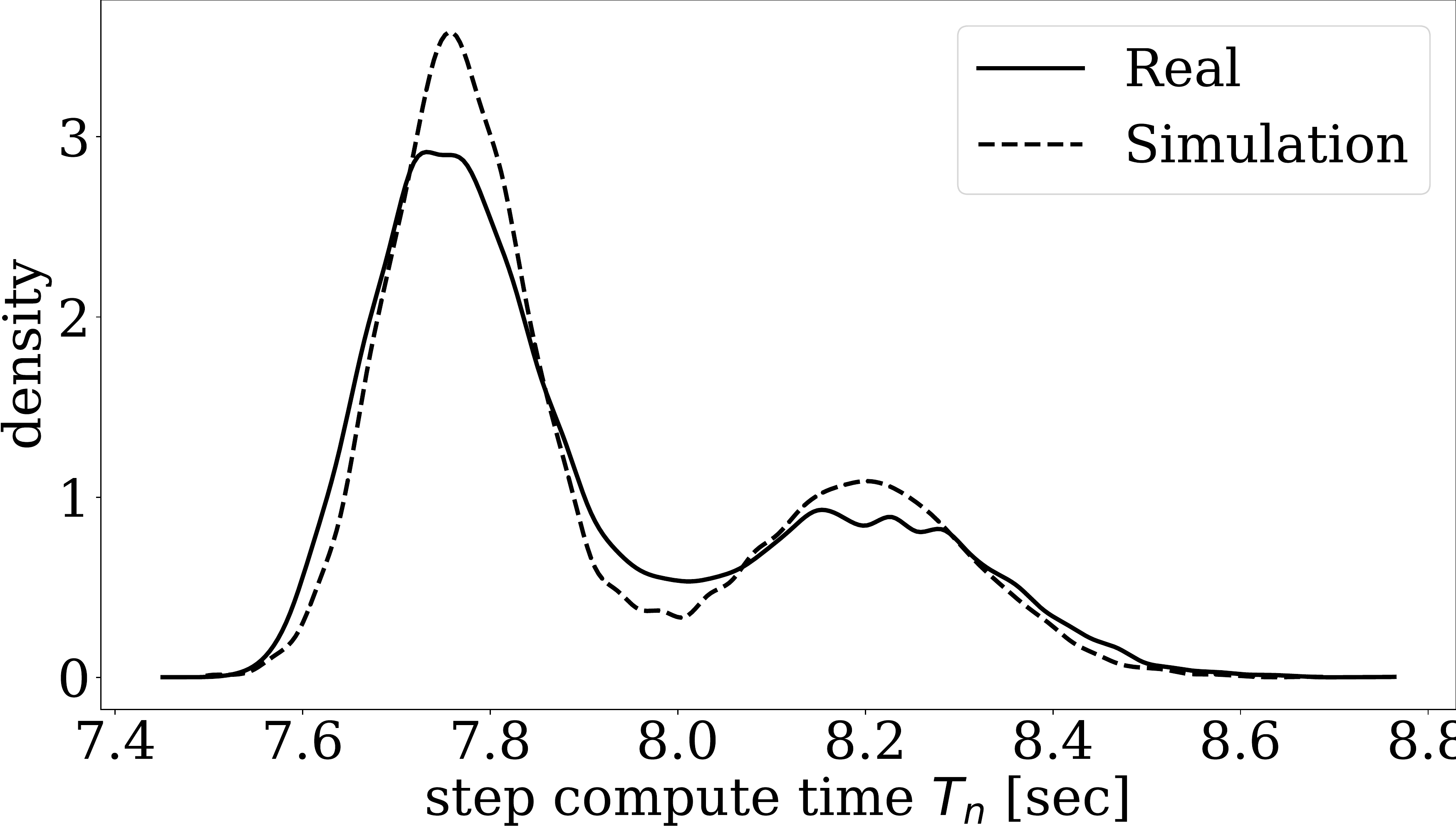}
    \end{subfigure}
    \hfill
    \begin{subfigure}[]{0.48\textwidth}
        \includegraphics[width=\textwidth]{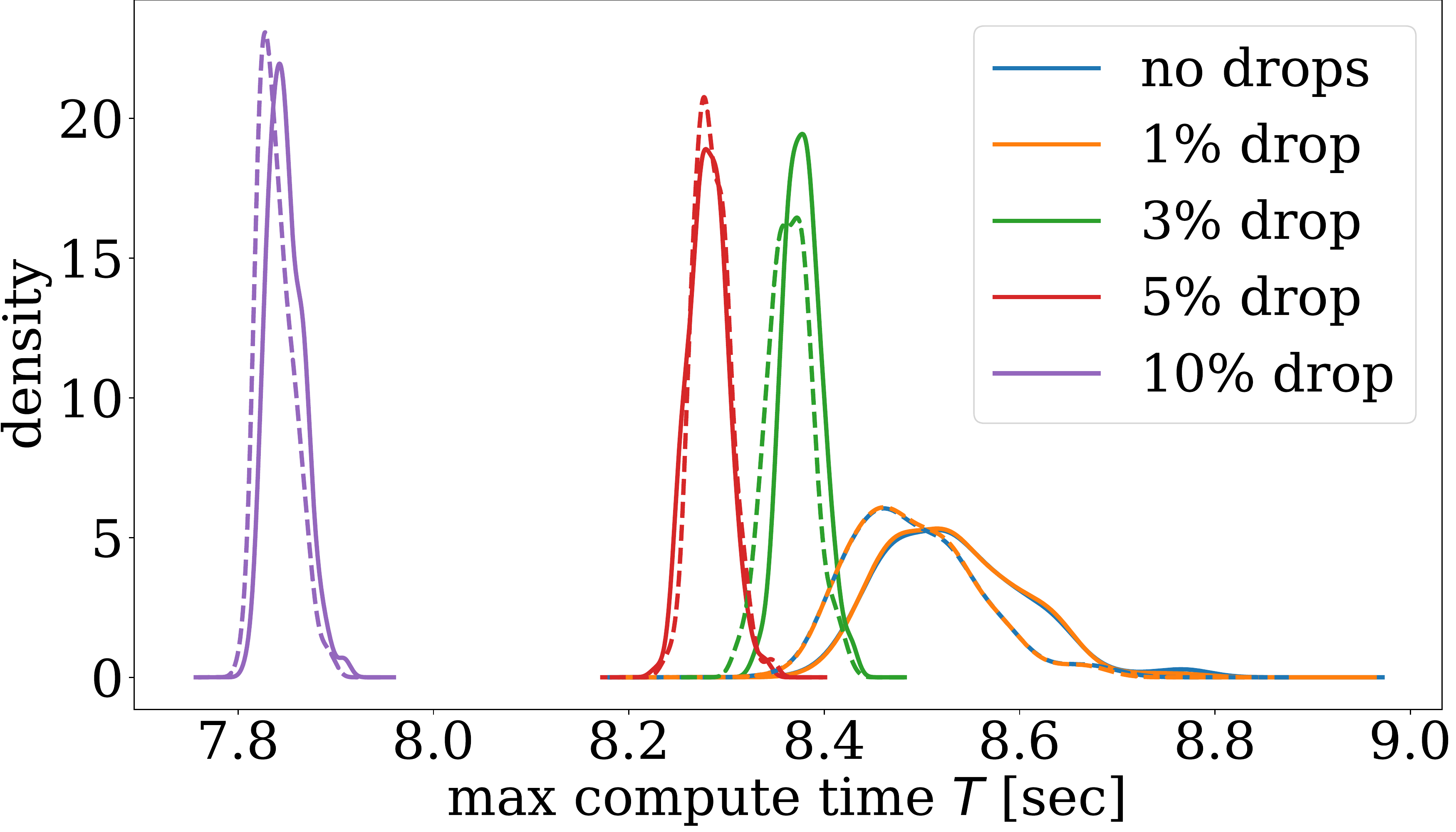}
    \end{subfigure}
    \caption{\textbf{Reduction in variance and mean iteration time using \textit{DropCompute}.} Iteration time distribution of 200 workers using \textit{DropCompute} on BERT 1.5B. (left) Step time $T_n$ distribution of all workers, without \textit{DropCompute}. (right) Maximum iteration time $T$, across all workers, using \textit{DropCompute}, with different drop rates. 
    The dashed `simulation' distribution is generated by drawing $T_n$ randomly from an independent normal distribution separately for each worker, using the empiric mean and variance of that worker.}
    \label{fig:experiement_vs_theory}\vspace{-2mm}
\end{figure}

\subsection{Choosing the threshold} \label{section:automatic_threshold}
The throughput of the system can be seen as the number of micro-batches computed per second. For $N$ workers, this can be written as $NM/(T+T^c)$. To evaluate the effectivness of \textit{DropCompute}, we consider the difference in throughput between the baseline and when using \textit{DropCompute}. Doing so, we can define the effective speedup for $\tau$ as:
\begin{equation}
    S_{\mathrm{eff}}(\tau)= \frac{\text{DropCompute  Throughput}}{\text{Baseline Throughput}} = \frac{N\tilde{M}/ (\min\left\{\tau,T\right\}+T^c)}{NM/(T+T^c)} = \frac{\tilde{M}(T+T^c)}{M(\min\left\{\tau,T\right\}+T^c)}
    \label{eq:auto_eff}
\end{equation}
Given the statistical characteristics of the training setting, it is possible to estimate analytically the expected value of the effective speedup $\mathbb{E}[S_\mathrm{eff}(\tau)]$ by using Equations \ref{eq:estimate_M} and \ref{eq:estimate_T}.
Moreover, when plugging in the asymptotic form of $\mathbb{E}[T]$, we find the expected speedup increases to infinity with $N$
$$ \mathbb{E}[T]=\Theta(\sqrt{\text{log} N}) \;\; \Rightarrow \;\; \mathbb{E}[S_\mathrm{eff}(\tau)](N)\underset{N\to\infty}{\longrightarrow}\infty$$

As shown in figure \ref{fig:estimate_s_eff_b}, Equation \ref{eq:estimate_T} is less accurate when samples deviate from a normal distribution.
\begin{figure}[h!]
    \centering
    \begin{subfigure}[b]{0.32\textwidth}
        \caption{}
        \label{fig:estimate_s_eff_a}
        \includegraphics[width=\textwidth]{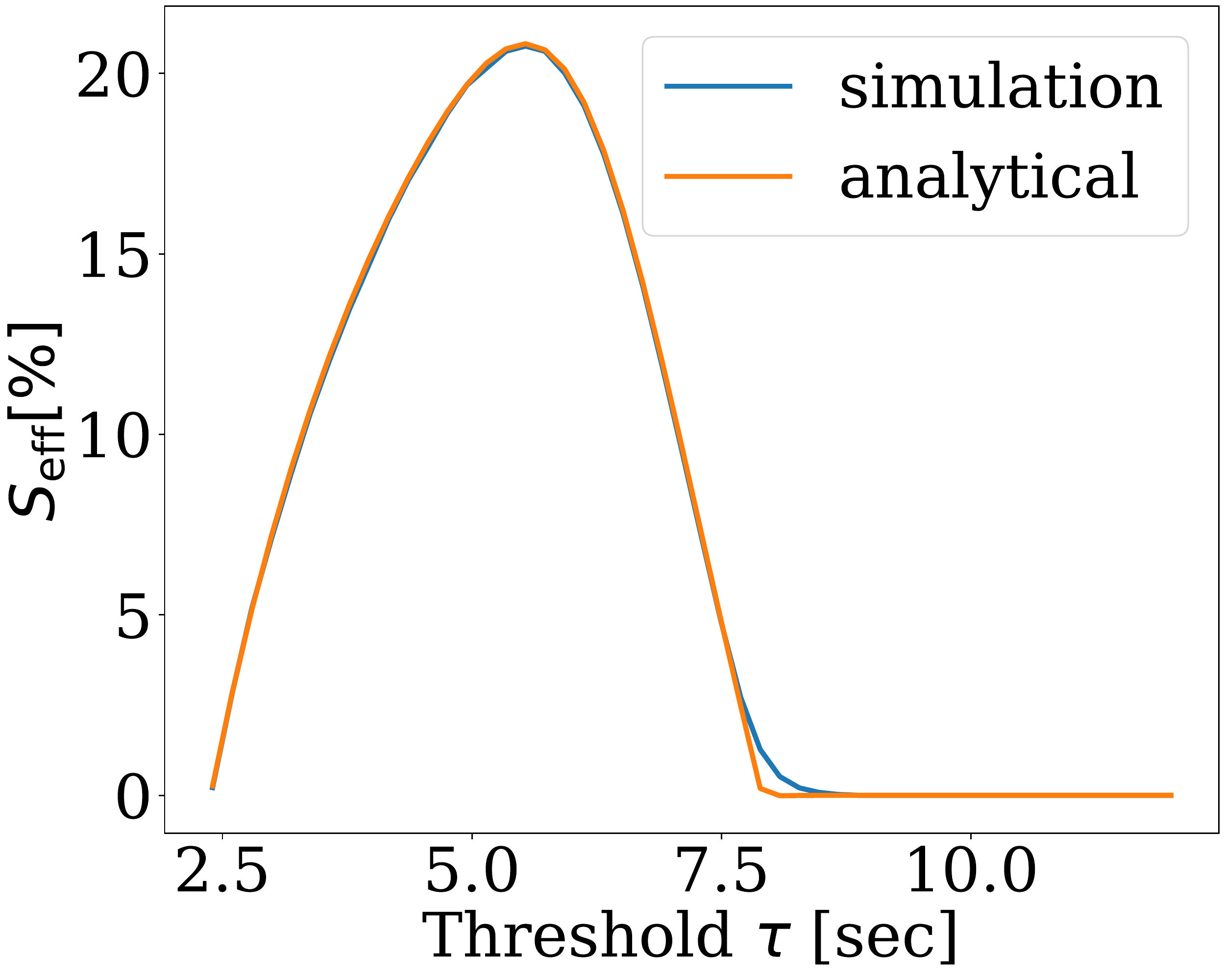}
    \end{subfigure}
    \hfill
    \begin{subfigure}[b]{0.32\textwidth}
        \caption{}
        \label{fig:estimate_s_eff_b}
        \includegraphics[width=\textwidth]{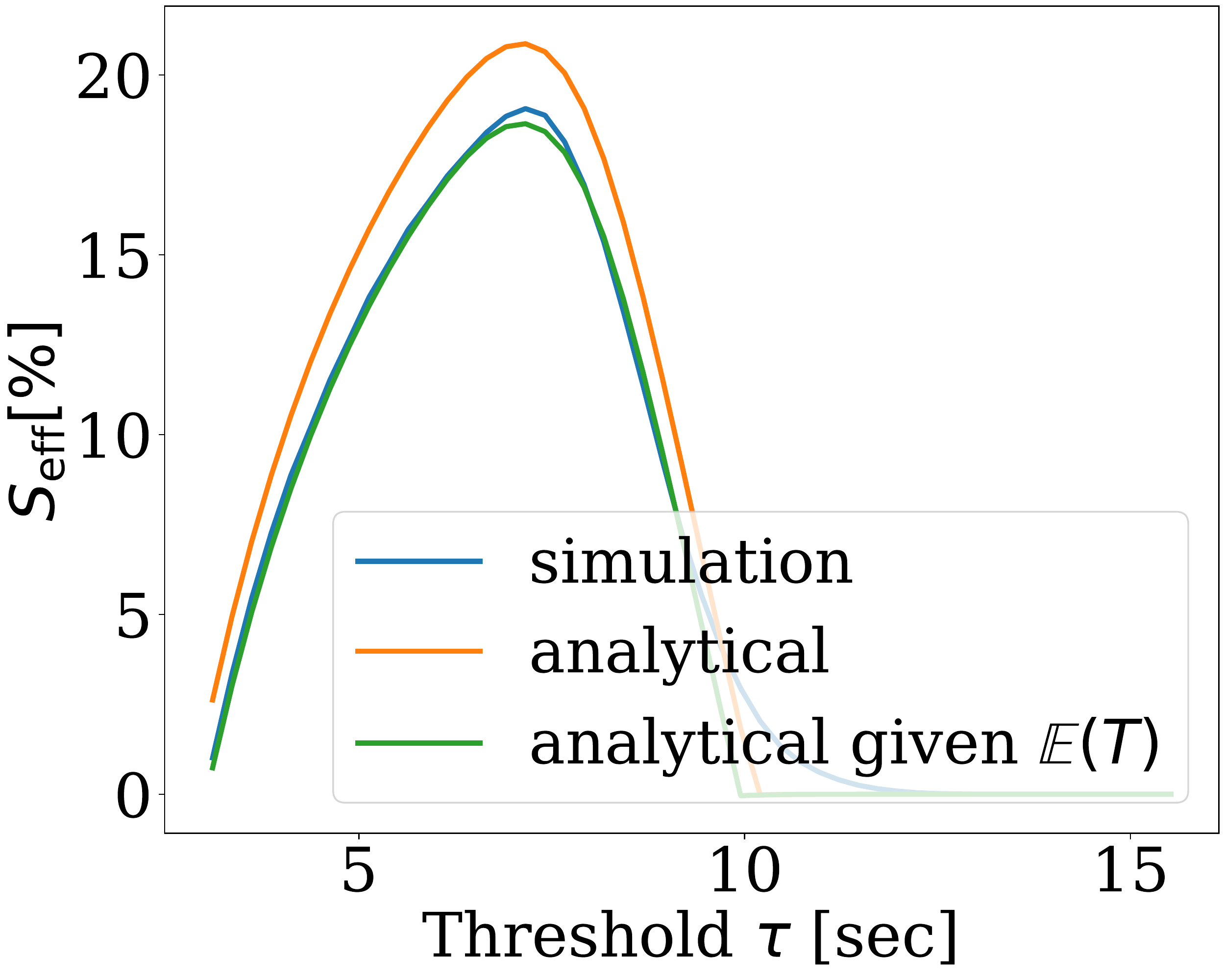}
    \end{subfigure}
    \hfill
    \begin{subfigure}[b]{0.32\textwidth}
        \centering
        \caption{}
        \label{threshold_choosing}
        \includegraphics[width=\textwidth]{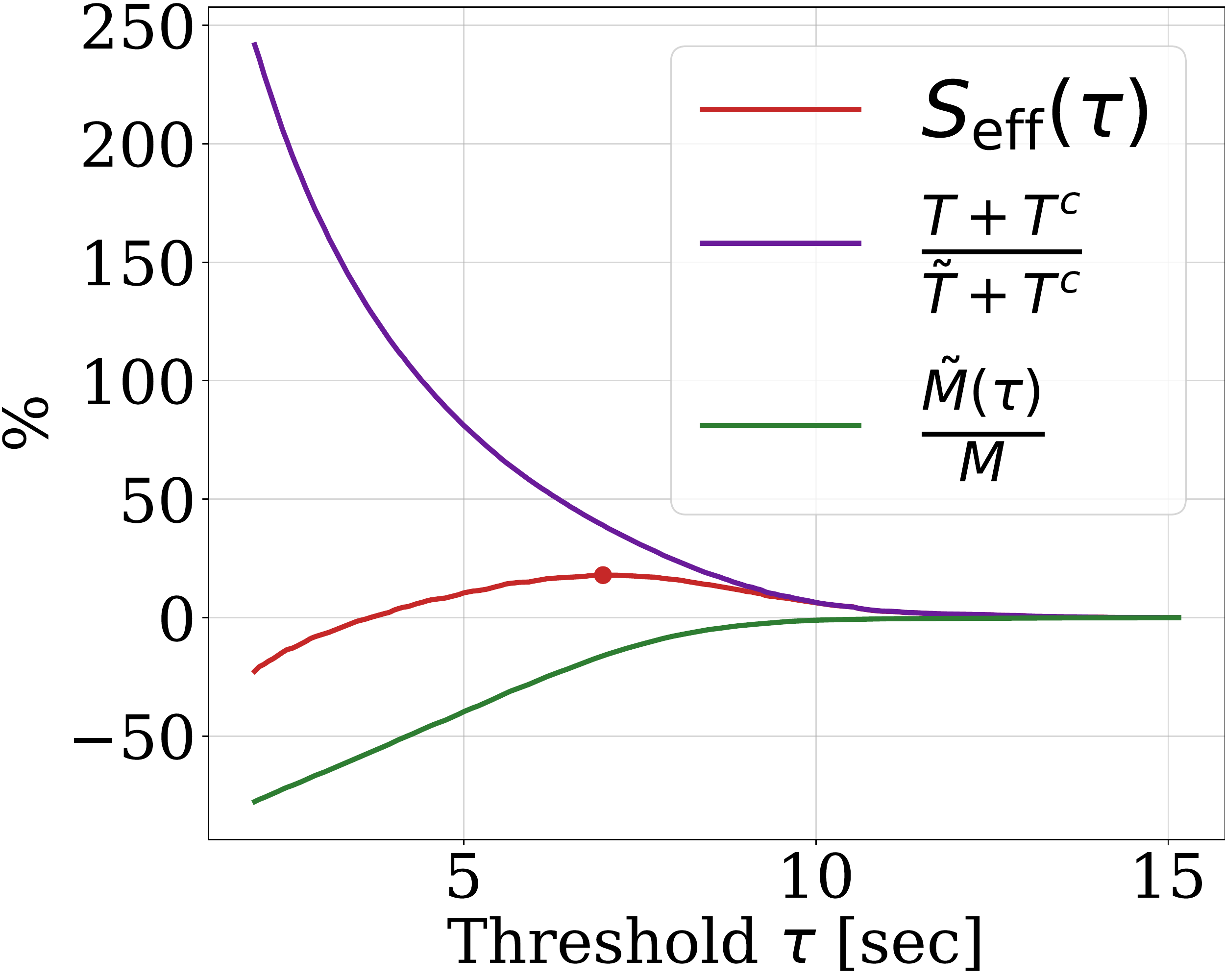}
    \end{subfigure} 
    \caption{\textbf{Statistical characteristics of the micro-batch computation latency $t_n^{(m)}$ can provide a reliable estimate of the effective speedup $S_\mathrm{eff}$}. The graphs depict $S_\mathrm{eff}(\tau)$ based on Equation \ref{eq:auto_eff}. Calculations rely on samples of $t_n^{(m)}$ to calculate $\tilde{M}$ and $T$, before plugging them into Equation \ref{eq:auto_eff}. In the `simulation' curves, we directly use the samples to calculate $\tilde{M}$ and $T$. For the `analytical' curves, only the mean and variance of $t_n^{(m)}$ are used to approximate $\tilde{M}$ and $T$ using Equations \ref{eq:estimate_M} and \ref{eq:estimate_T}, respectively. For `analytical given $\mathbb{E}[T]$', $\tilde{M}$ is approximated using Equations \ref{eq:estimate_M}, but $\mathbb{E}[T]$ is calculated directly from the samples. More details in appendix \ref{appendix:speedup_analysis}. Panels: (a) The $t_n^{(m)}$ samples follow a normal distribution. (b) samples are taken from BERT1.5B pre-training with simulated delay as described in Section \ref{sec:runtime_perf}. (c) \textbf{The optimal compute threshold can be found automatically.} The effective speedup, micro-batch completion rate, and step speedup as a function of $\tau$ in a simulated delay environment.}
    \label{fig:estimate_s_eff}
\end{figure}
To find the optimal compute threshold, we synchronize the empirical distribution of micro-batch compute latency between all workers after a few iterations. Given this distribution, we find $T$ and $\tilde{M}$, and search in a decentralized way for the threshold $\tau^*$ that maximizes the effective speedup $S_{\mathrm{eff}}(\tau)$  defined in Equation \ref{eq:auto_eff}. 
Overall, the cost of synchronizing the empirical distribution and finding $\tau^*$ is negligible, compared to a full training session, because it happens only once in a training session.
Lowering the threshold leads to reduced compute time but higher compute drop rates. Figure \ref{threshold_choosing} highlights this trade-off and the optimal $\tau^*$ is marked.


\subsection{Compensating for dropped samples} \label{section:dropped_samples_compensation}


The effective speedup metric $S_\mathrm{eff}$, accounts for dropped samples by treating them as a source of slowdown in direct proportion to the drop rate. This consideration enables us to execute additional computations to achieve the theoretical speedup. The extent of extra time spent on redundant calculations can be as much as $R=(M / \tilde{M}-1)$ times the computational effort required when not applying \textit{DropCompute}.

For instance, when 10\% of the samples are dropped, we can expect to perform approximately 11\% more calculations. Achieving this can be approached in several ways. One straightforward compensation method for LLM training involves adding an extra $R\cdot I_\mathrm{base}$ steps to the training process, where $I_\mathrm{base}$ represents the number of training steps conducted without using \textit{DropCompute}. In practice, achieving the original accuracy often requires even fewer additional steps, as illustrated in Figure \ref{fig:time_to_train}, resulting in an even higher effective speedup.


Another method of compensating for the dropped samples is to increase the maximal batch size. When increasing the batch by $R$ and dropping in average $1-\tilde{M}/M$, we keep the average batch size the same as without using \textit{DropCompute}, hence compensating for the lost samples. 
A third method, orthogonal to the first two, is resampling dropped samples before starting a new epoch to diversify the overall samples seen by the model. 
These approaches are rigorously tested and compared in Table \ref{table:compensating_dropped_samples}

\section{Experiments} \label{section:experiments}
To be useful, \textit{DropCompute} must possess two properties. First, it should not compromise the accuracy of the trained model. This property is put to test in section \ref{sec:generalization_perf} where we fully train BERT-Large and ResNet-50 \citep{devlin2018bert, resnet}, each on a different task, with different drop rates to compare accuracy.
Second, \textit{DropCompute} should maintain a high level of runtime performance, especially when compute variance or straggling workers exist and vanilla synchronous training time deteriorates. Section \ref{sec:runtime_perf} tests runtime performance of \textit{DropCompute} by training a 1.5 billion parameter language model, BERT1.5B \citep{devlin2018bert} with additive noise to the compute time of each worker.   

\textbf{Experimental setup.} The analysis of all BERT models is performed on the same dataset as \citet{devlin2018bert}, which is a concatenation of Wikipedia and BooksCorpus with 2.5B and 800M words respectively. The finetuning of the pretrained models is performed on SQuAD-v1 \citep{rajpurkar-etal-2016-squad}. We verify the generality of \textit{DropCompute} by additional evaluation of a ResNet-50 model for image classification on ImageNet \citep{imagenet_cvpr09}. The experiments depicted in section \ref{sec:runtime_perf} and section \ref{sec:generalization_perf} are executed on Habana Gaudi-1  and Gaudi-2 accelerators, respectively, with high performance network \citep{habana2020-whitepaper}.

\subsection{Generalization performance} \label{sec:generalization_perf}
The sole difference in the optimization when \textit{DropCompute} is applied is that the batch size is not deterministic, but stochastic, as explained in section \ref{sec:our_method}. To complement theorem \ref{theorem:non_convex}, we examine the generalization performance achieved with a stochastic batch size on two popular tasks. 

\textbf{Image classification.} To evaluate the generality of stochastic batch size and \textit{DropCompute} in particular, we evaluate the Top-1 accuracy of a ResNet-50 model on the Imagenet dataset using our method. Since it is not common to use gradient accumulation in large scale training of this task, we simulate the drops such that each worker randomly drops its local batch, so the total batch size is stochastic. This simulated environment enables us to examine the extent of drop rate we can use without compromising accuracy. Figure \ref{fig:resnet50_accuracy} in appendix \ref{appendix:image_classification} shows that up to $10\%$ drop rate, which is more than what \textit{DropCompute} operates on, there is a negligible deterioration in accuracy.

\begin{table}[]\vspace{-2mm}
\begin{subtable}{0.47\textwidth}
    \caption{Varying drop rate, no compensation}
    \begin{tabular}{| c c |}
         \hline
         \quad\% Drop rate \quad & \quad F1 score on dev set \quad\quad \\ [0.5ex] 
         \hline\hline
         0\% & 91.32 $\pm$ 0.15 \\ 
         \hline
         2.5-3\% & 91.34 $\pm$ 0.04  \\
         \hline
         5.5-6\% & 91.44 $\pm$ 0.02  \\
         \hline
         10-11\% & 91.19 $\pm$ 0.02  \\
         \hline
    \end{tabular}
    \label{table:accuracy}\vspace{0.5mm}
\end{subtable}
\begin{subtable}{0.49\textwidth}
    \caption{10\% drop rate, with compensation}
    \begin{tabular}{| c c |}
        \hline
        Compensation method & F1 score on dev set  \\ [0.5ex] 
        \hline\hline
        None & 91.19 $\pm$ 0.02 \\ 
        \hline
        11\% extra steps & 91.40 $\pm$ 0.08  \\
        \hline
        11\% increased batch size & 91.38 $\pm$ 0.08  \\
        \hline
        re-computation & 91.19 $\pm$ 0.11  \\
        \hline
    \end{tabular}
    \label{table:compensating_dropped_samples}\vspace{0.5mm}
\end{subtable}
\caption{\textbf{Maintaining the accuracy of BERT-Large pretraining.} Fine-tuning results on SqUAD v1.1, where the F1 score is obtained by the pretrained model. \textbf{(a)} The effect of different drop rates during pretraining on the final accuracy, without compensating for the dropped samples. \textbf{(b)} When 10\% drop rate is used during pretraining, with different methods of compensating for the dropped samples.}\vspace{-5mm}
\end{table}

\textbf{Large language model.} Training LLMs is resource intensive, typically using large batch sizes, which makes \textit{DropCompute} appealing. We evaluate \textit{DropCompute} method on this task by fully pretraining BERT-Large model several times, each with a different drop rate. We follow the optimization regime described in \citet{you2019large} with a batch size of 64K for phase-1 and 32K for phase-2 (more details are provided in appendix \ref{appendix:generalization}). Each of the pretrained models is fine-tuned on the SQuAD task 3 times with different initializations. Fine-tuning is performed without drops, as it is not a large scale resource consuming task. Table \ref{table:accuracy} shows the average accuracy ($\pm$ standard deviation) obtained for each drop rate. As shown, \textit{DropCompute} at drop rates of up to $10\%$ have negligible accuracy difference. Higher values measured up to $20\%$ of dropped gradients provide acceleration with a small yet discernible hit on accuracy. We note that these results are for a fixed budget of steps. In the presence of compute variance, the effective speedup indicates that additional steps can be executed while still maintaining competitive runtime performance. This notion is demonstrated in section \ref{sec:runtime_perf}.

    

\subsection{Runtime performance} \label{sec:runtime_perf}
The main purpose of our proposed method is to maintain runtime performance when compute variance is present. We examine this by measuring the speedup of \textit{DropCompute} over standard synchronous training in several settings. First, we measure the potential speedup for different drop rates and training settings by post analysis of synchronous training without drops. In addition, we introduce compute variance by training with additive noise, and measure actual speedups using \textit{DropCompute}. The experiments in this section are performed on BERT1.5B. Details are provided in appendix \ref{appendix:runtime_performace}.


\textbf{Training with different number of workers and micro-batches.}
We evaluate the potential speedup of \textit{DropCompute} on several training settings with natural heterogeneity and no drops. For each setting, we post analyze what would have been the speedup for different drop rates. As can be seen in Figure \ref{fig:speedup}, \textit{DropCompute} exhibits increasing benefits with a growing number of workers and compute requirements. However, there are diminishing returns in terms of speedup with more accumulations. This could possibly be explained by the amortization time of a large number of micro-batches.

\begin{figure}[h!]
  \centering
  \begin{subfigure}[]{0.49\textwidth}
      \includegraphics[width=\textwidth]{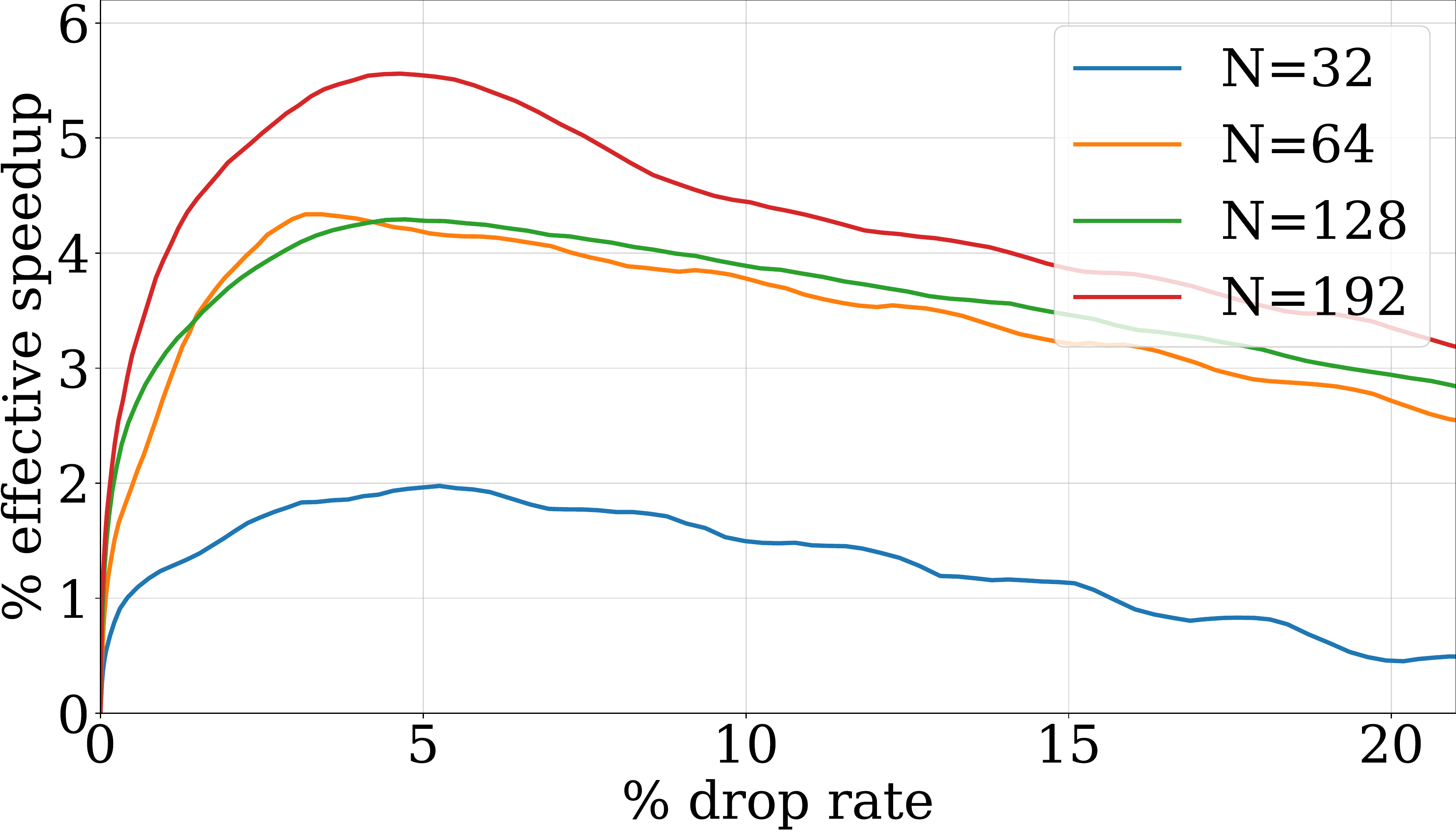}
  \end{subfigure}
  \hfill
  \begin{subfigure}[]{0.49\textwidth}
      \includegraphics[width=\textwidth]{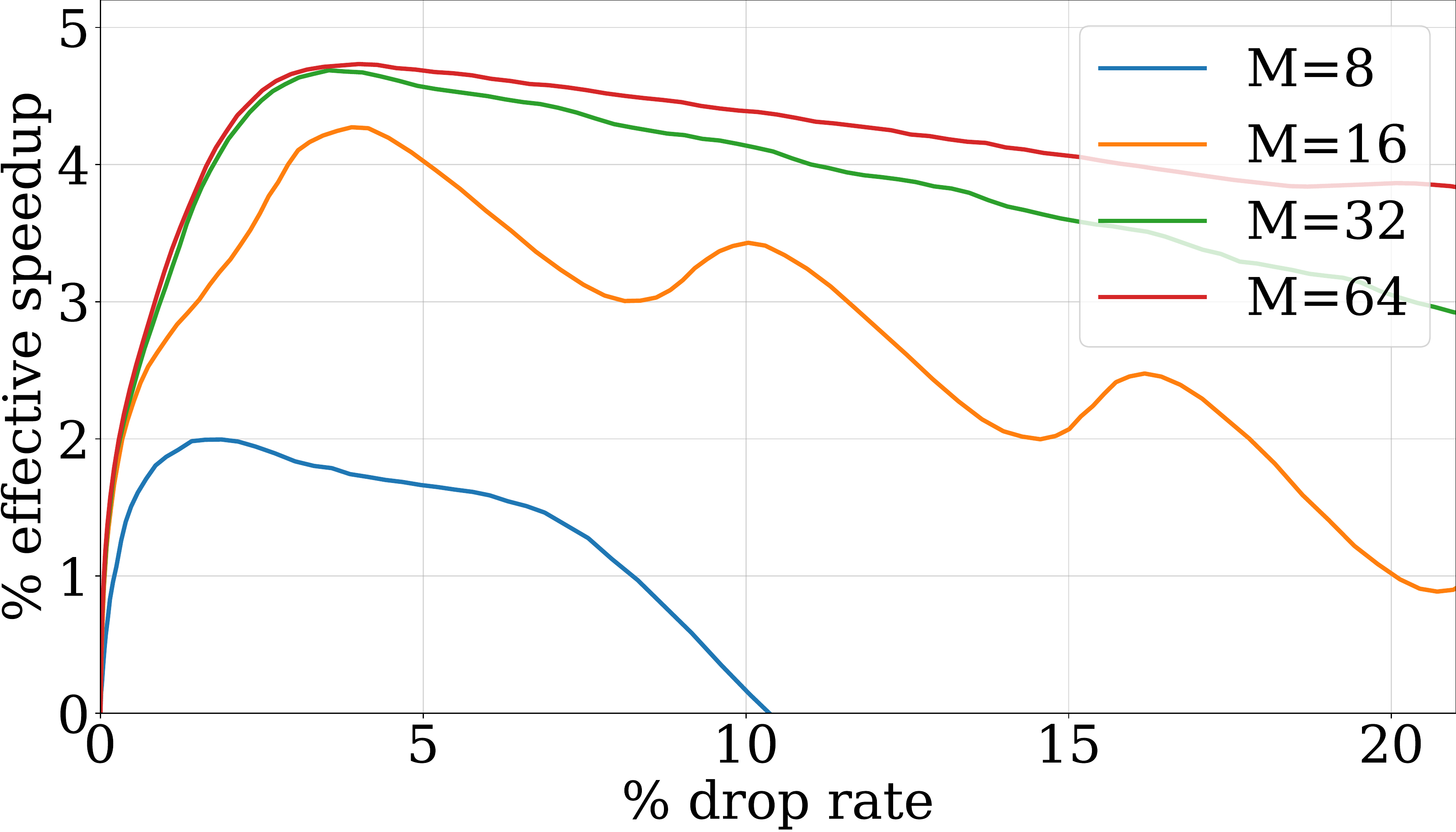}
  \end{subfigure}
  \caption{\textbf{\textit{DropCompute} exhibits increasing benefit on a large scale.}
  Effective speedup versus drop rate with (left) 32 accumulations and varying workers, and (right) 112 workers and varying number of accumulations.}
  \label{fig:speedup}
\end{figure}

\textbf{Simulated delay environment.} Although \textit{DropCompute} may have value when the workers' compute latency variance is low, its significance becomes crucial when the workers' compute latency exhibits high variability. To evaluate our method, we introduce a delay environment where random latency is added to each micro-batch computation. This additive noise follows a bounded log-normal distribution. Detailed information and motivation regarding the additive noise are in appendix \ref{appendix:runtime_performace}.
The experiments are executed with 12 gradient accumulations and a local batch size of 192.
In Figure \ref{fig:abstract}, the negative impact of compute variance on scalability is demonstrated and mitigated using \textit{DropCompute}. The results in Figure \ref{fig:abstract} also correspond to section \ref{section:analysis} and Equation \ref{eq:eff_speedup}, where a theoretical extrapolation follows the same trend line.
When utilizing \textit{DropCompute} in this setup, achieving the same training loss as the baseline might requires additional training steps, however, it leads to a notable reduction in overall training time. Figure \ref{fig:time_to_train} demonstrates it in  a training session with 64 workers, where approximately 3\% more steps is needed to reach the same loss, in 13\% less time.

\begin{figure}[h!]
  \centering
  \begin{subfigure}[]{0.49\textwidth}
      \includegraphics[width=\textwidth]{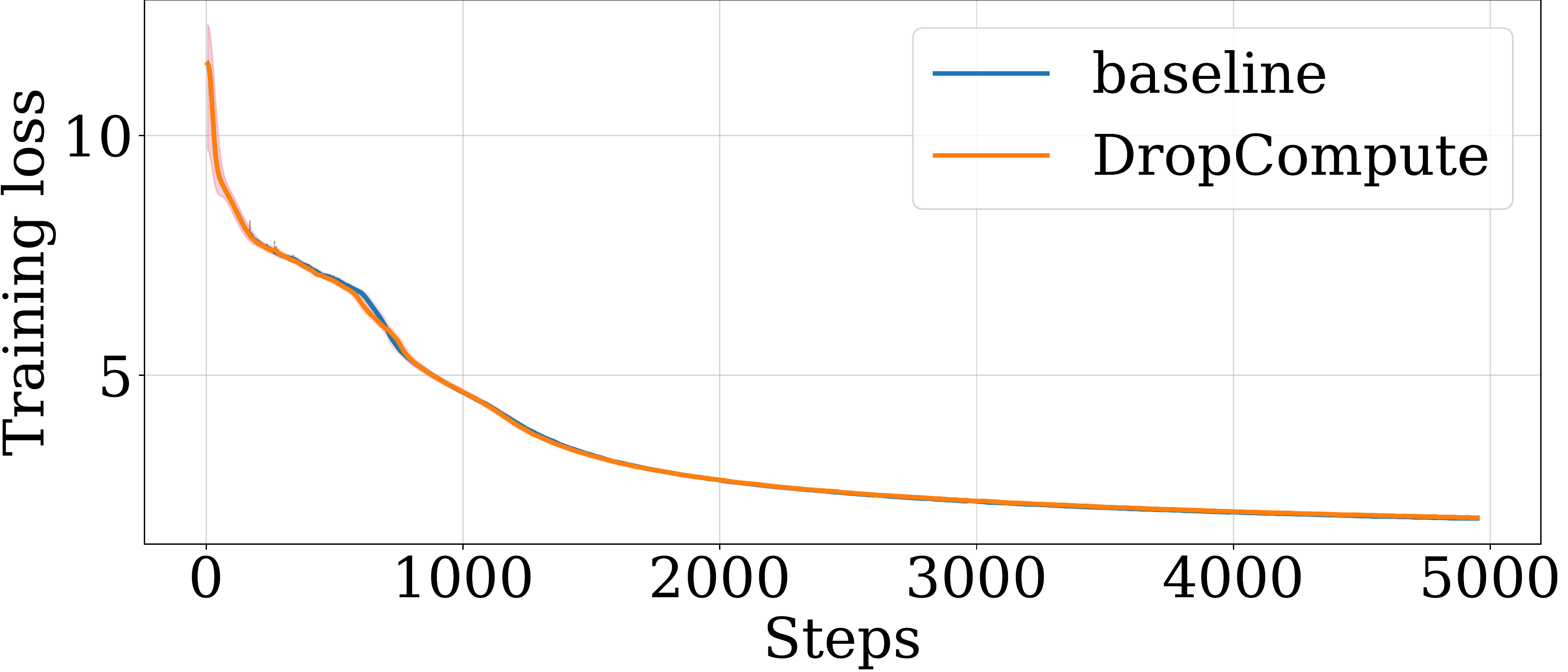}
  \end{subfigure}
  \hfill
  \begin{subfigure}[]{0.49\textwidth}
      \includegraphics[width=\textwidth]{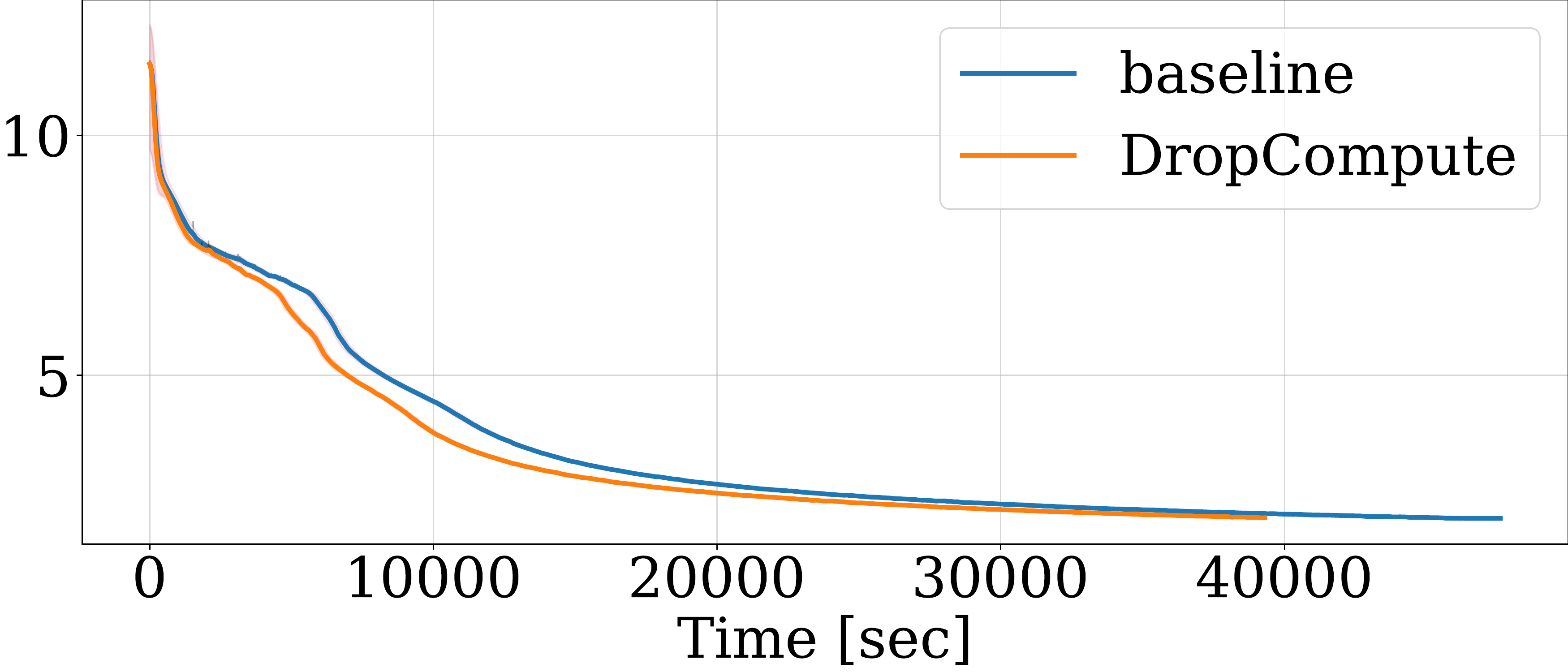}
  \end{subfigure}
  \caption{\textbf{\textit{DropCompute} improves training time for workers with compute variance.} Train loss curve of BERT1.5B pretraining, in a simulated delay environment. (left) Horizontal axis in training steps, and (right) horizontal axis in training time.}
  \label{fig:time_to_train}\vspace{-5mm}
\end{figure}

\section{Discussion}
\textbf{Summary.} Efficient scalable systems are a key component to enable the continued development of deep learning models. To this day, state-of-the-art models rely on synchronous distributed optimization. The challenge to maintain synchronous training as an efficient solution grows larger with the quickly growing model sizes and data. Therefore, improving the robustness and scalability of distributed synchronous training is an important endeavor. This paper tackles the challenge of maintaining synchronous training scalable in the face of compute variance. We propose \textit{DropCompute} to improve the robustness of synchronous training. 
Workers drop their remaining compute when they reach a compute threshold, determined by exchanging and analyzing the compute latency distribution. We find that for a small percentage of dropped data, a much larger percentage of time can be saved, depending on the compute latency distribution of the workers. In addition, we provide theoretical convergence guarantees and runtime predictions.
We further discuss the motivation behind \textit{DropCompute} and how it effectively solves the problem in appendix \ref{appendix:discussion}.

\textbf{Limitations.} While \textit{DropCompute} is simple and straightforward, it deals with system efficiency, and as such, the user-level implementation provided is not optimal. Mainly, the provided implementation is limited by using many gradient accumulations and integrating compute timeout in between them. However, we believe that this is not a major concern since having multiple gradient accumulations is a common practice in training LLM on a large scale and is used in state-of-the-art training configurations \citep{smith2022using, nvidiaMlperfGPT3}. 
In addition, \textit{DropCompute} addresses variance that originates from the compute stage of the training iteration and does not solve the potential issue of network variance during the all-reduce stage. 

\textbf{Future directions.} \textit{DropCompute} is described and analyzed in this paper as a method built on top of synchronous training. However, this method can be integrated with other possibly asynchronous methods such as periodic synchronization. In appendix \ref{app:local_sgd}, we implement \textit{DropCompute} on top of Local-SGD \citep{Lin2020Don't} and show that \textit{DropCompute} can also improve the robustness of Local-SGD to stragglers.
A different extension for \textit{DropCompute} is to apply it during the model backward calculation and save the partial gradients that were already calculated. This would generalize \textit{DropCompute} for workloads that do not utilize gradient accumulations. However, it will require further study as it differs from the stochastic batch-size setting where the entire data sample is either saved or dropped.
\vspace*{-3mm}
\section*{Acknowledgments}
We thank Itay Hubara for technical advising and valuable comments on the manuscript.
The research of DS was Funded by the European Union (ERC, A-B-C-Deep, 101039436). Views and opinions expressed are however those of the author only and do not necessarily reflect those of the European Union or the European Research Council Executive Agency (ERCEA). Neither the European Union nor the granting authority can be held responsible for them. DS also acknowledges the support of Schmidt Career Advancement Chair in AI.

\bibliography{bibliography}
\bibliographystyle{neurips_template/neurips}

\appendix
\newpage


\part*{Appendix}

\section{Further discussion} \label{appendix:discussion}
In this section, we will further elaborate on the motivation for using \textit{DropCompute} and how it mitigates existing problems in large-scale training.

\subsection{Motivation}
The primary objective of \textit{DropCompute} lies in the mitigation of compute latency variance among workers. This raises the question of the significance of compute variance in the context of our research.
Compute variance can arise from various sources, including but not limited to faulty hardware, clock throttling, host preemption/overhead, inefficient load balancing, connectivity issues (particularly in model parallel settings), and more.
Inefficient load balancing is especially in particular, when dealing with with dynamic sentence/image sizes \citep{tan2021efficientnetv2, dehghani2023patch, raffel2020exploring} because it requires special treatment for each model and data set, often at the expense of performing redundant work.
Addressing these issues typically requires intricate engineering efforts:
\begin{itemize}
    \item Regular testing and replacement of faulty hardware.
    \item Mitigation of host overhead through the implementation of latency-hiding techniques and script optimization.
    \item Management of inefficient load balancing on a per-workload basis, employing strategies such as sample padding and packing.
\end{itemize}
However, it is essential to recognize that each of these issues represents a potential single point of failure. The triggering of any one of them can result in a substantial performance degradation within large-scale systems.
Some of our early experiments exhibited naturally such sub-optimal behavior where we clearly see a large variance in compute latency between iterations and workers (as shown in figure \ref{fig:sub_optimal_system_latency}). As our goal is to improve robustness (i.e. performance for outlier cases), these cases are important. Moreover, after the compute variance was reduced by HW and SW optimizations, we were still left with some compute variance (as shown in figure \ref{fig:experiement_vs_theory}).

\begin{figure}[h!]
  \centering
  \begin{subfigure}[]{0.49\textwidth}
      \includegraphics[width=\textwidth]{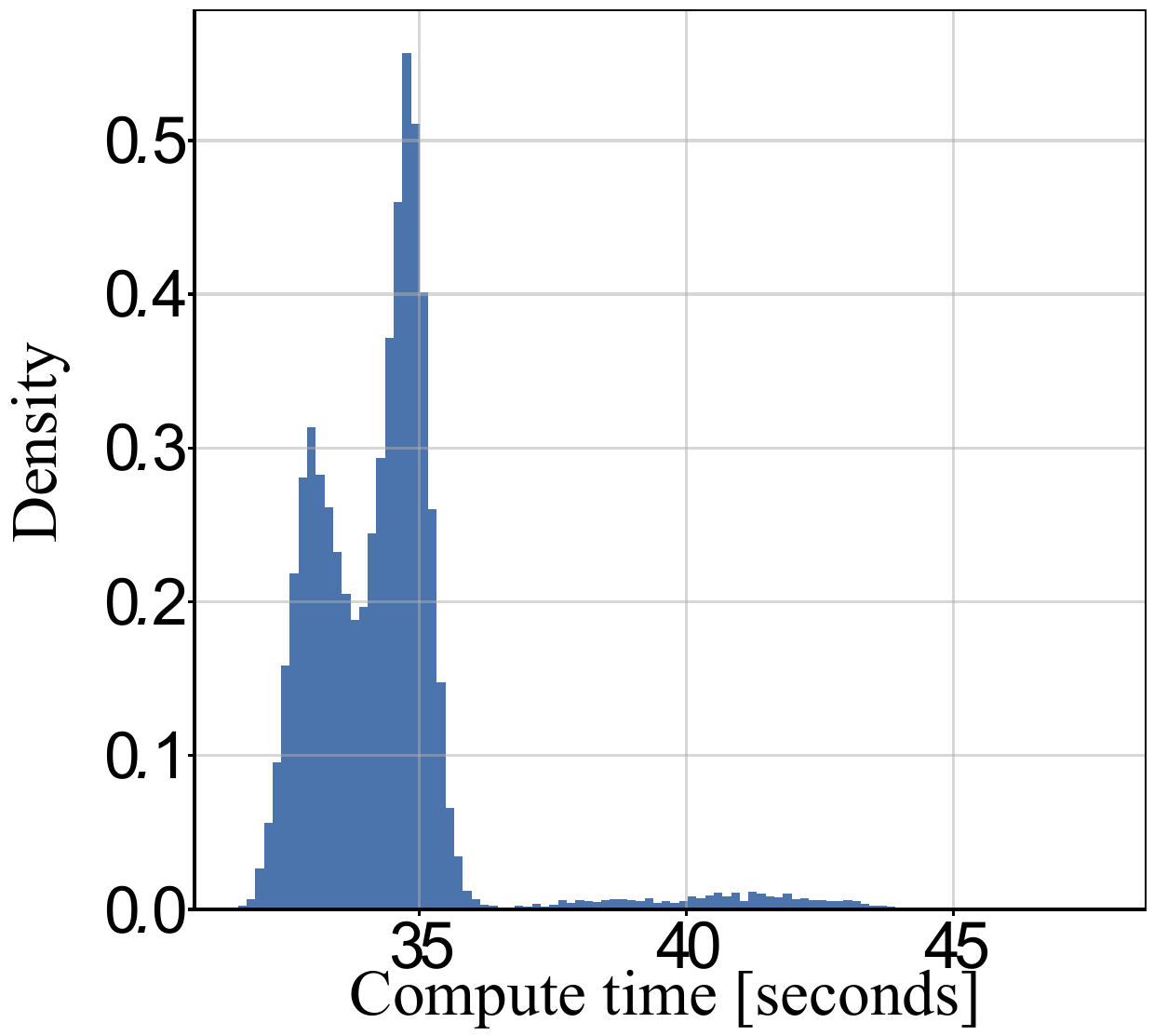}
  \end{subfigure}
  \hfill
  \begin{subfigure}[]{0.49\textwidth}
      \includegraphics[width=\textwidth]{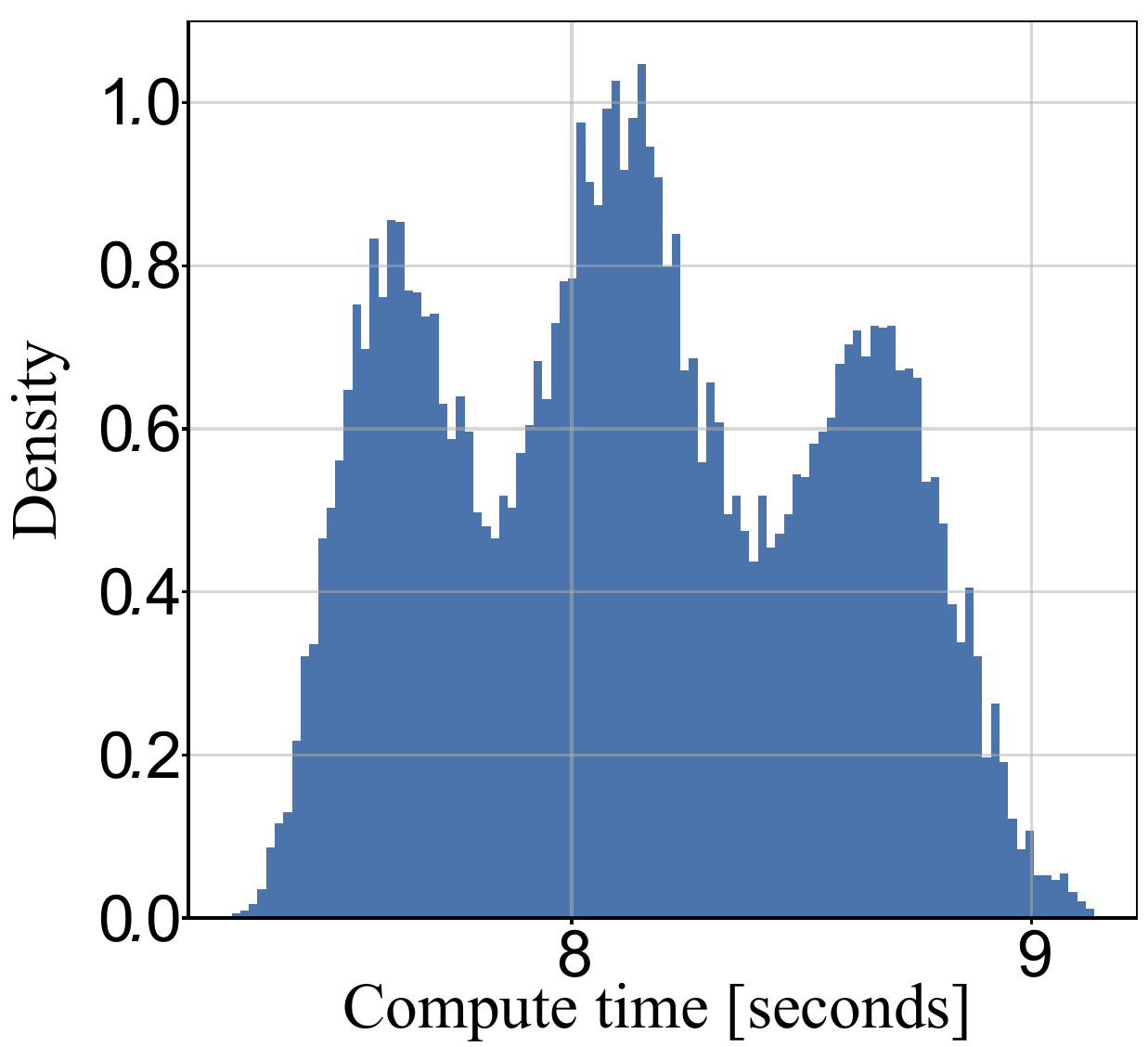}
  \end{subfigure}
  \caption{\textbf{Single iteration latency histogram in a sub-optimal system.} 
  These histogram were recorded in BERT1.5B training, before optimizing our system.
  (left) Training with 162 workers and 64 gradient accumulations. When applying \textit{DropCompute} in this setting, we achieved $\sim18\%$ performance boost.  (right) Training with 190 workers and 16 gradient accumulations.}
  \label{fig:sub_optimal_system_latency}
\end{figure}
These examples lead us to the conclusion that, in practice, workers do not finish computation at the same time, and this can have a significant impact on the training speed. Moreover, the effect of stragglers and compute variance on the training speed is expected to get worse as the distributed scale increases. This is due to the maximal worker distribution relation stated in equation \ref{eq:latency_pdf}. When modeling the additive latency as normally distributed, the average maximal worker latency, $\mathbb{E}[T]$, increases with the number of workers $N$ as $\mathbb{E}[T]=\Theta(\sqrt{\text{log} N})$ as shown in appendix \ref{appendix:speedup_analysis}. 

\subsection{Effectiveness}


Mitigating slowdowns resulting from compute variance can be achieved easily buy using \textit{DropCompute}. For instance, in a sub-optimal system with stragglers, as illustrated in Figure \ref{fig:sub_optimal_system_latency} (left), we were able to recover approximately 18\% of the runtime performance. The contribution of \textit{DropCompute} can be even more significant in different systems with varying noise distributions, as demonstrated in Appendix \ref{appendix:noise_analysis}.

Furthermore, even when assuming a normal distribution of noise, theoretical analysis indicates a substantial speedup as the number of workers $N$ increases:
$S_\mathrm{eff}(N)\underset{N\to\infty}{\longrightarrow}\infty$.
On the tested system, after reducing compute variance through hardware and software optimizations, we achieved a 5\% performance boost with 196 workers (see Figure \ref{fig:experiement_vs_theory}). Notably, this speedup continues to increase as the scale of the system grows. These examples underscore how \textit{DropCompute} enhances the robustness of large-scale training, effectively recovering lost performance attributed to stochastic performance outliers.

Lastly, it's important to emphasize that even a modest improvement in large-scale training performance can yield significant cost savings. For example, assuming a cost of \$10--32/hour/8xA100 (according to AWS pricing), saving 5\% of the training time for a 176B model, such as \citep{scao2022bloom}, would result in savings ranging from \$67,686 to \$216,598. For longer-trained models, like \citep{touvron2023llama}, the savings could reach \$107.50 to \$344,064."

\section{Experiments} \label{appendix:experiments}

\subsection{Runtime performance experiments} \label{appendix:runtime_performace}
In this section, we provide details for the experiments of section \ref{sec:runtime_perf}.

\textbf{Experiment details.}
As mentioned in the paper in section \ref{sec:runtime_perf}, we pre-train BERT1.5B following \citet{habana-deepspeed-bert}. The experiments in this section use up to 200 Gaudi accelerators with high bandwidth inter-connectivity. The training is done with a maximum input sequence length of 128 and 80 maximum predictions per sequence. The training regime consists of a local batch size of 196, 12 gradient accumulations, LANS optimizer \citep{zheng2020accelerated}, and a learning rate of 0.0015. Due to the large capacity of the model, we used ZeRO optimizer stage 1 to fit the model in memory \citep{rajbhandari2020zero}. 

\textbf{Simulated delay.} Many frameworks use padding to allow for constant input length which improves hardware efficiency \citep{kosec2021packing}. However, some learning tasks inherently involve dynamic shapes, such as translation \citep{ott-etal-2018-scaling} and multi-task sequences \citep{2020t5}. These use cases motivate us to explore scenarios of dynamic length via simulation. 
To demonstrate the value of \textit{DropCompute} in dealing with compute variance we added to each micro-batch compute time an additional random waiting time. The additive noise is based on a Log-normal distribution since it is typical for user post lengths in internet discussions \citep{sobkowicz2013lognormal}, which are used as training data in recent language models \citep{radford2019language}. To make this setting more realistic, we scale down and bound the noise so that each accumulation takes $\times1.5$ longer on average, and, in extreme cases, can take up to 6 times longer. This allows us to simulate stragglers and high compute variance while keeping a conservative limit on iteration time.
Thus, the additive noise takes the form of
$$\epsilon=\min\left(\frac{1}{\alpha}Z, \beta\right) ,\;\;\;\;\;\;\;\;\; Z\sim \mathrm{LogNormal}(4,1) \,.$$ 
This noise was added to each accumulation
$$t_n^{(m)}\gets t_n^{(m)} + \mu\cdot \epsilon \,,$$ where $\mu$ is the mean value for $t_n^{(m)}$, $\alpha=2\exp(4.5)$ and $\beta=5.5$ are the scaling and bounding constants, and the log-normal parameters (4,1) fit user post lengths, as seen in \citet{sobkowicz2013lognormal}. As illustrated in Figure \ref{fig:epsilon_dist}, the noise distribution leads to each micro-batch latency increased by up to $6\mu$, while the majority of accumulations have low latency. Further analysis on the effect of noise properties is discussed in \ref{appendix:noise_analysis}.

\begin{figure}[h!]
  \centering
  \begin{subfigure}[]{0.49\textwidth}
      \includegraphics[width=\textwidth]{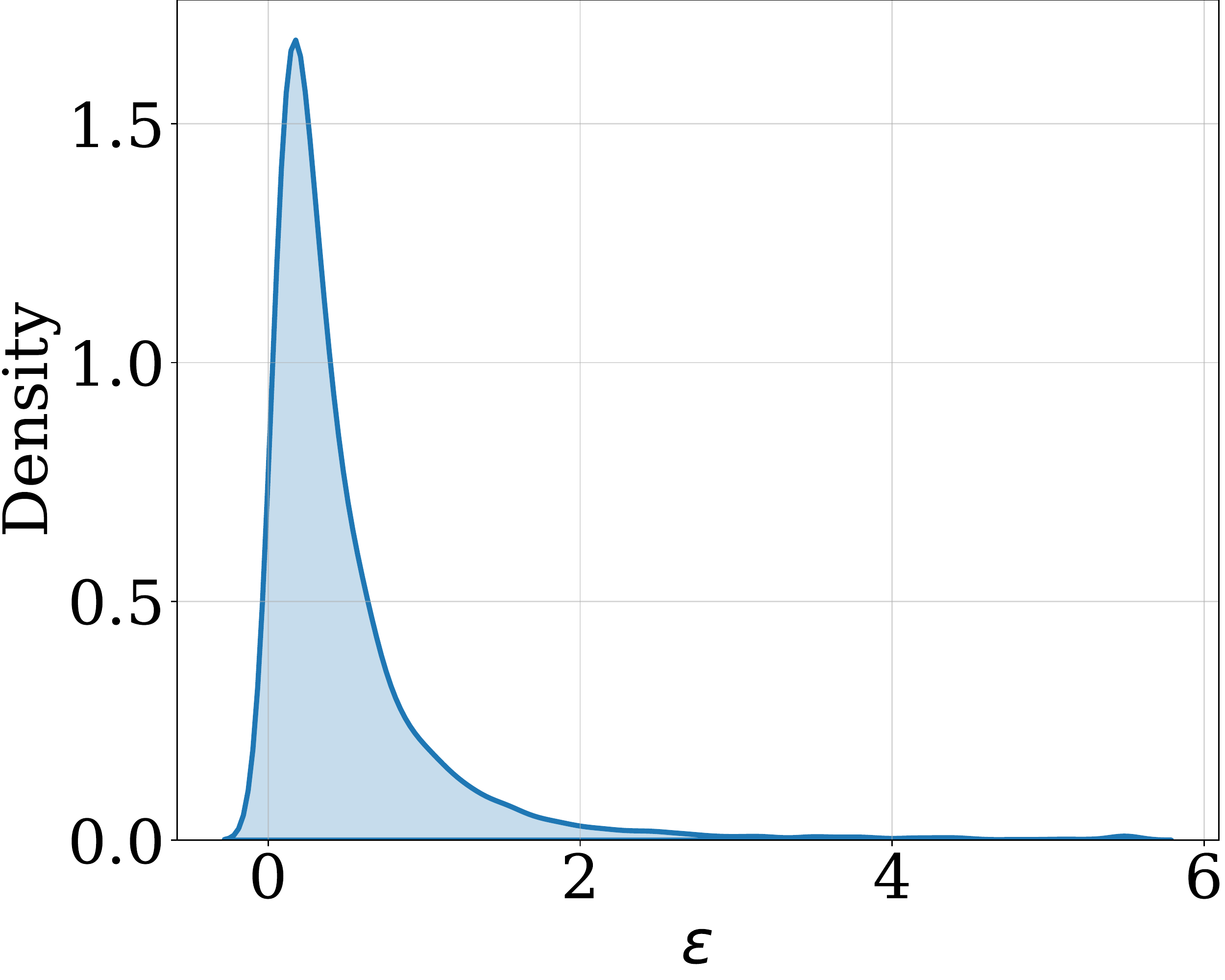}
  \end{subfigure}
  \hfill
  \begin{subfigure}[]{0.49\textwidth}
      \includegraphics[width=\textwidth]{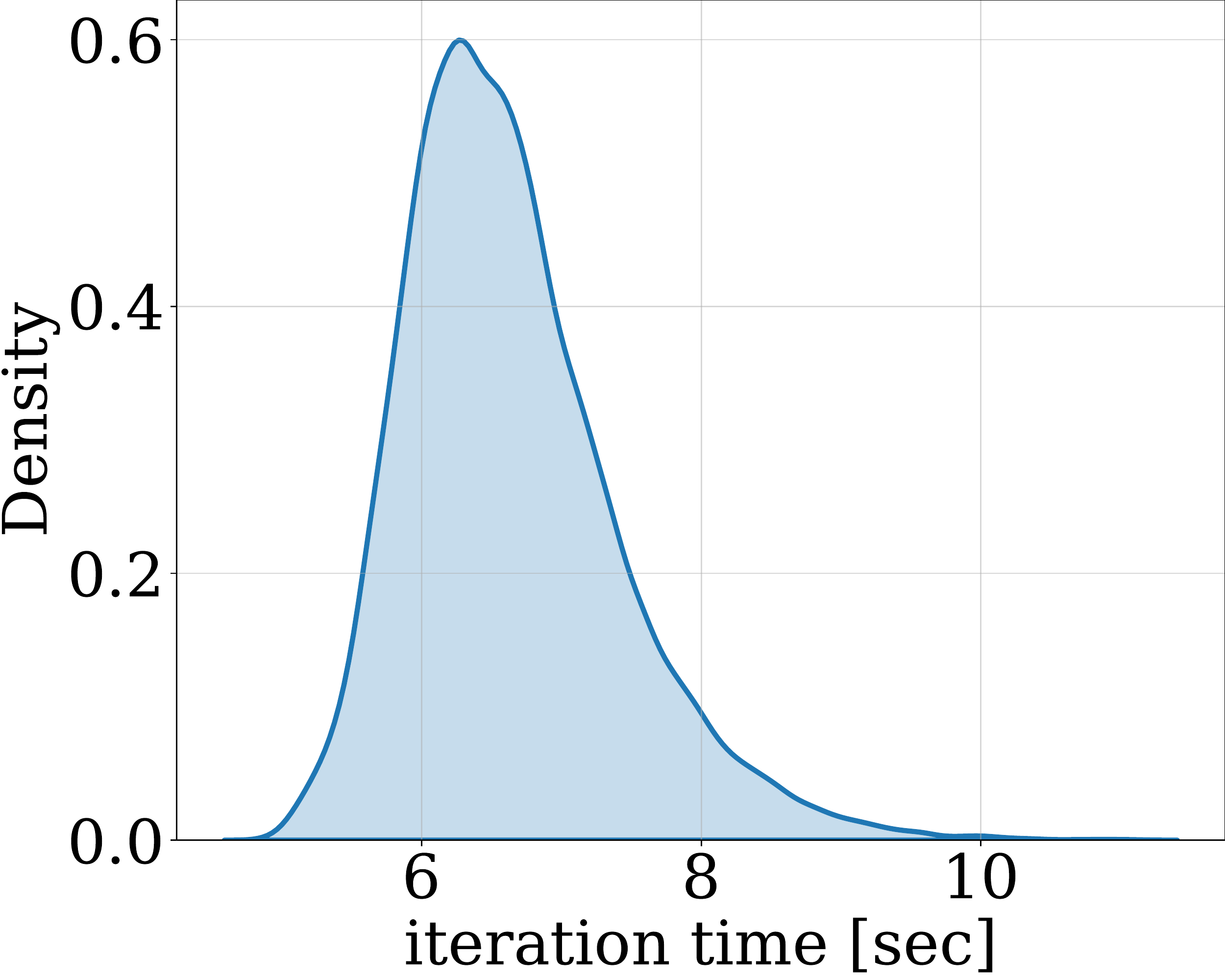}
  \end{subfigure}
  \caption{\textbf{The latency distribution in a simulated delay environment.} (left) The distribution of the additive noise $\epsilon$, added to each accumulation. (right) The distribution for iteration time $T_n$, with 12 accumulations, each with added noise, in BERT1.5B training.}
  \label{fig:epsilon_dist}
\end{figure}


\subsection{Generalization experiments} \label{appendix:generalization}
In this section, we provide details for the experiments of section \ref{sec:generalization_perf}.

\subsubsection{Large language models}

Here we provide more details about how the LLM experiment was executed as well as additional graphs related to the LLM experiment described in section \ref{sec:generalization_perf}.

\textbf{Experiment details.} As mentioned in the paper, in section\ref{sec:generalization_perf} we follow \citet{you2019large} optimization regime with LAMB optimizer. Specifically, for phase-1 where the sequence length is 128 tokens per sample, we use a batch size of 64K, the learning rate is 0.006, the warmup ratio is 0.2843, and the steps number is 7038. For phase-2 where the sequence length is 512, we use a batch size of 32K, the learning rate is 0.004, the warmup ratio is 0.128 and the steps number is 1563. The experiments were executed on 64 workers.

\textbf{Batch size distribution.} As explained in section \ref{sec:generalization_perf} we fully pretrain a BERT-Large model with \textit{DropCompute} several times, each with a different drop rate. Figure \ref{fig:batch_dist} shows the empirical batch distribution of each of the drop rates in phase-1.  

\begin{figure}[h!]
  \centering
  \begin{subfigure}[]{0.32\textwidth}
      \caption{}
      \includegraphics[width=\textwidth]{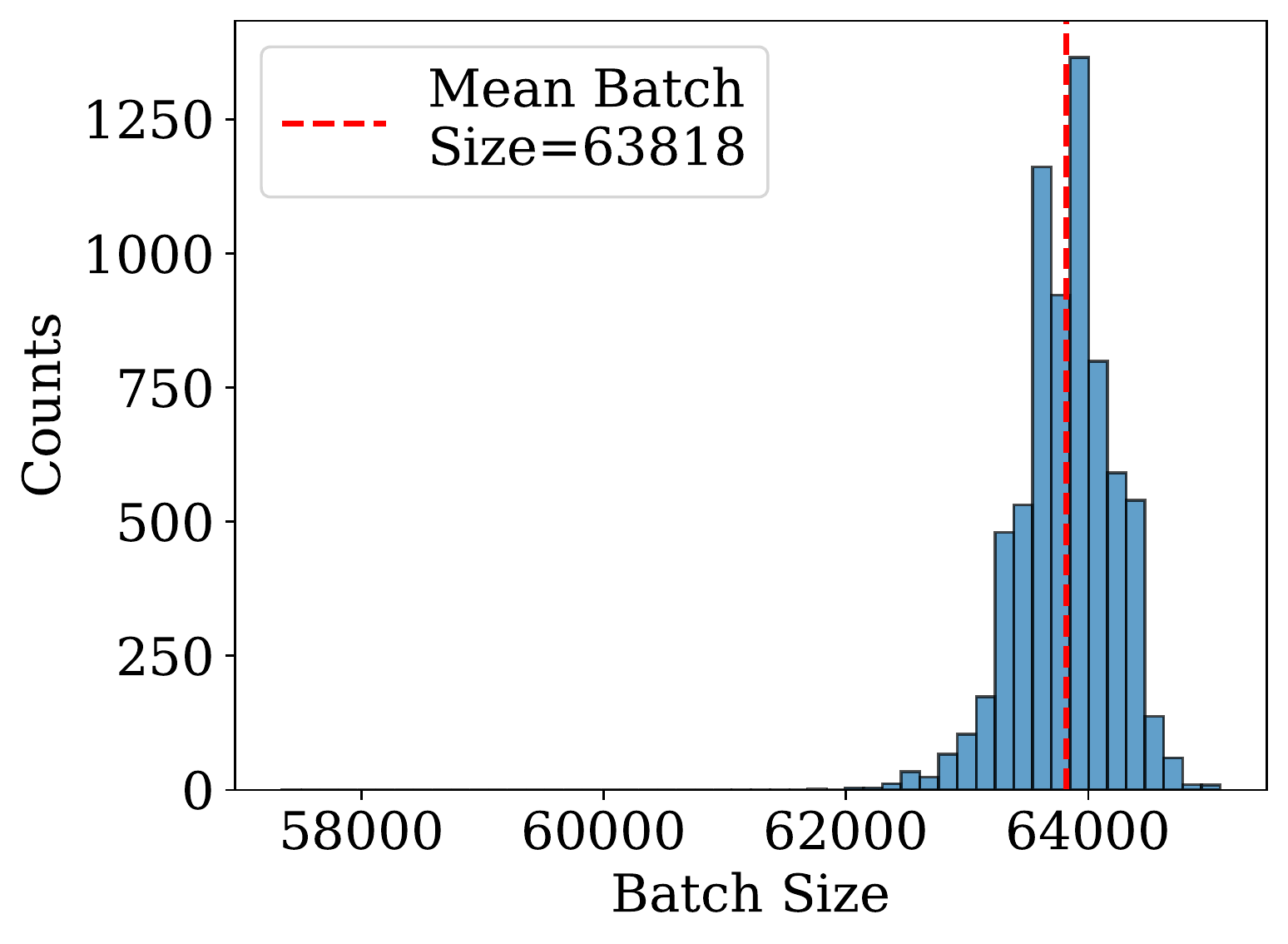}
  \end{subfigure}
  \hfill
  \begin{subfigure}[]{0.32\textwidth}
      \caption{}
      \includegraphics[width=\textwidth]{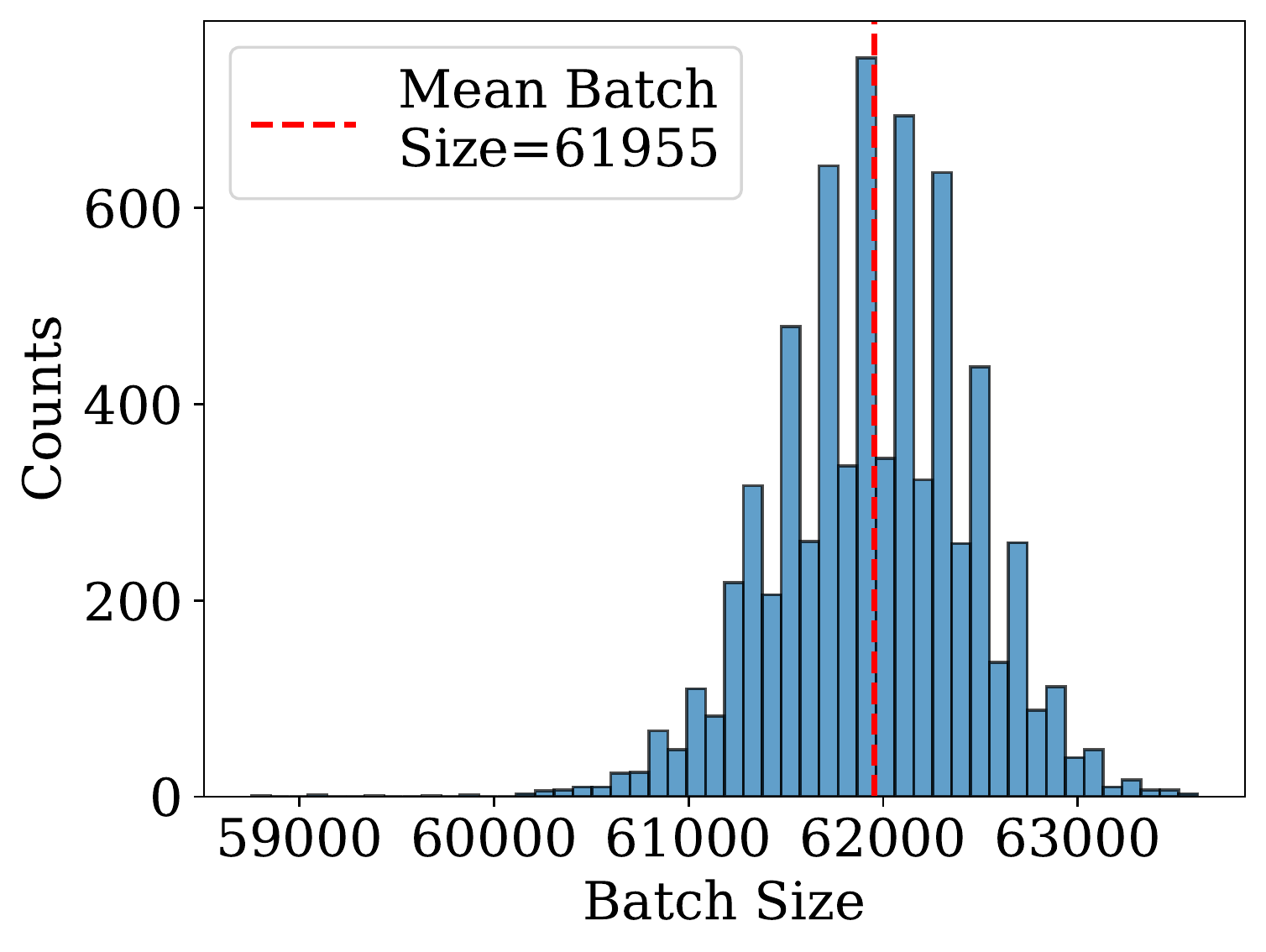}
  \end{subfigure}
  \hfill
  \begin{subfigure}[]{0.32\textwidth}
      \caption{}
      \includegraphics[width=\textwidth]{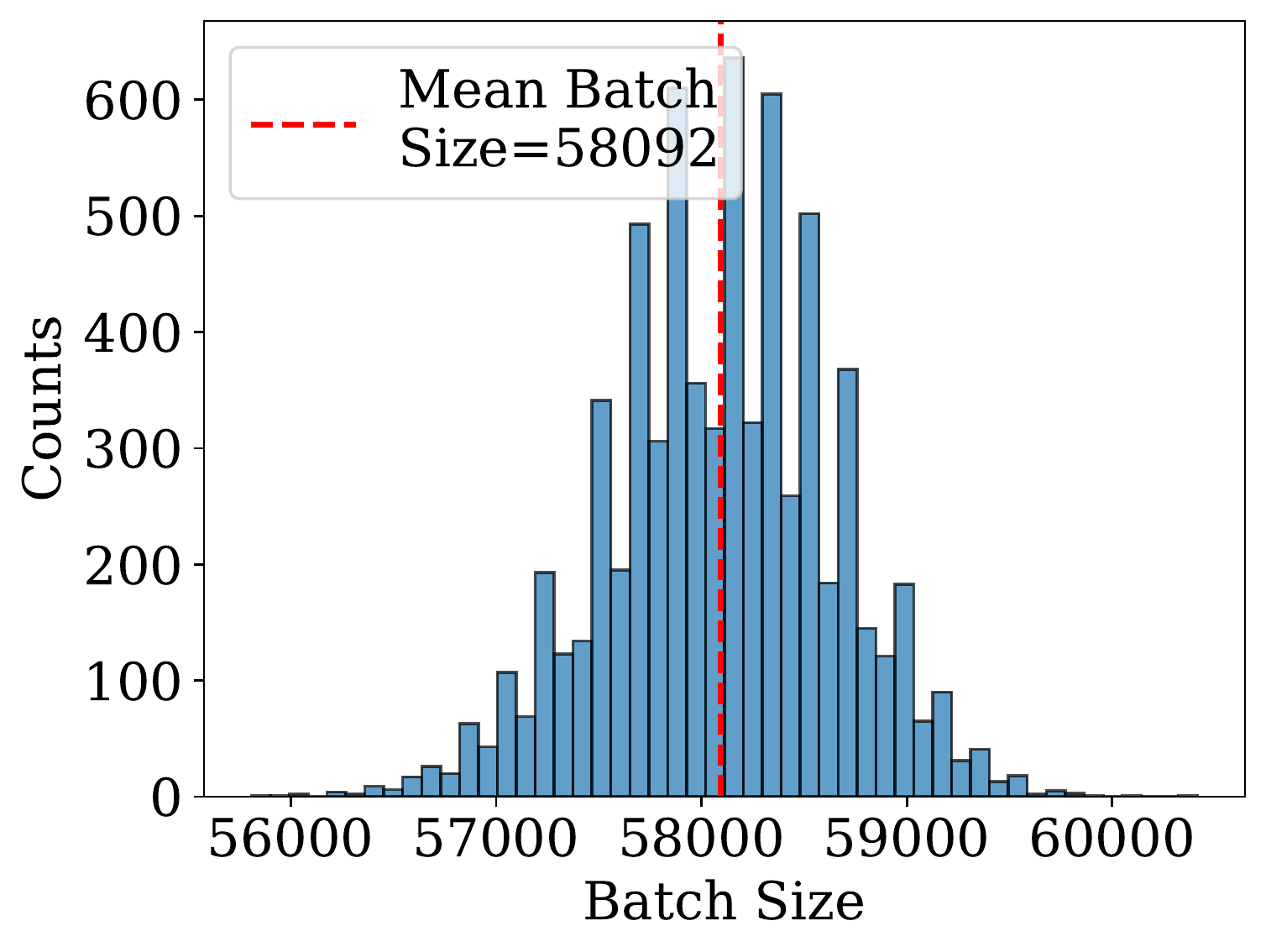}
  \end{subfigure}
  \caption{\textbf{Batch size distribution.} BERT-Large phase-1 pretraining batch size distribution when using \textit{DropCompute} and drop rate of (a) 2.5\% , (b) 5.5\%,
    and (c) 11.5\%}
  \label{fig:batch_dist}
\end{figure}

\textbf{Convergence loss.} In addition to the results depicted in Table \ref{table:accuracy}, we show the convergence of the training loss with the different drop rates in Figure \ref{fig:loss_1563}.

\begin{figure}[h!]
  \centering
  \begin{subfigure}[]{0.49\textwidth}
      \includegraphics[width=\textwidth]{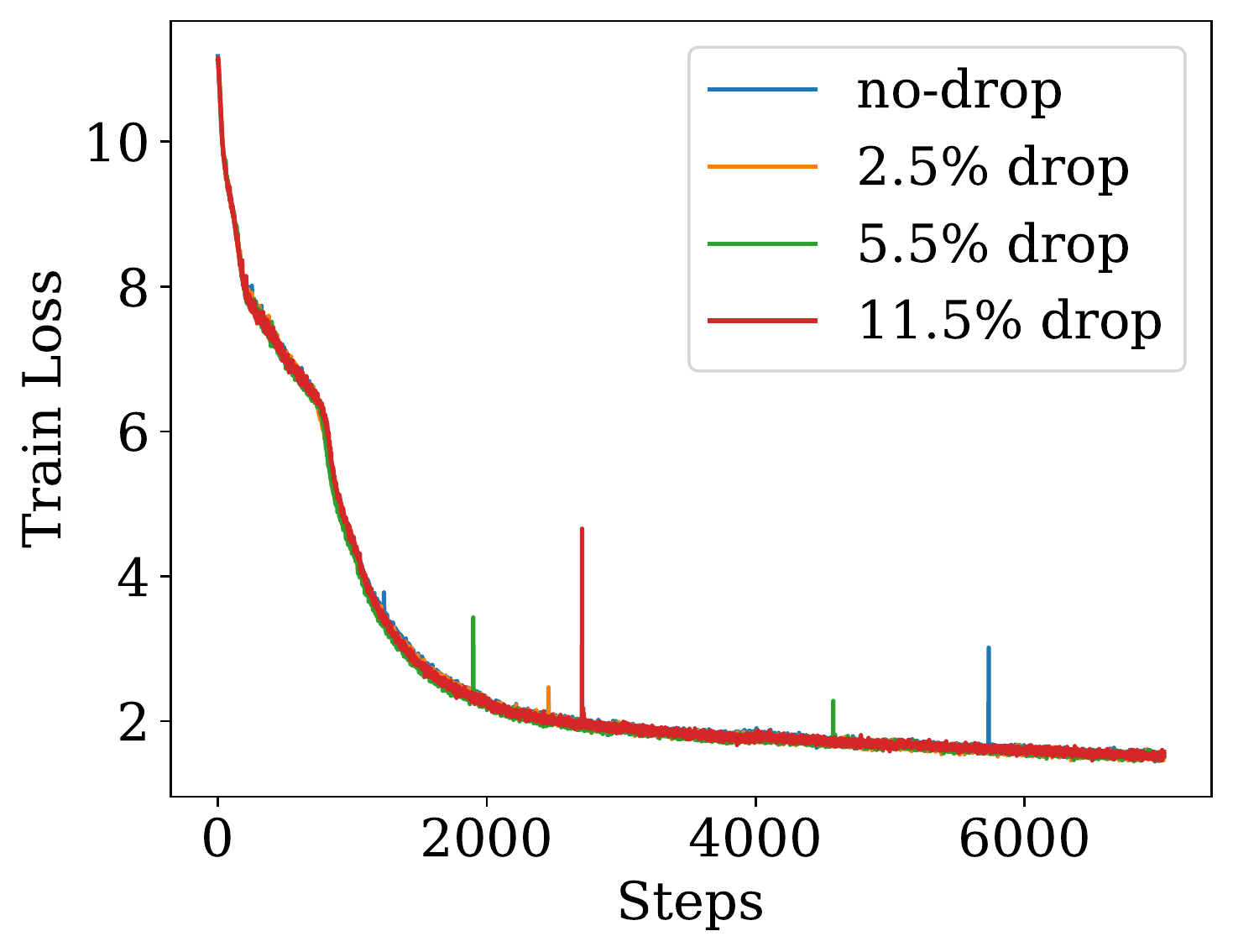}
  \end{subfigure}
  \hfill
  \begin{subfigure}[]{0.49\textwidth}
      \includegraphics[width=\textwidth]{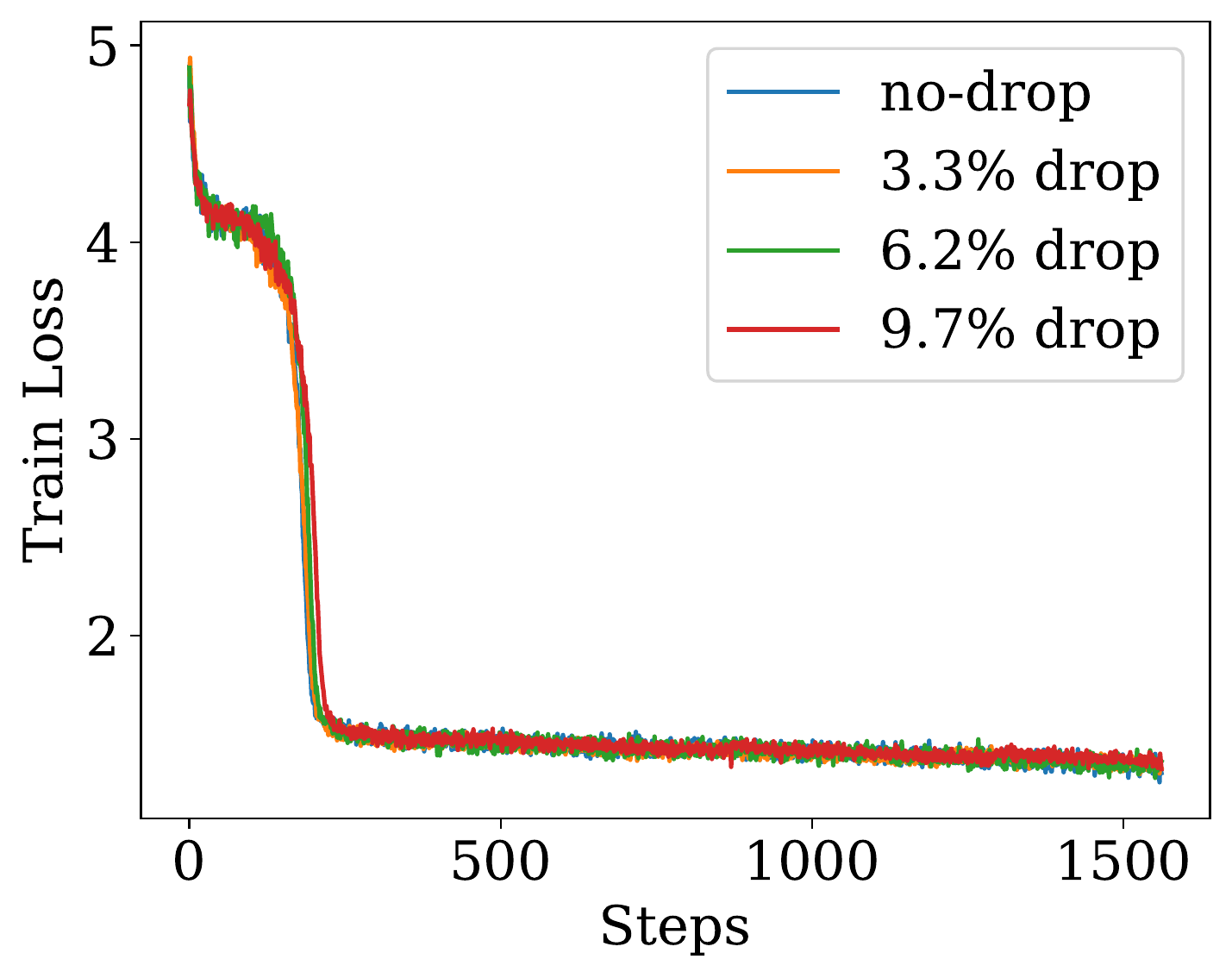}
  \end{subfigure}
  \caption{\textbf{Train loss convergence.} BERT-Large phase-1 (left) and phase-2 (right) pretraining train loss for different drop rates.}
  \label{fig:loss_1563}
\end{figure}

\subsubsection{Image classification} \label{appendix:image_classification}
This section provides the details of the image classification experiment described in section \ref{sec:generalization_perf} as well as Figure \ref{fig:resnet50_accuracy} which is referenced from the paper. 

\textbf{Experiment details.} To simulate \textit{DropCompute}, at each training step, the gradients of each worker are set to zero with a probability of $P_{\text {drop }}$. We repeat each training process 3 times with different initializations. To examine the generalization of \textit{DropCompute} over different optimizers, we implement our method on two popular training regimes of ResNet50. First, we follow the optimization regime described in \citet{goyal2017accurate} that uses SGD with 32 workers and a global batch size of 4096. Second, we follow \citet{mlperf} that uses LARS \citep{you2017large} with 8 workers and a global batch size of 2048.

\begin{figure}[h!]
  \centering
  \begin{subfigure}[]{0.49\textwidth}
      \includegraphics[width=\textwidth]{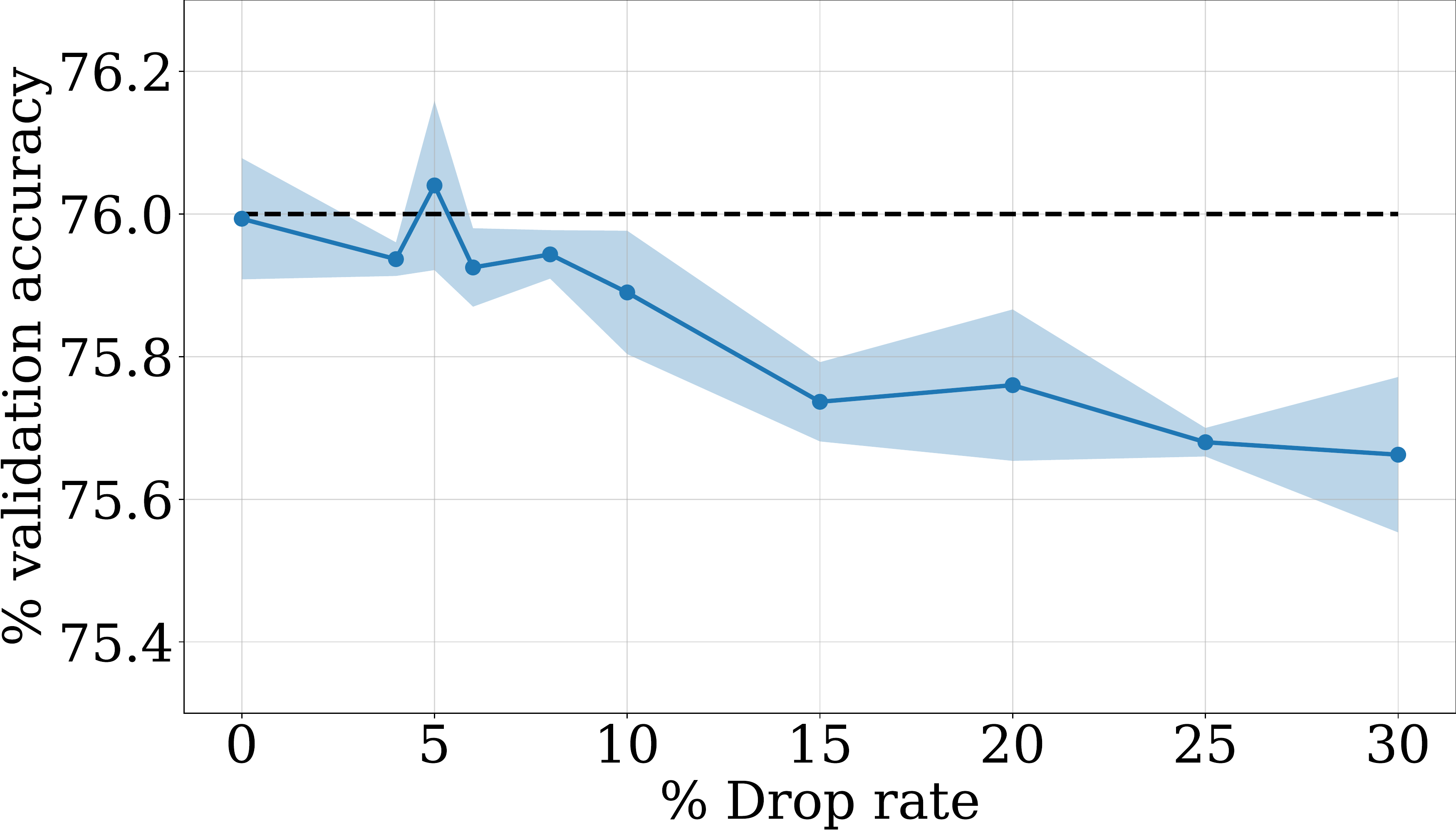}
  \end{subfigure}
  \begin{subfigure}[]{0.49\textwidth}
      \includegraphics[width=\textwidth]{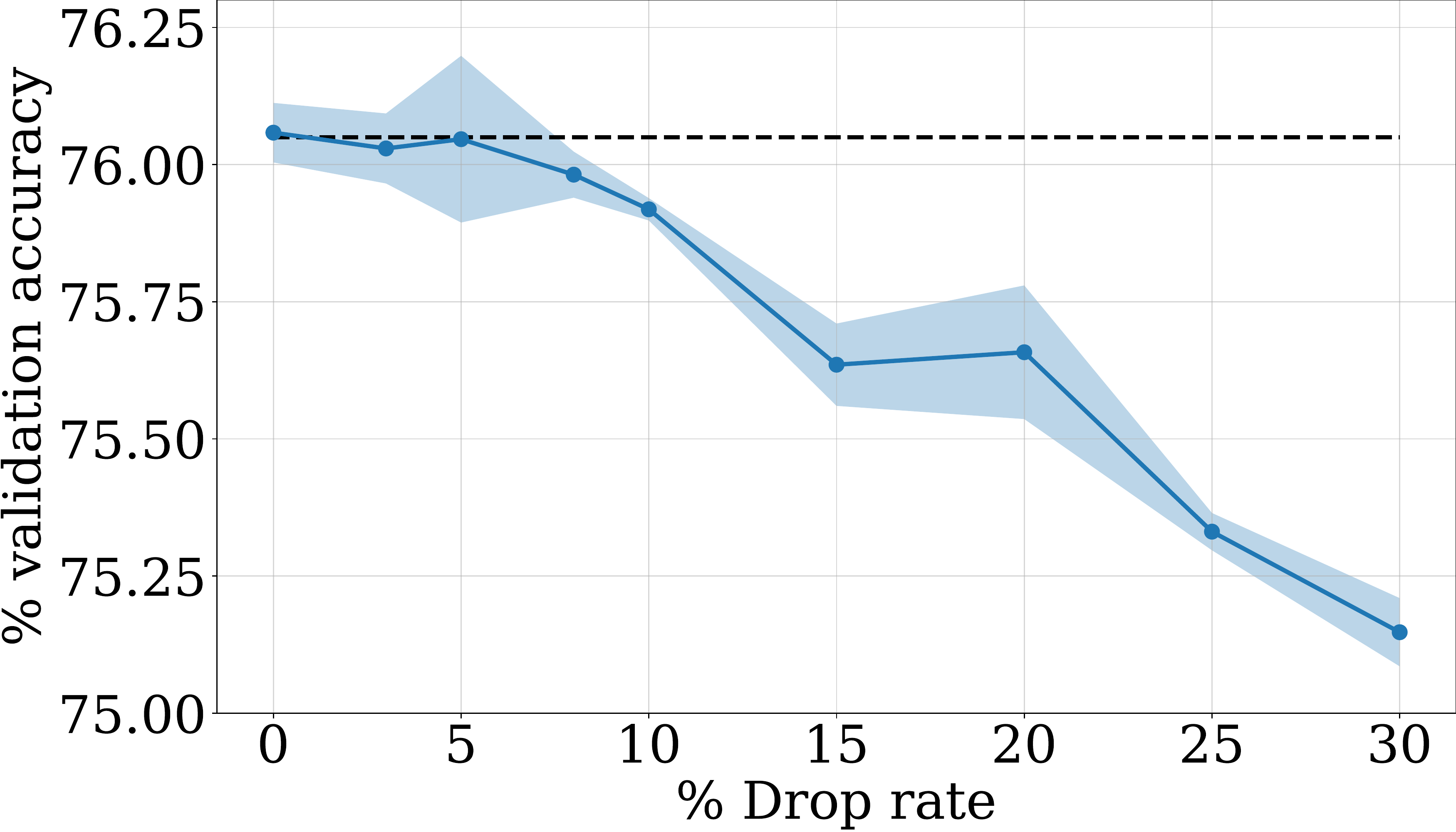}
  \end{subfigure}
  \caption{\textbf{Generalization over varying drop rates.} Top-1 validation accuracy of ResNet50 trained on ImageNet with varying simulated drop rates. The dashed line is the baseline accuracy without drops. The solid line is the average over 3 runs, and the blue area is the standard deviation. (left) Training regime with SGD \citep{goyal2017accurate}. (right) Training regime with LARS \citep{mlperf}. Up to a 10\% drop rate, there is a negligible accuracy deterioration.}
  \label{fig:resnet50_accuracy}
\end{figure}

\textbf{Learning rate correction.} Previous works showed that the learning rate should be scaled with respect to the batch size \citep{hoffer2017train, goyal2017accurate}. With a stochastic batch size and specifically \textit{DropCompute}, it is possible that a learning rate correction should be considered to maintain accuracy with the same number of steps. We examine such corrections when training with stochastic batch size. First, we decrease the learning rate by a constant factor, equal to the average drop rate. Specifically, for an average drop rate $P_{\text {drop}}\in[0,1]$ we multiply the learning rate by $(1-P_{\text {drop}})$. A different correction we consider is a stochastic correction, such that in each step we divide the gradients by the computed batch size, instead of the original batch size. This result in a different normalization in each step depending on the actual dropped samples. We note that for the latter, the workers have to synchronize the computed batch of each worker at each step. This is generally can be done during the \textit{AllReduce}, with negligible overhead. We repeat the training of ResNet50 on ImageNet as described in \citet{goyal2017accurate} to evaluate the generalization without correction and with the two suggested corrections. We use 128 workers, batch size 8192, and use ghost batch norm (GBN) \citep{hoffer2017train} to match batch normalization of 32 samples and restore the results in \citet{goyal2017accurate}. As can be seen in Figure \ref{fig:resnet50_lr_correction}, for low drop rates, there is no superior correction method, and no correction generally achieves the same generalization. Yet, it is possible that a learning rate correction could potentially improve generalization on a different task or with a different optimizer.

\begin{figure}[h!]
  \centering
  \begin{subfigure}[]{0.49\textwidth}
    \includegraphics[width=\textwidth]{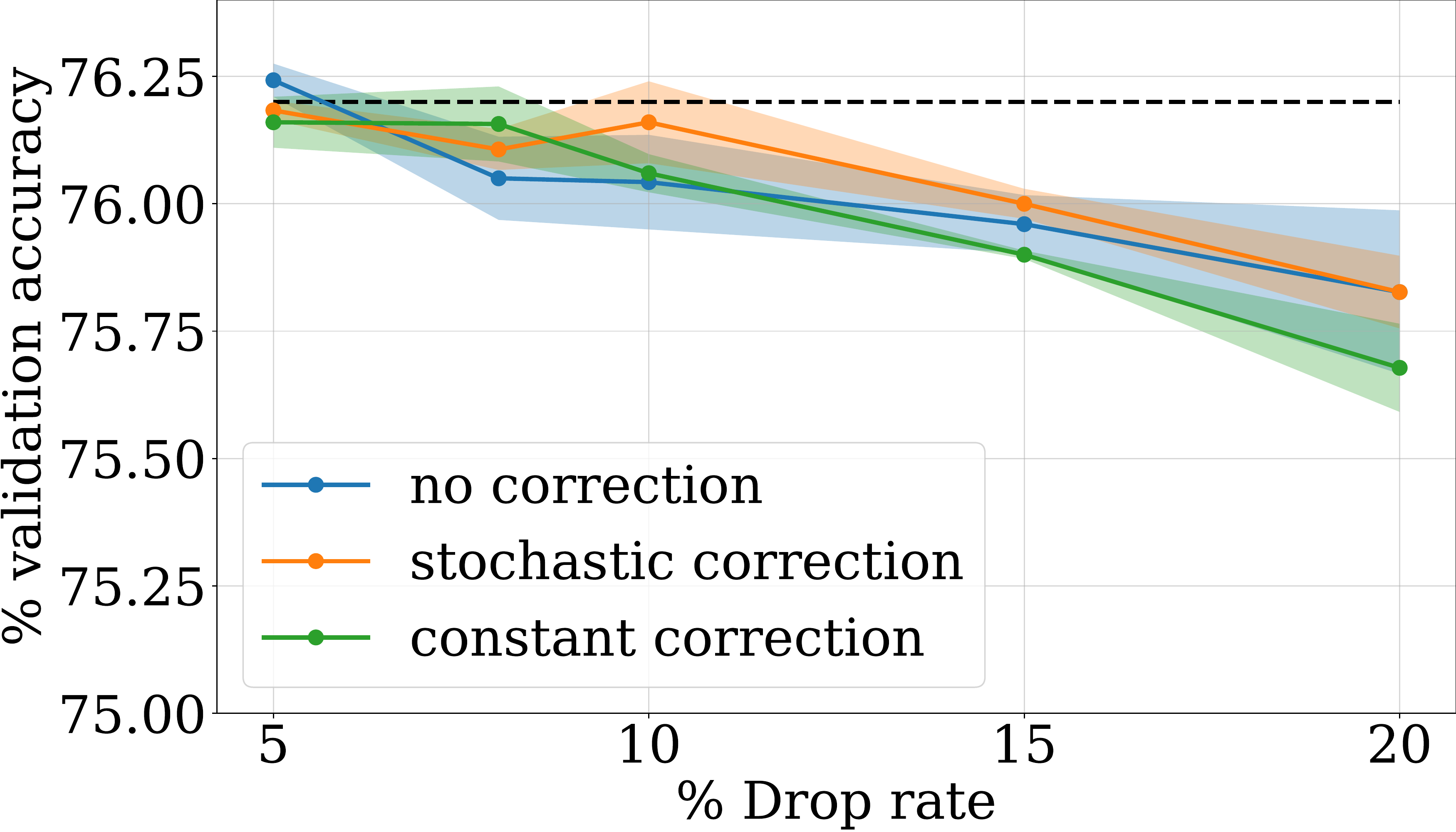}
  \end{subfigure}
  \caption{\textbf{Learning rate correction is not necessary for low drop rates.} Top-1 validation accuracy of ResNet50 trained on ImageNet with varying simulated drop rates, using the learning-rate correction methods described in \ref{appendix:image_classification}. The dashed line is the baseline accuracy without drops. Each solid line is the average over 3 runs, and the area around it is the standard deviation. Up to a 10\% drop rate, there is a negligible accuracy deterioration regardless of the correction applied.}
  \label{fig:resnet50_lr_correction}
\end{figure}

\subsection{Local-SGD} \label{app:local_sgd}
Periodic synchronization methods, such as Local-SGD, provide better scalability properties than synchronous methods. By exchanging parameters less frequently, communication overhead is mitigated. For compute variance and straggling workers in particular, the robustness of these methods greatly depends on the distribution of the compute time between workers. For example, when straggling workers appear randomly with homogeneous distribution, Local-SGD can mitigate the straggling workers slowdowns to some extent; this is because of the amortization effect in synchronizing periodically once every several steps. On the other hand, if straggling workers appear from a small set of workers such as a single server, a realistic scenario, Local-SGD acts more closely to synchronous training as the worst-case scenario is when a single worker always straggling behind. \textit{DropCompute} can be easily integrated with Local-SGD by leveraging periodic synchronization instead of gradient accumulations. We implement \textit{DropCompute} on top of Local-SGD by comparing the compute time with a threshold at each local step. We show that when stragglers are apparent, \textit{DropCompute} can improve the robustness of Local-SGD. We randomly slow down workers to simulate stragglers in two scenarios as described in Figure \ref{fig:local_sgd}. The experiment setting is 32 workers training on ResNet50 and ImageNet. At each local step, each worker is selected to be straggler with a $4\%$ chance. This way, there is at least 1 straggler for each local step on average. We measure relative speedup compared to synchronous training in terms of step time, both for Local-SGD and with \textit{DropCompute} on top of Local-SGD. As can be seen, with \textit{DropCompute} (set to $~6.2\%$ drop rate in this experiment) we improve the robustness of Local-SGD. \
\begin{figure}[h!]
    \centering
    \begin{subfigure}[]{0.45\textwidth}
        \includegraphics[width=\textwidth]{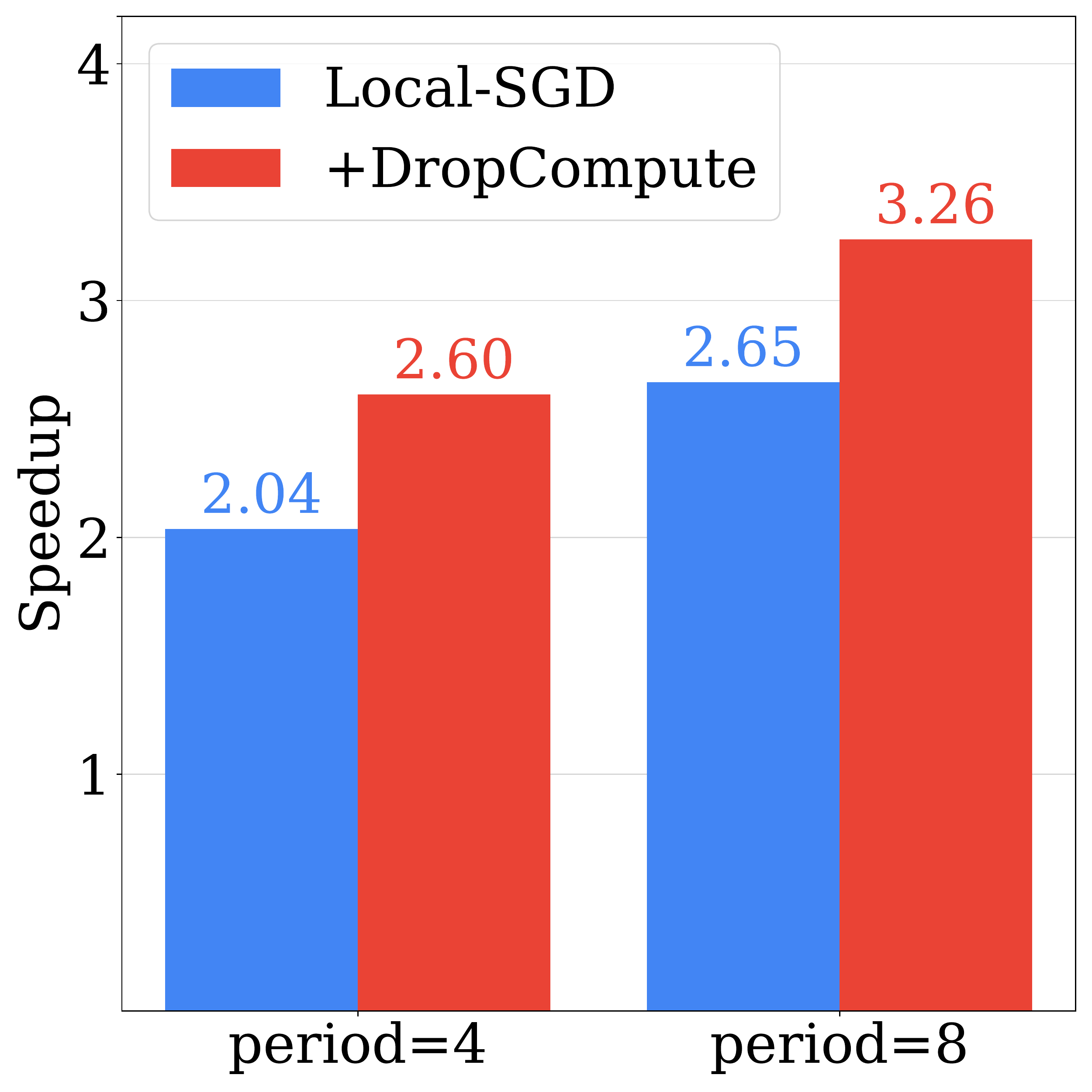}
    \end{subfigure}
    \hfill
    \begin{subfigure}[]{0.45\textwidth}
        \includegraphics[width=\textwidth]{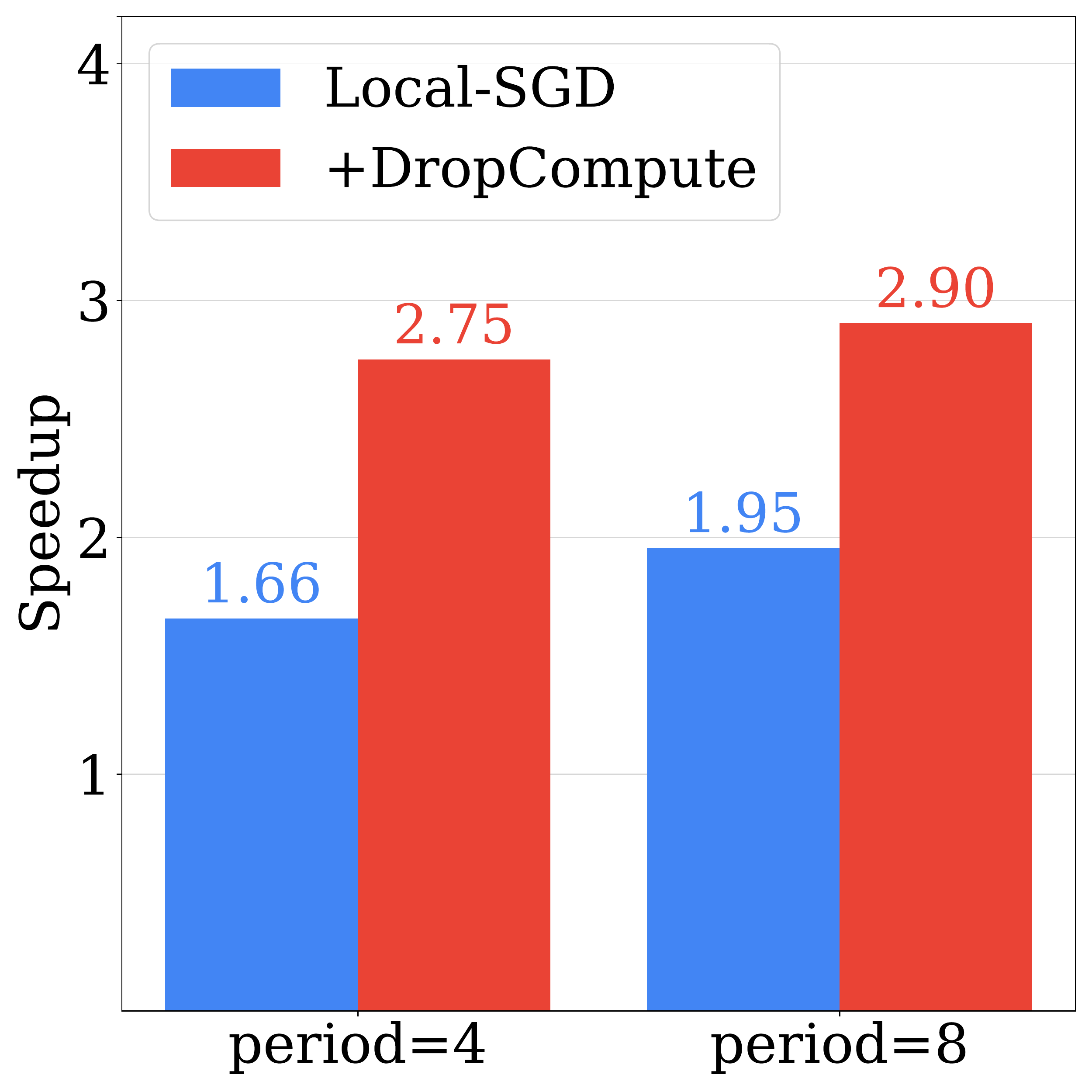}
    \end{subfigure}
    \caption{\textbf{\textit{DropCompute} can be integrated with Local-SGD to improve robustness.} Speedup of each method compared to synchronous training in a straggling workers environment. In each local step workers are randomly drawn to be stragglers. If selected as a straggler, the worker waits 1 second before continuing. The synchronization period of Local-SGD is denoted in the x-axis. (left) Uniform stragglers. (right) Single server stragglers.}
    \label{fig:local_sgd}
\end{figure}

\section{Analyzing the effective speedup using \textit{DropCompute}} 
In this section, we provide more details on the process of choosing the threshold $\tau^*$ that will maximize the effective speedup. We begin by giving technical details on the process used during training, given samples drawn from the empirical distribution of the compute latency. Next, we continue to explore and establish the analytic connection between the latency statistics and the effective speedup.

\subsection{Automatic selection of the drop threshold} \label{appendix:auto_threshold}

In Algorithm \ref{algorithm:optimal_threshold} below we present the algorithm used for automatic selection of the drop threshold $\tau$. In this algorithm, $t_{i,n}^{(m)}$ is the time it takes worker $n$ to process micro-batch $m$ at step $i$ and $T_i^c$ is the time spent on communication for at step $i$.
This data is measured by each worker, and then synchronized between all workers after $I$ iterations.
After the synchronization step each worker will have the same drop threshold $\tau$, which depends on both his own speed and the compute latency of the other workers.
Since $T_i^c$ is used in the normalization of the effective speedup, the chosen threshold takes into account both compute and communication time.


\begin{algorithm}[h!]
    \caption{Automatic choice for the optimal threshold $\tau$}
    \label{algorithm:optimal_threshold}
    \begin{algorithmic}
        \Statex \textbf{Input}:
        \State number of workers $N$
        \State number of iterations $I$
        \State number of micro-batches per step $M$
        \State micro-batch time samples $\{\{t\}_{i,n}^{(m)}\}^{i\in[1:I], n\in[1:N], m\in[1:M]}$
        \State communication time for each iteration $T_i^c$
        \State potential thresholds $[\tau_0, \tau_1, ...]$
        
        \For{$\tau \in [\tau_0,\tau_1,...]$} 
            \For{$i = 1$ to $I$}
                \State Initialize completed micro-batch count:  $\tilde{M}_i(\tau)=0$
                \State Initialize compute step latency for all workers:  $T_i=0$
                \For{$n = 1$ to $N$}
                    \State Initialize single worker step compute latency: $T_{i,n}=0$
                    \For{$m = 1$ to $M$}
                        \State $T_{i,n} \gets T_{i,n} + t_{i,n}^{(m)} $
                        \State $\tilde{M}_i(\tau) \gets \tilde{M}_i(\tau) + \frac{1}{N}\cdot\left\{
                            \begin{array}{lr}
                                1, & \text{if } T_{i,n}<\tau\\
                                0, & \text{otherwise }
                            \end{array}\right\}$
                    \EndFor
                    \State $T_i \gets \text{max}(T_i, T_{i,n})$   \Comment{compute time of the slowest worker at step (i)}
                \EndFor
                \State $S_i(\tau) = \frac{T_i + T_i^c}{\min(\tau,T_i)+T_i^c}\cdot \frac{\tilde{M}_i(\tau)}{M}$ \Comment{Effective speedup for step (i)}
            \EndFor    
            \State $S_{\mathrm{eff}}(\tau) = \frac{1}{I}\sum_{i=1}^{I}{S_i(\tau)}$  \Comment{Mean speedup for threshold ($\tau$)}
        \EndFor
        \State $\tau^* \gets  \text{argmax}_\tau\left(S_{\mathrm{eff}}(\tau) \right)$
    \end{algorithmic}
\end{algorithm}

\subsection{DropCompute speedup analytic analysis} \label{appendix:speedup_analysis}
In this section we further explore the relation between the compute latency distribution and the effective speedup. We will derive a closed-form representation of the effective speedup by making certain assumptions. First we assume that 
\begin{assumption} \label{assu: iid} $t_n^{(m)}$ is i.i.d with finite mean $\mu$ and finite variance $\sigma^2$.\end{assumption}
Note that in the assumption above, for simplicity of presentation we assumed all workers as identical and so $\mu$ and $\sigma^2$ are identical. However, it is possible to derive similar properties with nonidentical workers, each with their own $\mu_n$, $\sigma_n$.
Next, denote the time for completion of micro-batch $m$ as $T_n^{(m)}=\sum_{j=1}^m t_n^{(j)}$. Then, we assume
\begin{assumption} \label{assume:gaussian} $T_n^{(m)}\text{ is Gaussian }\sim\mathcal{N}(m\mu,m\sigma^2)$ for $m>\sqrt{M}\,.$\end{assumption}
This assumption holds in the limit $M\to\infty$ given Assumption \ref{assu: iid}, from the Central Limit Theorem (CLT). Lastly, denoting $\tau$ as the threshold used, we assume
\begin{assumption} \label{assume:tau_limit} $\tau>\frac{M\mu}{2}\,.$\end{assumption} This bound can be considered as the minimum threshold allowed, in order for \textit{DropCompute} to be effective. Taking a lower threshold will result in unacceptable high drop rate. 

Using these assumptions we first derive analytical expression for the iteration time $\mathbb{E}[T]$ and the mean completed number of gradient accumulations $\mathbb{E}[\tilde{M}]$ with \textit{DropCompute}. Then, we combine these expressions to obtain an expression for the mean effective speed $\mathbb{E}[S_{\mathrm{eff}}]$. 

Figure \ref{fig:estimate_s_eff_b} shows an example of how close is the derived expression of $\mathbb{E}[S_{\mathrm{eff}}]$ to the value calculated by using the algorithm  described in section \ref{appendix:auto_threshold}. The `analytical' curve is slightly off due to the inaccuracy of the Gaussian Assumption \ref{assume:gaussian} in calculating $\mathbb{E}[T]$, as we discuss below. We therefore added another curve `analytical given $\mathbb{E}(T)$', which uses same the derived expression for $\mathbb{E}[S_{\mathrm{eff}}]$ but replacing value of $\mathbb{E}[T]$, with the empiric mean: $\overline{T}=\frac{1}{I}\sum_{i=1}^I \max_n\left\{T_{n,i}^{(M)} \right\}$ where: $i\in[1:I]$ are the iterations measured.

\textbf{Iteration time.} 
as written in section \ref{sec:analysis_iteration_time}, the iteration time for all workers is 
$$T=\max_n \left(T_n^{(M)}\right) = T^c + \max_n \left( \sum_{m=1}^M t_n^{(m)} \right)\,.$$
When $T_n^{(M)}\sim\mathcal{N}(M\mu, M\sigma^2)\,.$
the expected value of $T$ can be approximated as \citep{bailey2014pseudomathematics}: 
\begin{equation} \label{eq:approximate_t}
    \mathbb{E}[T] \approx \sqrt{M\sigma^2}\cdot\left( (1-\gamma)\cdot \Phi^{-1}\left(1-\frac{1}{N} \right) + \gamma\cdot \Phi^{-1}\left(1-\frac{1}{e\cdot N} \right)\right) + M\mu + T^c\,.
\end{equation}
We can derive the asymptotic behavior of Eq. \ref{eq:approximate_t} by:
$$ \Phi(x) = \frac{1}{2}+\frac{1}{2}\text{erf}\frac{x}{\sqrt{2}} \sim 1-\frac{1}{x\sqrt{2\pi}}e^{-x^2/2}$$
$$\Downarrow$$
$$\text{log}(1-\Phi(x))\sim -\frac{x^2}{2}-\text{log}(x\sqrt{2\pi})\sim -\frac{x^2}{2},\;\;\; x \rightarrow\infty$$
$$\Downarrow$$
$$\Phi^{-1}(1-y)\sim\sqrt{-2\text{ log }y},\;\;\; y \rightarrow0^+$$
When plugging Equation \ref{eq:approximate_t} into this asymptotic approximation we are left with $\mathbb{E}[T]=\Theta(\sqrt{\text{log} N})$.

It is worth noting that the distribution of $T$ is mostly affected by the tail of distribution of $T_n=T_n^{(M)}$ (as a consequence of Equation \ref{eq:latency_pdf}). Therefore, in practice the Gaussian Assumption \ref{assume:gaussian}, and therefore Equation \ref{eq:approximate_t}, can be inaccurate, especially for small $M$ and large $N$. It is therefore more accurate to use the real value of $\mathbb{E}[T]$, measured without \textit{DropCompute}, to estimate the potential effective speedup. An example for the inaccuracy of this approximation can be seen in Figure \ref{fig:estimate_s_eff_b}, when $T_n^{(M)}$ does not follow a normal distribution.

\textbf{Completed micro-batches.}
The average number of micro-batch computed by a single worker $n$ when using \textit{DropCompute} with threshold $\tau$, is:
$$\tilde{M}(\tau)=\frac{1}{N}\sum_{n=1}^N\sum_{m=1}^M \left\{
                            \begin{array}{lr}
                                1, & \text{if } T_n^{(m)}<\tau\\
                                0, & \text{otherwise }
                            \end{array}\right\} \,.$$
Its expected value can be written as:
$$ \mathbb{E}[\tilde{M}(\tau)]=\sum_{m=1}^M P\left(T_n^{(m)}<\tau\right) \,.$$

In order to use assumption \ref{assume:gaussian}, we can split $\tilde{M}$ into 2 sums and derive a closed-form formula for the expected value:
\begin{equation} \label{eq:tilde_m_split}
    \tilde{M}(\tau)=\sum_{m=1}^{ \lfloor{{\sqrt{M}}}\rfloor} P\left(T_n^{(m)}<\tau\right) + \sum_{m=\lceil{{\sqrt{M}}}\rceil}^M P\left(T_n^{(m)}<\tau\right)\,.
\end{equation}
For the right term we can use assumption \ref{assume:gaussian} so that $P(T_n^{(m)}<\tau)=\Phi\left(\frac{\tau - m\cdot \mu}{\sqrt{m\cdot \sigma^2}} \right)$.
For the left term, when $m<\sqrt{M}$ we use Markov inequality and assumption \ref{assume:tau_limit} to show that
$$ 0\leq P(T_n^{(m)}>\tau)\leq \frac{\mathbb{E}[T_n^{(m)}]}{\tau}=\frac{m\mu}{\tau}\leq \frac{2m}{M}\,.$$
In other words, when using \textit{DropCompute} with low drop rates, $P(T_n^{(m)}<\tau)$ is very high for $m<\sqrt{M}$. 
The Gaussian approximation for $m<\sqrt{M}$ diminishes exponentially when increasing $M$, as seen by applying Chernoff bound:
$$ 0\leq P(Z^{(m)}>\tau)\leq e^{-\frac{(\tau-m\mu)^2}{2m\sigma^2}}\leq e^{-\frac{(M/2-m)^2\mu^2}{2m\sigma^2}}$$
where $Z^{(m)}\sim\mathcal{N}(m\mu, m\sigma^2)$.
Therefore, the error resulting in replacing $T_n^{(m)}$ with a Gaussian approximation, is bounded:
$$ P(T_n^{(m)}<\tau) - P(Z^{(m)}<\tau) = P(Z^{(m)}>\tau) -P(T_n^{(m)}>\tau) $$
$$\Downarrow$$
$$ -\frac{2m}{M} \leq -P(T_n^{(m)}>\tau) \leq  P(T_n^{(m)}<\tau) - P(Z^{(m)}<\tau) \leq P(Z^{(m)}>\tau)\leq e^{-\frac{(M/2-m)^2\mu^2}{2m\sigma^2}}$$
$$\Downarrow$$
$$ -\frac{2m}{M} + P(Z^{(m)}<\tau) \leq  P(T_n^{(m)}<\tau) \leq P(Z^{(m)}<\tau) + e^{-\frac{(M/2-m)^2\mu^2}{2m\sigma^2}}$$
Plugging these inequalities into the left term in equation \ref{eq:tilde_m_split} gives us:

\begin{equation} \label{eq:tilde_m_split_ineq}
\begin{split}
\sum_{m=1}^{ \lfloor{{\sqrt{M}}}\rfloor} P\left(T_n^{(m)}<\tau\right) \leq& \sum_{m=1}^{ \lfloor{{\sqrt{M}}}\rfloor} \left(P\left(Z^{(m)}<\tau\right)+e^{-\frac{(M/2-m)^2\mu^2}{2m\sigma^2}} \right) \\ \\
\sum_{m=1}^{ \lfloor{{\sqrt{M}}}\rfloor} P\left(T_n^{(m)}<\tau\right) \geq& \sum_{m=1}^{ \lfloor{{\sqrt{M}}}\rfloor}\left( P\left(Z^{(m)}<\tau\right) -\frac{2m}{M}\right) = O(1) +\sum_{m=1}^{ \lfloor{{\sqrt{M}}}\rfloor} P\left( Z^{(m)}<\tau\right)
\end{split}
\end{equation}

Combining equations \ref{eq:tilde_m_split},\ref{eq:tilde_m_split_ineq} we can write the expected value as
\begin{equation} \label{eq:approximate_m}
\mathbb{E}[\tilde{M}(\tau)]=\sum_{m=1}^M P\left(T_n^{(m)}<\tau\right) = O(1)+ \sum_{m=1}^M \Phi\left(\frac{\tau - m\cdot \mu}{\sqrt{m\cdot \sigma^2}} \right)\,.
\end{equation}

\textbf{Effective speedup.}
As seen in section \ref{section:automatic_threshold}, we define the effective speedup as
$$ S_{\mathrm{eff}}(\tau)= \frac{\tilde{M}(\tau)(T+T^c)}{M\cdot(\min\left\{\tau,T\right\}+T^c)}\,. $$
We are interested in calculating the expected value for the effective speedup, and in order to use the formulations in equations \ref{eq:approximate_t},\ref{eq:approximate_m} we first need to show that $\mathbb{E}[\tilde{M}(\tau)\cdot T] \approx \mathbb{E}[\tilde{M}(\tau)]\cdot \mathbb{E}[T]$. We examine
$$ \mathbb{E}[\tilde{M}T]=\mathbb{E}[T(\mathbb{E}[\tilde{M}] + \tilde{M}- \mathbb{E}[\tilde{M}])]=\mathbb{E}[\tilde{M}]\mathbb{E}[T]+\mathbb{E}[T(\tilde{M}-\mathbb{E}[\tilde{M}])]$$
Applying Cauchy–Schwarz inequality we get
$$|\mathbb{E}[T(\tilde{M}-\mathbb{E}[\tilde{M}])]| \leq \sqrt{\mathbb{E}[T^2]\mathbb{E}[(\tilde{M}-\mathbb{E}[\tilde{M}])^2]}= \sqrt{\mathbb{E}[T^2]\frac{\sigma_{\tilde{M}_n}^2}{N}}=O(N^{-\frac{1}{2}}) \,,$$
where $\sigma_{\tilde{M}_n}^2$ denotes the variance of 
$\tilde{M}_n(\tau)=\sum_{m=1}^M \left\{ \begin{array}{lr} 1, & \text{if } T_n^{(m)}<\tau\\ 0, & \text{otherwise } \end{array}\right\}$.
Hence:
$$  \mathbb{E}[\tilde{M}T] = \mathbb{E}[\tilde{M}]\mathbb{E}[T] + O(N^{-\frac{1}{2}})$$
We can now write the expected value for the effective speedup as:
\begin{equation} \label{eq:eff_speedup}
    \mathbb{E}[S_\mathrm{eff}] =
    \frac{\sum_{m=1}^M \Phi\left(\frac{\tau - m\cdot \mu}{\sqrt{m\sigma^2}} \right) }{M} \cdot
    \frac{\mathbb{E}[T]}{\min(\tau, \mathbb{E}[T])+T^c}+ O(M^{-1}+M^{-1}N^{-\frac{1}{2}})
\end{equation}
\begin{equation} \label{eq:T_approximation}
     \mathbb{E}[T]\approx\sqrt{M\sigma^2} \left( (1-\gamma)\cdot \Phi^{-1}\left(1-\frac{1}{N} \right) + \gamma\cdot \Phi^{-1}\left(1-\frac{1}{e N}\right)\right) + M\mu + T^c
\end{equation}
As mentioned above, when the Gaussian Assumption \ref{assume:gaussian} is inaccurate it may be useful to plug instead in the empirical value for $\mathbb{E}[T]$ in equation \ref{eq:eff_speedup} in order to get a more accurate estimation of $\mathbb{E}[S_\mathrm{eff}]$.

\textbf{Finding $\tau^*$.} The optimal threshold $\tau^*$ can be chosen as:
$$ \tau^* = \text{argmax}_\tau \mathbb{E}[S_\mathrm{eff}(\tau)] =\text{argmax}_\tau \left(\frac{1}{\tau + T^c}\cdot \sum_{m=1}^M \Phi\left(\frac{\tau - m\cdot \mu}{\sqrt{m\cdot \sigma^2}} \right)\right)$$

By using the above derivations, we can utilize $\mu$, $\sigma$, $T^c$ to understand the potential value of \textit{DropCompute}. This can be done without actually training and measuring the empiric distribution of $t_n^{(m)}$ as done in appendix section \ref{appendix:auto_threshold}. We note that finding $\tau^*$ does not require any estimation of $T$ and can be done without any statistics that originate from a large scale training session.

\subsection{Additive noise analysis} \label{appendix:noise_analysis}
As a conclusion of the previous sections, we understand that the effectiveness of \textit{DropCompute} is mostly due to the behavior of the stochastic latency of each worker. To analyze this phenomenon we simulate a training of multiple workers using the scheme presented in section \ref{appendix:runtime_performace} with various additive noise types.
As shown in figures \ref{fig:noise_type}, \ref{fig:noise_scale}, the ratio $\mathbb{E}[T]/\mathbb{E}[T_i]$ is good indicator for determining the potential of \textit{DropCompute} on a given training setting. High ratios indicate a gap between the step time for a single worker and the step time for multiple workers, that can be compensated by using \textit{DropCompute}.



\begin{figure}[!h]
  \centering
  \begin{subfigure}[]{0.32\textwidth}
      \includegraphics[width=\textwidth]{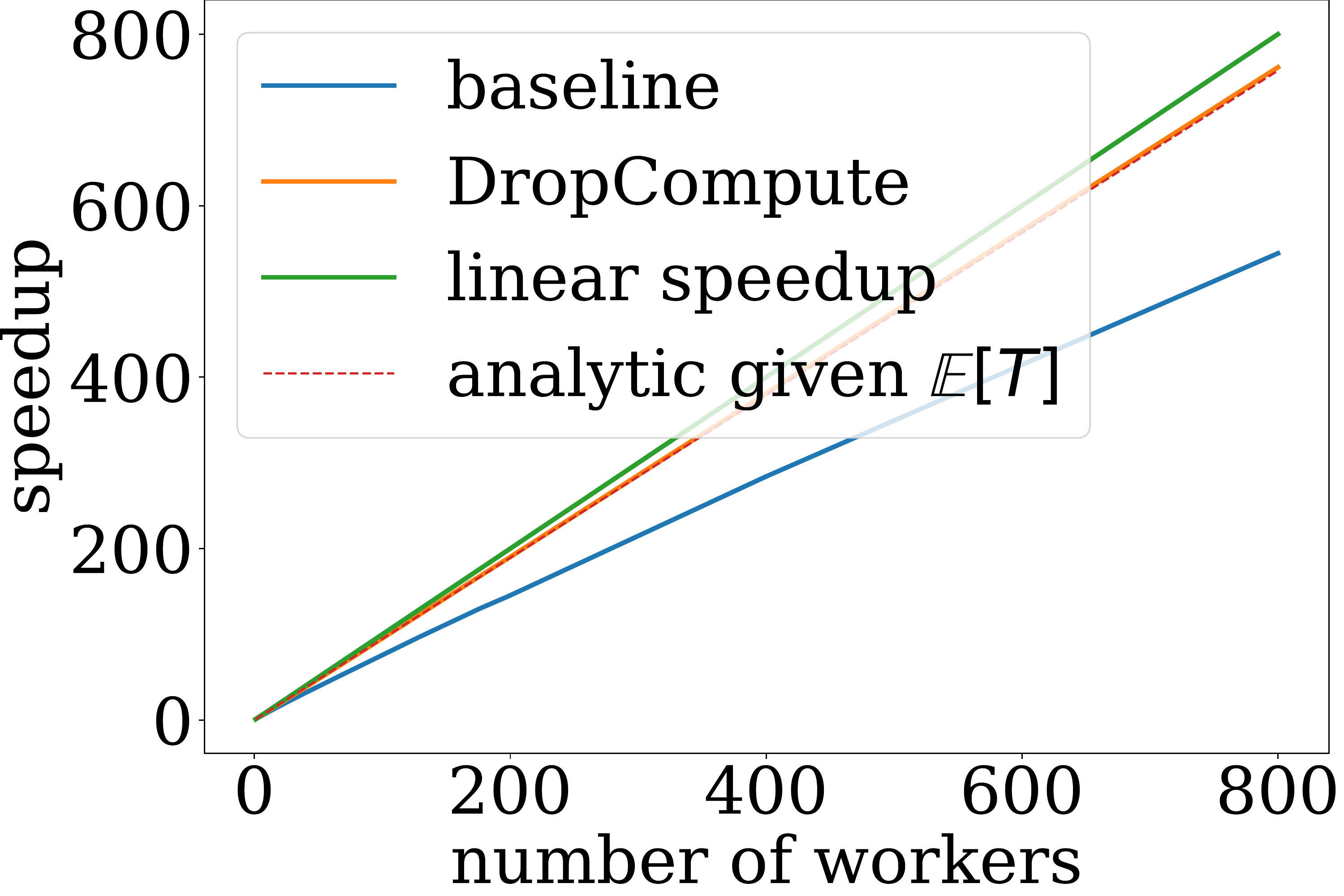}
      \caption{}
  \end{subfigure}
  \hfill
  \begin{subfigure}[]{0.32\textwidth}
      \includegraphics[width=\textwidth]{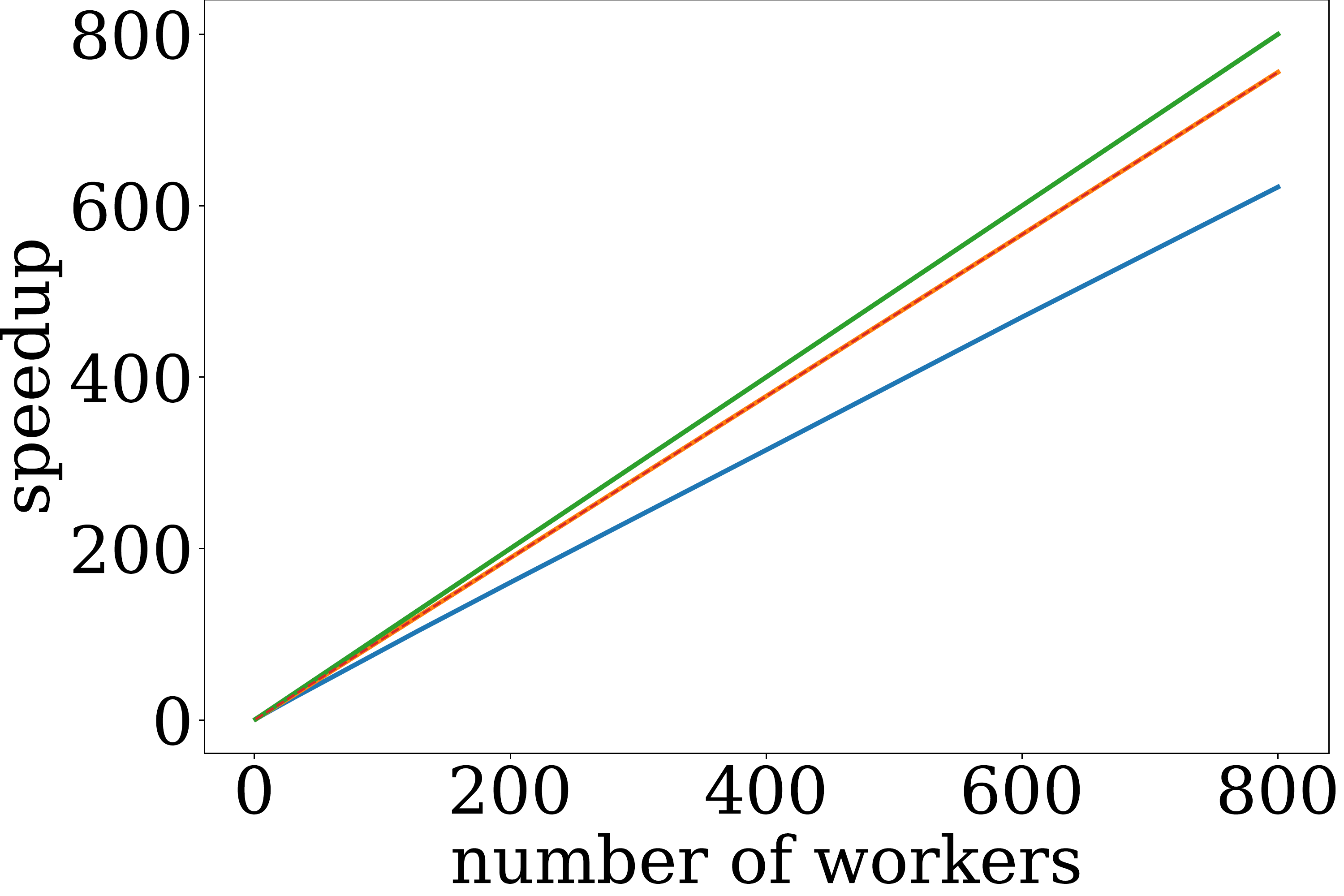}
      \caption{}
  \end{subfigure}
  \hfill
  \begin{subfigure}[]{0.32\textwidth}
      \includegraphics[width=\textwidth]{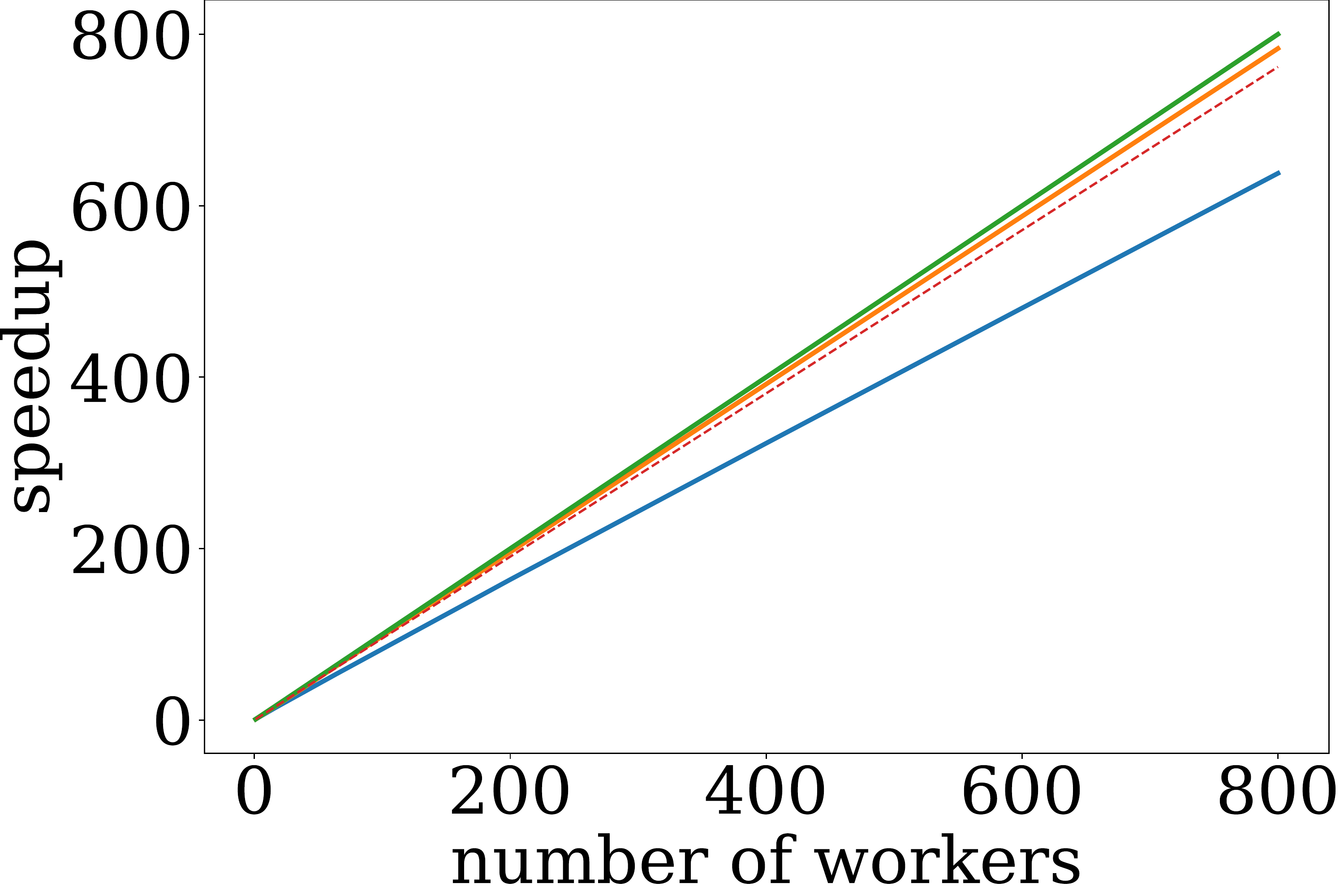}
      \caption{}
  \end{subfigure}
  \begin{subfigure}[]{0.32\textwidth}
      \includegraphics[width=\textwidth]{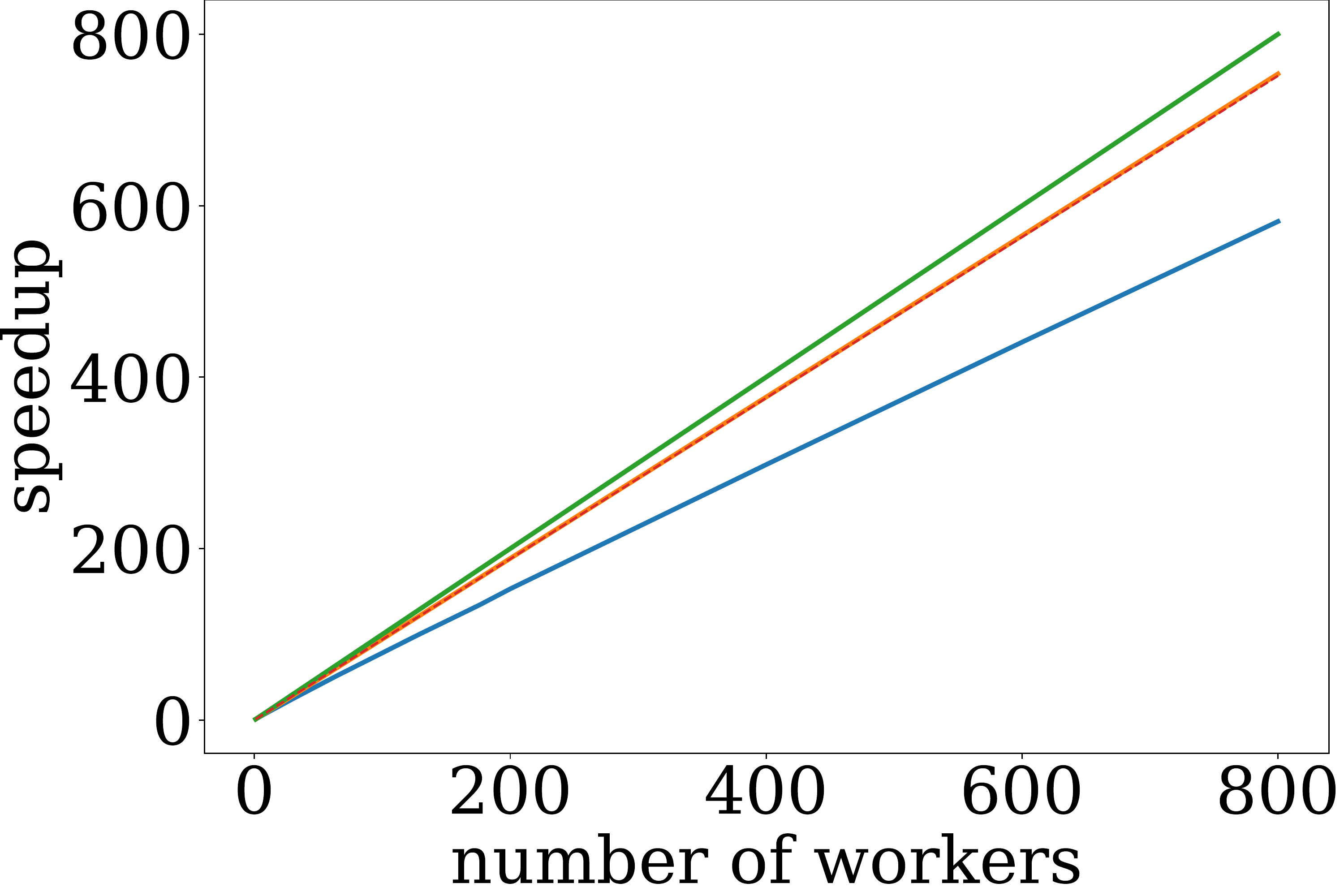}
      \caption{}
  \end{subfigure}
  \hfill
  \begin{subfigure}[]{0.32\textwidth}
      \includegraphics[width=\textwidth]{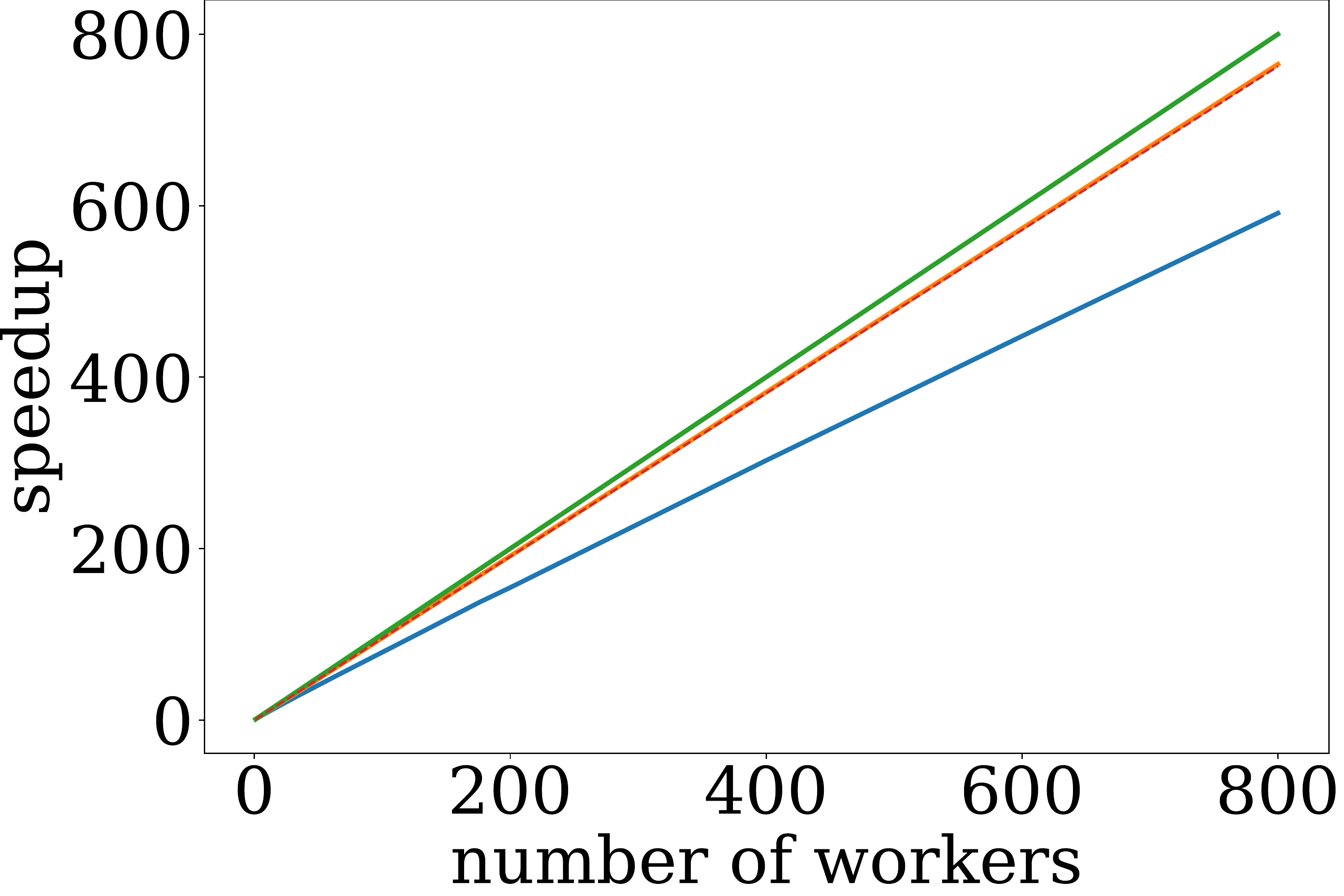}
      \caption{}
  \end{subfigure}
  \hfill
  \begin{subfigure}[]{0.32\textwidth}
      \includegraphics[width=\textwidth]{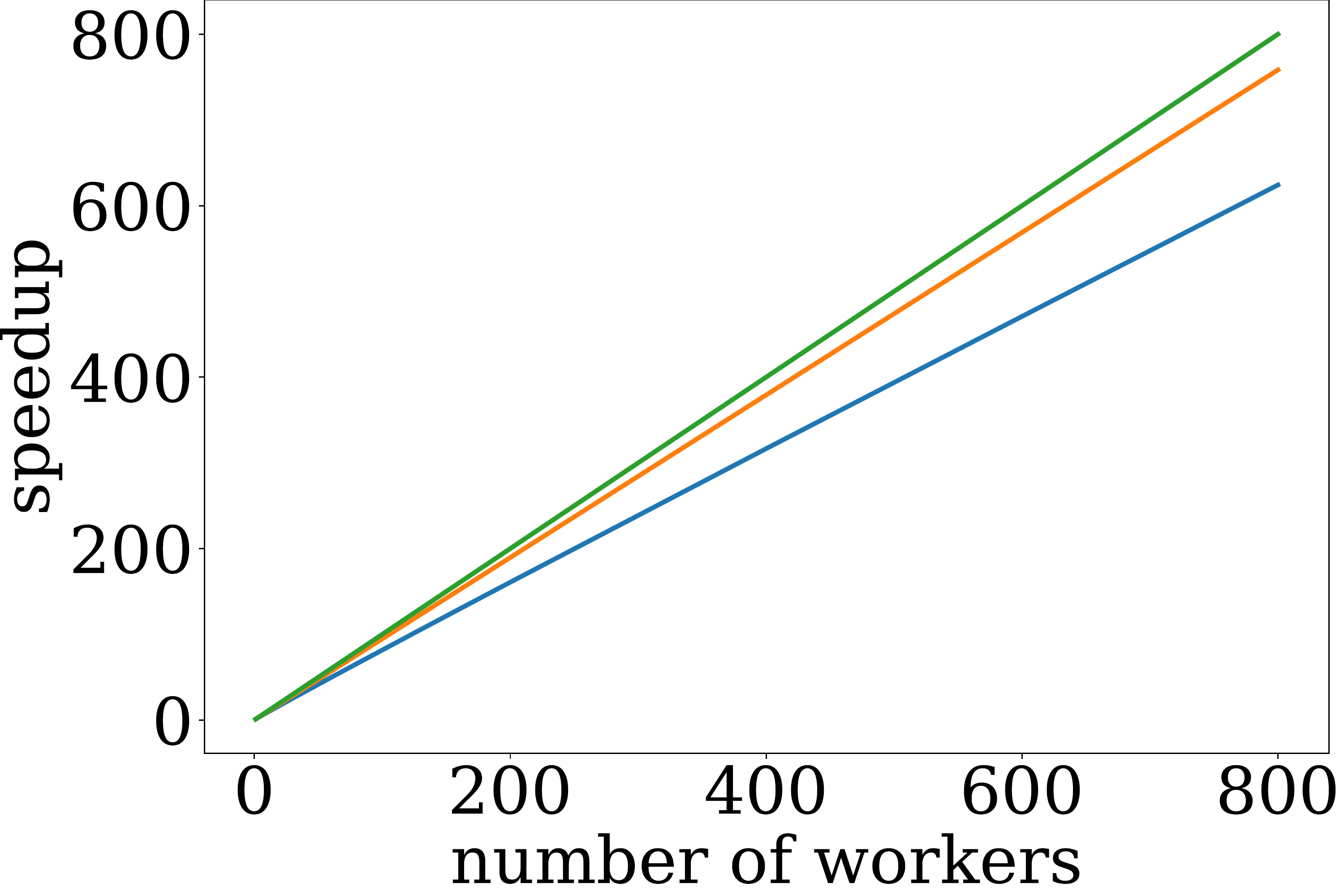}
      \caption{}
  \end{subfigure}
  \caption{\textbf{The noise distribution type impacts the effectiveness of \textit{DropCompute}.} Simulated scale graphs for a run with 12 accumulations. 
  The duration of each accumulation is set to $0.45 + \epsilon$ seconds
  Different sub-graphs exhibit different distributions for $\epsilon$. The bottom-right graph was drawn using the approximation from equation \ref{eq:auto_eff}.
  }
  \begin{tabular}{|c|cccc|c|}
    \hline 
    figure  & Mean($\epsilon$) & Var($\epsilon$) & $\epsilon$ distribution&&$\mathbb{E}[T]/\mathbb{E}[T_i]$ \\\hline
    a & 0.225 & 0.05 & lognormal & $LN(\mu=-1.84, \sigma=0.83)$ & 1.496\\
    b & 0.225 & 0.05 & normal & $\mathcal{N}(\mu=0.23, \sigma=0.22)$ & 1.302\\
    c & 0.225 & 0.05 & bernoulli & $0.45 Br(p=0.5)$ & 1.283\\
    d & 0.225 & 0.05 & exponential & $exp(\lambda=4.47)$ & 1.386\\
    e & 0.225 & 0.05 & gamma & $\gamma(\alpha=1, \beta=4.5)$ & 1.39\\
    \hline
    \end{tabular}
  \label{fig:noise_type}
\end{figure}

\begin{figure}[!h]
  \centering
  \begin{subfigure}[]{0.32\textwidth}
      \includegraphics[width=\textwidth]{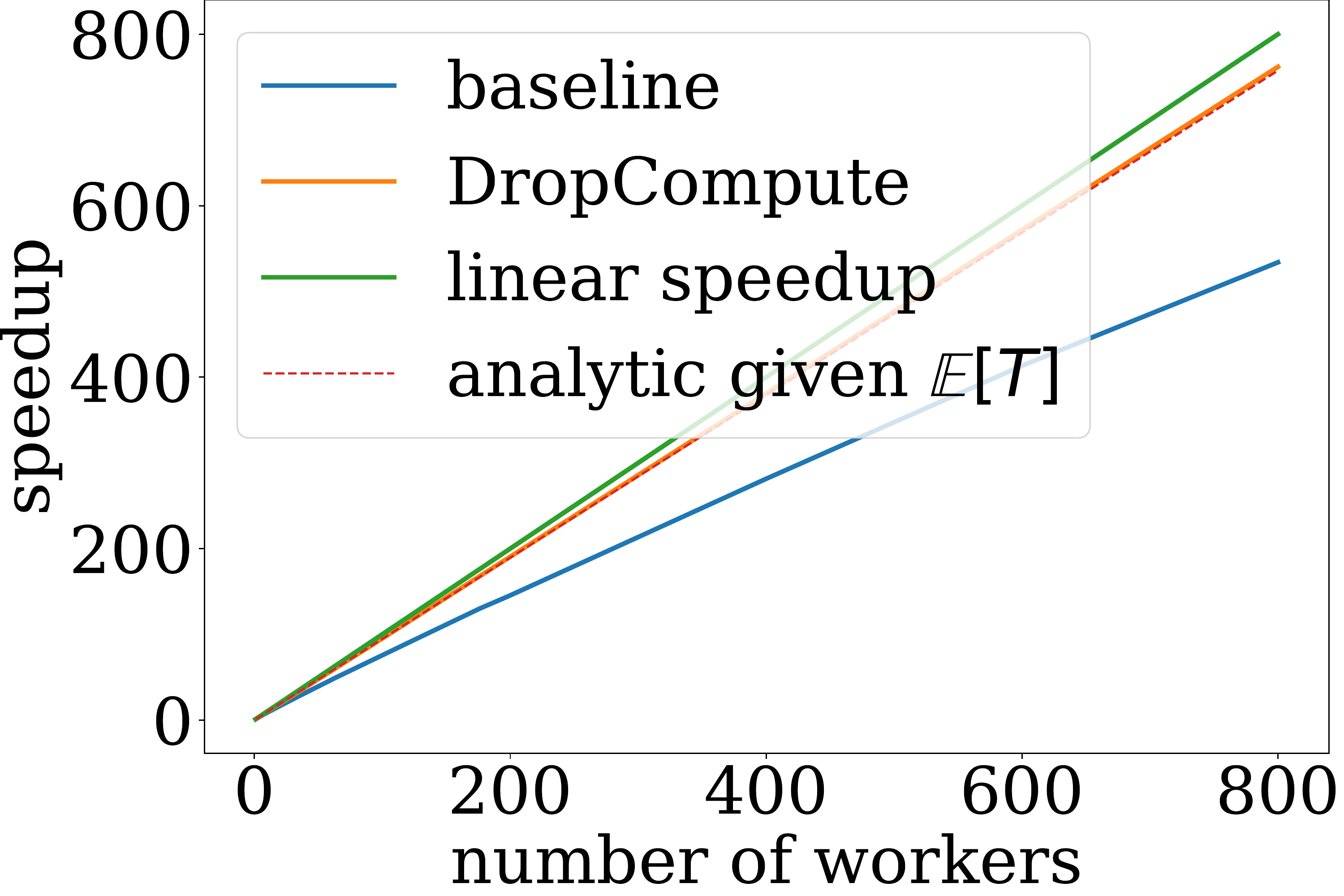}
      \caption{}
  \end{subfigure}
  \hfill
  \begin{subfigure}[]{0.32\textwidth}
      
      \includegraphics[width=\textwidth]{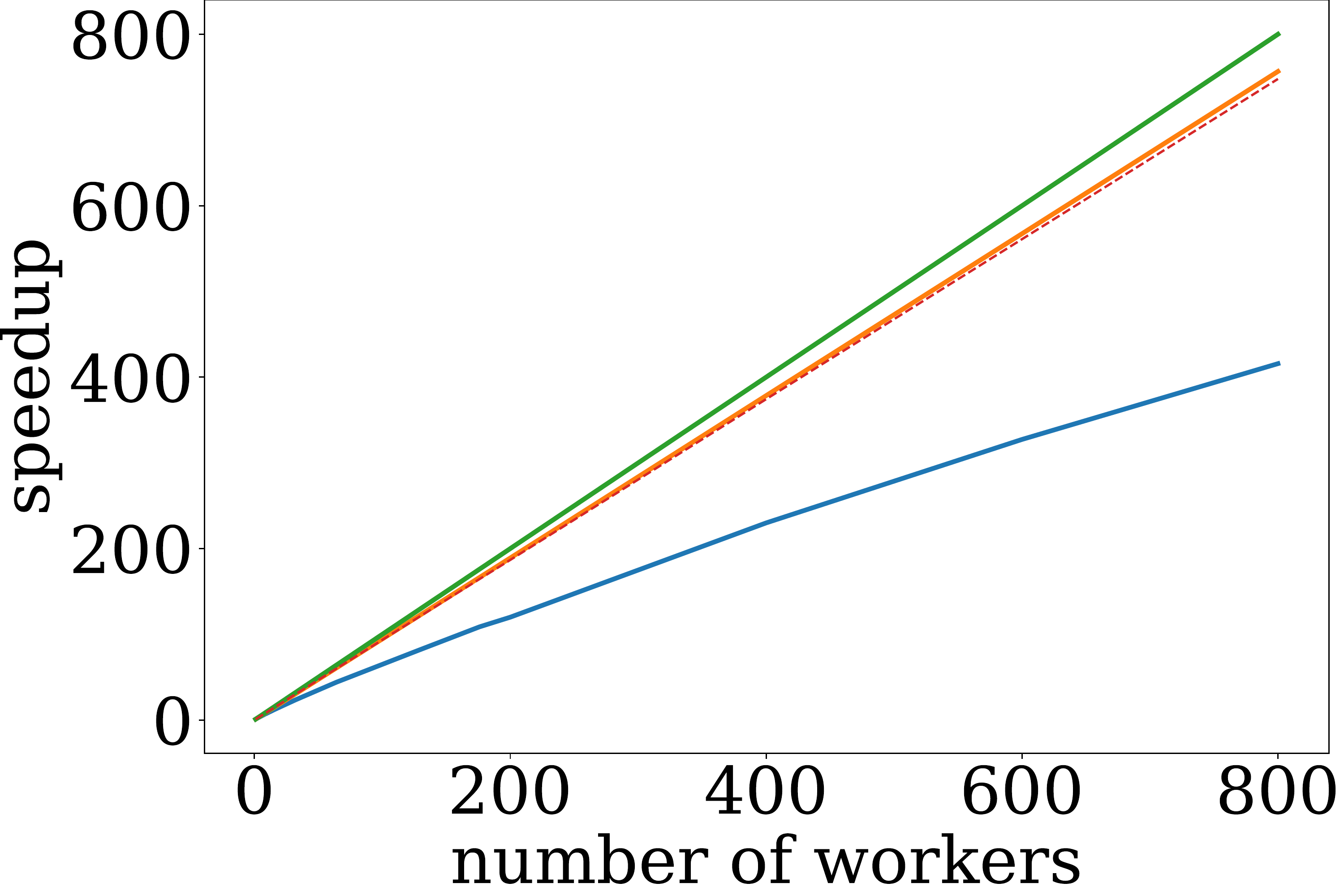}
      \caption{}
  \end{subfigure}
  \hfill
  \begin{subfigure}[]{0.32\textwidth}
      \includegraphics[width=\textwidth]{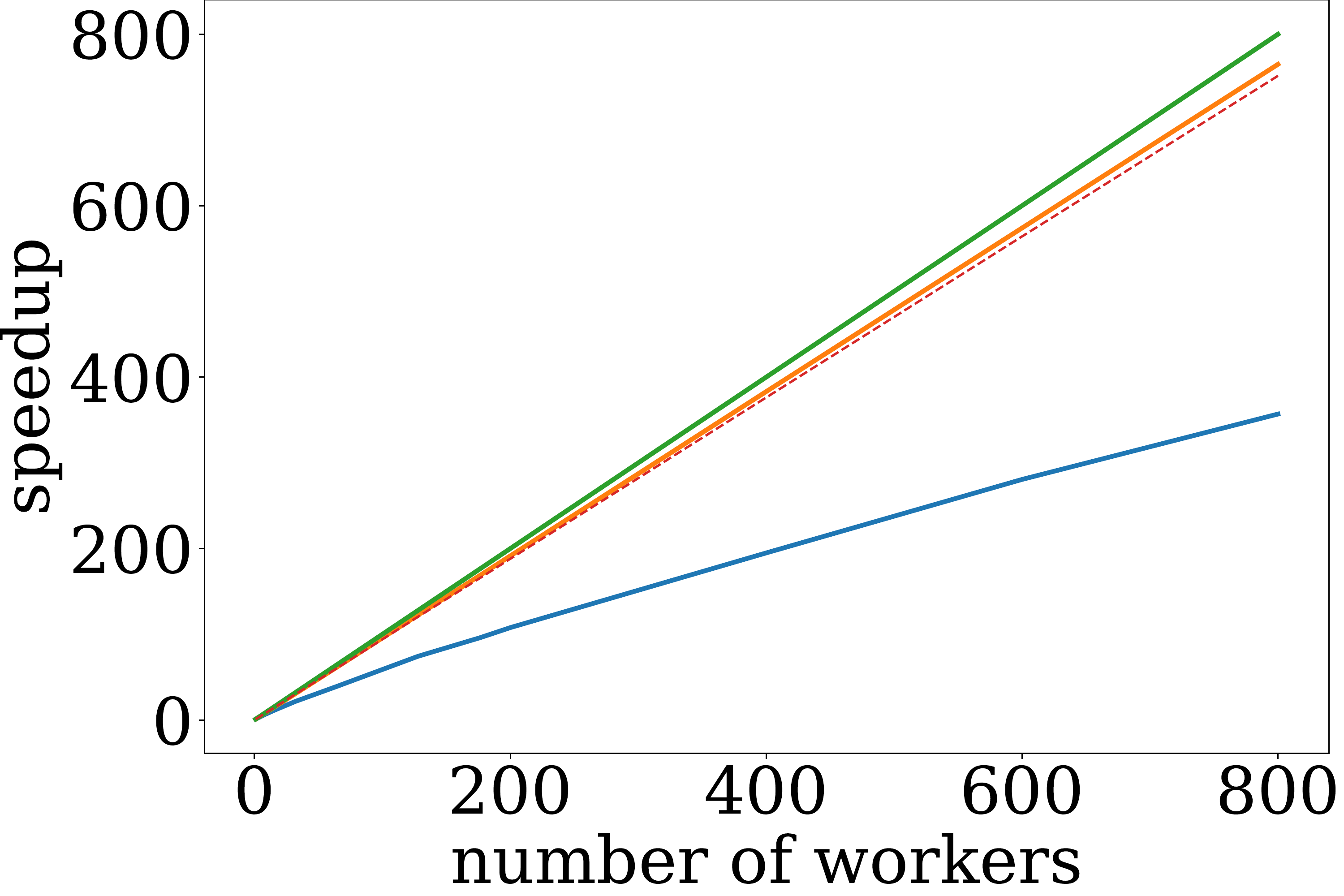}
      \caption{}
  \end{subfigure}
  \begin{subfigure}[]{0.32\textwidth}
      \includegraphics[width=\textwidth]{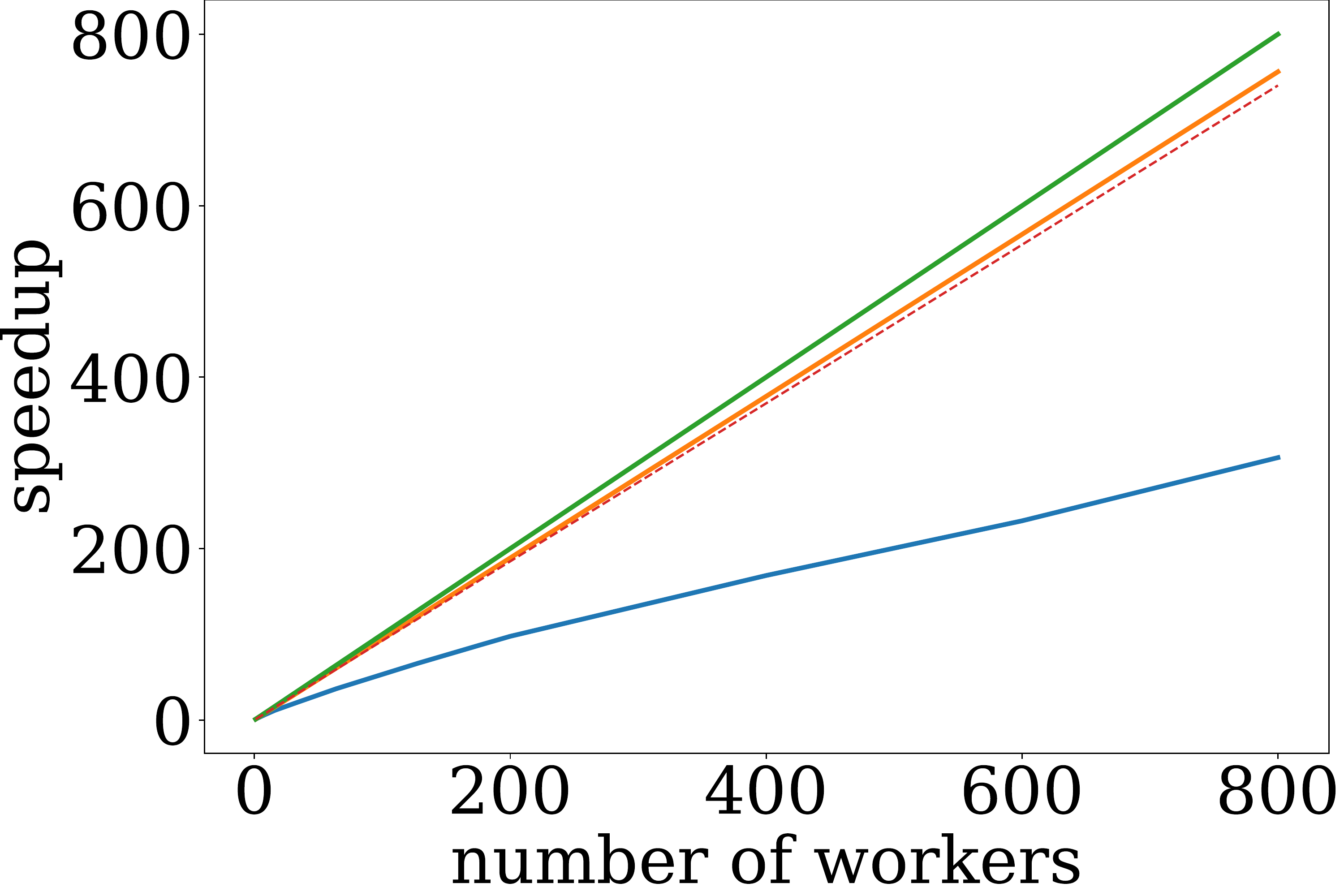}
      \caption{}
  \end{subfigure}
  \hfill
  \begin{subfigure}[]{0.32\textwidth}
      \includegraphics[width=\textwidth]{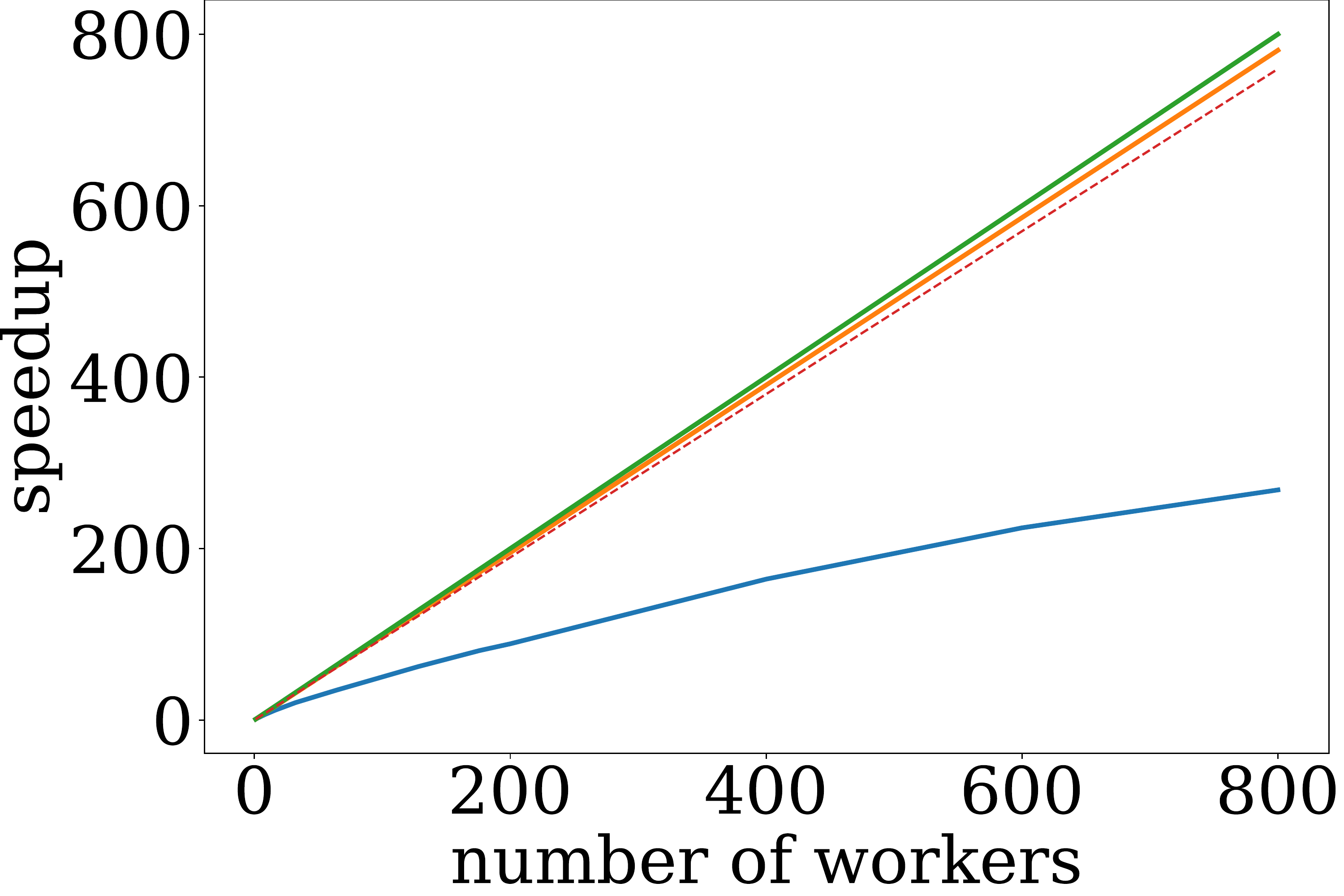}
      \caption{}
  \end{subfigure}
  \hfill
  \begin{subfigure}[]{0.32\textwidth}
      \includegraphics[width=\textwidth]{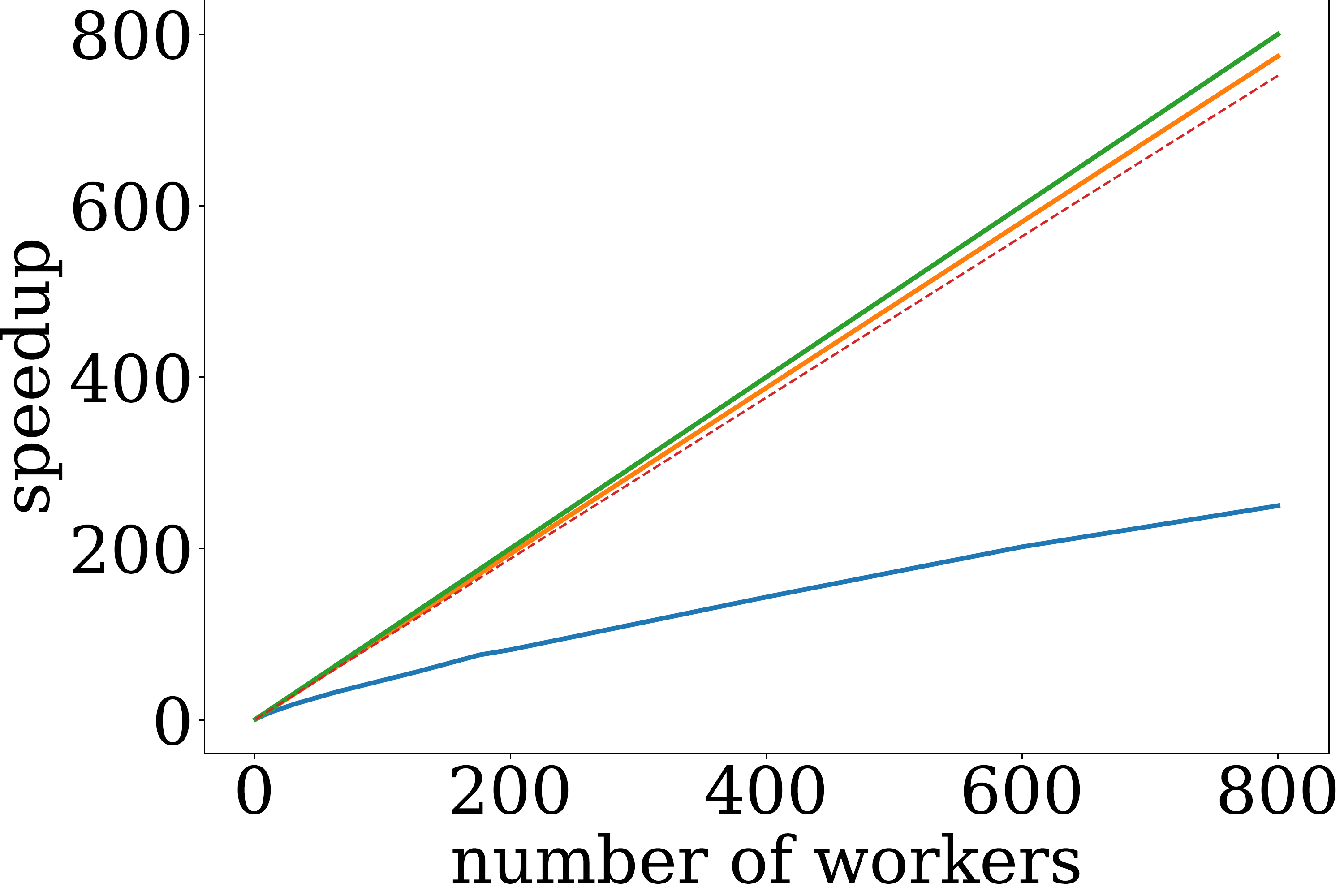}
      \caption{}
  \end{subfigure}
  \caption{\textbf{\textit{DropCompute} increases robustness to the noise variance.} Simulated scale graphs for a run with 12 accumulations. The duration of each accumulation is set to $0.45 + \epsilon$ seconds. $\epsilon$ is a stochastic lognormal random variable with a mean of $0.225$ and changing variance, according to the table below }
    \begin{tabular}{|c|cccc|c|}
    \hline 
    figure  & Mean($\epsilon$) & Var($\epsilon$) & $\epsilon$ distribution&&$\mathbb{E}[T]/\mathbb{E}[T_i]$ \\\hline
    a & 0.225 & 0.05 & lognormal & $LN(\mu=-1.84, \sigma=0.83)$ & 1.496\\
    b & 0.225 & 0.1 & lognormal & $LN(\mu=-2.04, \sigma=1.04)$ & 1.933\\
    c & 0.225 & 0.15 & lognormal & $LN(\mu=-2.18, \sigma=1.17)$ & 2.394\\
    d & 0.225 & 0.2 & lognormal & $LN(\mu=-2.29, \sigma=1.26)$ & 2.773\\
    e & 0.225 & 0.25 & lognormal & $LN(\mu=-2.38, \sigma=1.33)$ & 3.043\\
    f & 0.225 & 0.3 & lognormal & $LN(\mu=-2.46, \sigma=1.39)$ & 3.4\\
    \hline
    \end{tabular}
  \label{fig:noise_scale}
\end{figure}

\clearpage 
\section{Convergence with stochastic batch size} \label{appendix:proof}
In this section, we provide proof of theorem \ref{theorem:non_convex} (\ref{appendix:non_convex_proof}), as well as convergence proof for the loss itself in the convex case (\ref{appendix:convex_proof}). We also discuss the generalization properties with a stochastic batch in section \ref{app:generalization_discussion}.

\subsection{Proof for convex case} \label{appendix:convex_proof}

\begin{theorem} \label{theorem:convex_case} Under assumption \ref{assumption:convergence} and the specific case where $f$ is a convex function, for SGD with \textit{DropCompute} (Algorithm \ref{algorithm:drop_compute}), given $N$ workers and a local batch size $b_{local}$, we have that
\al
\label{eq:SGD_GUARANTEES_final}
\E[\mathcal{L}(\mathcal{D},\bar{\theta}) - \mathcal{L}(\mathcal{D},\theta^*)] \leq 
\frac{8Lb_\mathrm{max}\|\theta_1-\theta^*\|^2}{K} + \frac{6\sigma \|\theta_1-\theta^*\|}{\sqrt{K}}~.
\eal
where $K$ is the total number of samples used throughout the algorithm \footnote{We assume that the batch sizes may be stochastic but $K$ is predefined and deterministic}, and the expectation is with respect to the randomization introduced due to sampling from $\mathcal{D}$ throughout the optimization process.
\end{theorem}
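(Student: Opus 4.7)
The plan is to recognize that the DropCompute update is SGD with a step size that is itself rescaled by the realized batch size, and then to run a textbook convex--SGD analysis that is telescoped after weighting by those batch sizes. Inspecting Algorithm 1, each worker normalizes by the maximum $M$ and the aggregate is normalized by $N$ regardless of how many micro-batches actually finished; so if $b_i$ denotes the total number of samples contributed to iteration $i$ and $b_{\max}$ the maximum, the global update may be written $\theta_{i+1}=\theta_i - \hat\eta_i\,\tilde g_i$ with $\hat\eta_i = \eta\,b_i/b_{\max}$, where $\tilde g_i$ is the mean of $b_i$ i.i.d.\ per-sample gradients. By Assumption~1 this $\tilde g_i$ is unbiased for $\nabla\mathcal{L}(\mathcal{D},\theta_i)$ with conditional variance at most $\sigma^2/b_i$; I also use that $b_i$ is determined by compute times and is independent of the data drawn.

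Next I would run the standard one-step descent argument. Expand $\|\theta_{i+1}-\theta^*\|^2$, take conditional expectation, invoke convexity on the cross term and $L$-smoothness via the standard bound $\|\nabla\mathcal{L}(\theta_i)\|^2 \leq 2L(\mathcal{L}(\theta_i)-\mathcal{L}(\theta^*))$. Restricting $\eta \leq 1/(2L)$ ensures $L\hat\eta_i \leq 1/2$ (since $b_i \leq b_{\max}$), so the $\|\nabla\mathcal{L}(\theta_i)\|^2$ term is absorbed and I obtain
\begin{equation*}
\hat\eta_i\,\mathbb{E}[\mathcal{L}(\theta_i)-\mathcal{L}(\theta^*)] \leq \mathbb{E}\|\theta_i-\theta^*\|^2 - \mathbb{E}\|\theta_{i+1}-\theta^*\|^2 + \hat\eta_i^2\,\sigma^2/b_i .
\end{equation*}
Substituting $\hat\eta_i = \eta b_i/b_{\max}$, multiplying through by $b_{\max}$, summing over $i=1,\dots,T$, telescoping the distance-to-optimum differences, and using $\sum_i b_i = K$ yields
\begin{equation*}
\eta \sum_i b_i\,\mathbb{E}[\mathcal{L}(\theta_i)-\mathcal{L}(\theta^*)] \leq b_{\max}\|\theta_1-\theta^*\|^2 + \eta^2 \sigma^2 K / b_{\max} .
\end{equation*}

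Dividing by $\eta K$ and recognizing $\sum_i b_i \mathcal{L}(\theta_i)/K$ as $\mathbb{E}[\mathcal{L}(\bar\theta)]$, per the batch-size-weighted sampling defined in the theorem, gives
\begin{equation*}
\mathbb{E}[\mathcal{L}(\bar\theta)-\mathcal{L}(\theta^*)] \leq \frac{b_{\max}\|\theta_1-\theta^*\|^2}{\eta K} + \frac{\eta\,\sigma^2}{b_{\max}} .
\end{equation*}
Choosing $\eta = \min\bigl(1/(2L),\ b_{\max}\|\theta_1-\theta^*\|/(\sigma\sqrt{K})\bigr)$ and checking both branches produces a bound of the advertised form $O(Lb_{\max}\|\theta_1-\theta^*\|^2/K) + O(\sigma\|\theta_1-\theta^*\|/\sqrt{K})$; the numerical constants $8$ and $6$ appear after accounting for the $1/2$ slack from the smoothness step and the standard combination of the two cases.

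The main obstacle is getting a clean telescope once $b_i$ is random. With a naive constant step size, the noise term $\eta^2\sigma^2/b_i$ cannot be combined with $\sum_i b_i = K$, and the distance differences are not weighted correctly to give a meaningful average. The argument works precisely because DropCompute's normalization embeds $b_i$ into the effective step size $\hat\eta_i$: the same $b_i$-weighting on the left from the descent term, the cancellation of $1/b_i$ against $b_i^2$ in the variance term on the right, and the $b_i$-weighted definition of $\bar\theta$ all line up, turning $\sum_i b_i$ into $K$ in exactly the two places needed.
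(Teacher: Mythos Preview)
Your analysis tracks the paper's very closely: the update is recognized as SGD with effective step $\eta b_i$ (your $\hat\eta_i=\eta b_i/b_{\max}$ is the paper's $\eta\alpha_i$ after a harmless rescaling of $\eta$), you expand $\|\theta_{i+1}-\theta^*\|^2$, use convexity on the cross term and $\|\nabla\mathcal{L}\|^2\le 2L(\mathcal{L}-\mathcal{L}^*)$ on the squared term, telescope with $b_i$-weights, and finish with the same $\eta=\min\{\cdot,\cdot\}$ trade-off. One cosmetic difference: in the convex theorem the paper defines $\bar\theta$ as the deterministic $b_i$-weighted average of iterates and invokes Jensen, rather than the random sampling you cite (that definition is for the non-convex Theorem~\ref{theorem:non_convex}); both yield the same inequality here.

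There is, however, a genuine gap. You assert that ``$b_i$ is determined by compute times and is independent of the data drawn,'' and this is what lets you write $\E[\tilde g_i\mid\theta_i]=\nabla\mathcal{L}(\theta_i)$ and $\mathrm{Var}(\tilde g_i\mid\theta_i)\le\sigma^2/b_i$. The paper explicitly does \emph{not} assume this: compute time per micro-batch can depend on the sample (e.g., sequence length), so $b_i$ is only a stopping time with respect to the filtration of the samples, not independent of them. In that regime the average $\tilde g_i=\tfrac{1}{b_i}\sum_{s=1}^{b_i}g_i^s$ need not be unbiased, and its conditional variance need not be $\sigma^2/b_i$, because the number of terms you average depends on the terms themselves. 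The paper's fix is to work with the \emph{sum} $\alpha_i g_i=\sum_{s=1}^{b_i}g_i^s$, which is a stopped martingale, and to invoke Doob's optional stopping theorem in two places (Lemmas~\ref{lem:Unbiased} and~\ref{lem:BoundG1}): first to get $\E[\alpha_i(g_i-\nabla\mathcal{L}(\theta_i))\mid\theta_i]=0$, and second to bound $\E\|\sum_{s=1}^{b_i}\xi_i^s\|^2\le\sigma^2\,\E b_i$ via a supermartingale argument on the partial squared sums. Once those two facts are in hand, your telescoping argument goes through verbatim; but without them your unbiasedness and variance claims are unjustified in the setting the theorem covers.
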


\textbf{Proof of theorem \ref{theorem:convex_case}}

\textbf{Notation:} 
During the proof we denote by $b_i$ the total batch size (summed over all workers) that we employ at iteration $i$. $K = \sum_{i=1}^S b_i$ is the total number of samples along all $S$ iterations.
At iteration $i$ we maintain a weight vector $\theta_i\in\reals^d$ and query a gradient estimate $g_i$ based on a batch size of $b_i$ samples.
Thus, we set
$$ g_i = \frac{1}{b_i} \sum_{s=1}^{b_i} g_i^s $$
where $g_i^s=\nabla\ell(z_i^s,\theta_i)$, and $z_i^s$ is randomly sampled from $\mathcal{D}$. We also maintain importance weights, $\alpha_i = b_i$.

Note that there exists $b_{\max}$ such that $\alpha_i\leq b_{\max}~;\forall i$

Now, the update rule:
\al \label{eq:Update}
\theta_{i+1} = \theta_i - \eta \alpha_i g_i
\eal
Eventually, we output
$$ \bar{\theta}  = \frac{1}{\alpha_{1:S}}\sum_{i=1}^S \alpha_i \theta_i $$
where $\alpha_{1:S}:=\sum_{i=1}^S \alpha_i.$

We assume that the  batch sizes $b_i$ are stopping times w.r.t.~the natural filtration induced by the samples we draw during the optimization process. Informally, this means that the value of $b_i$ depends only on the history of the samples that we have seen prior to setting $b_i$.

We can therefore prove the following lemma,
\begin{lemma} \label{lem:Unbiased}
    Upon choosing $\alpha_i = b_i$ the following holds,
    \als 
    \E [\alpha_i (g_i - \nabla \mathcal{L}(\mathcal{D},\theta_i)\vert \theta_i]
     = \E [\sum_{s=1}^{b_i} (g_i^s - \nabla\mathcal{L}(\mathcal{D},\theta_i))\vert \theta_i] = 0~.
     \eals 
\end{lemma}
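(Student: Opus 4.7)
The first equality is a matter of unpacking definitions: by construction $g_i = \frac{1}{b_i}\sum_{s=1}^{b_i} g_i^s$ and $\alpha_i = b_i$, so $\alpha_i g_i = \sum_{s=1}^{b_i} g_i^s$, and likewise $\alpha_i \nabla\mathcal{L}(\mathcal{D},\theta_i) = \sum_{s=1}^{b_i} \nabla\mathcal{L}(\mathcal{D},\theta_i)$. Subtracting term-by-term yields the claimed identity $\alpha_i(g_i - \nabla\mathcal{L}(\mathcal{D},\theta_i)) = \sum_{s=1}^{b_i}\bigl(g_i^s - \nabla\mathcal{L}(\mathcal{D},\theta_i)\bigr)$, so the first ``$=$'' in the lemma is purely algebraic.

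The interesting content is the second equality, that the random-length sum has mean zero. The approach is a standard optional-stopping argument (Wald's identity with mean-zero summands). Let $\mathcal{F}_{s}$ denote the natural filtration generated by $\theta_i$ together with the first $s$ samples $z_i^1,\dots,z_i^s$ drawn at iteration $i$, and write $X_s := g_i^s - \nabla\mathcal{L}(\mathcal{D},\theta_i)$. By the unbiasedness assumption, $\E[X_s \mid \mathcal{F}_{s-1}] = 0$, so $(X_s)$ is a martingale difference sequence with respect to $(\mathcal{F}_s)$ given $\theta_i$. The boundedness assumption $b_i \leq b_{\max}$ makes $b_i$ a bounded stopping time, which lets us freely interchange sum and expectation.

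Concretely, I would rewrite $\sum_{s=1}^{b_i} X_s = \sum_{s=1}^{b_{\max}} X_s\,\mathbb{1}[s \leq b_i]$ and observe that the event $\{s \leq b_i\} = \{b_i \leq s-1\}^c$ is $\mathcal{F}_{s-1}$-measurable because $b_i$ is a stopping time. Conditioning on $\mathcal{F}_{s-1}$ and using the tower property,
\begin{align*}
\E\!\left[X_s\,\mathbb{1}[s \leq b_i] \,\bigm|\, \theta_i\right]
&= \E\!\left[\mathbb{1}[s \leq b_i]\,\E[X_s \mid \mathcal{F}_{s-1}] \,\bigm|\, \theta_i\right] = 0.
\end{align*}
Summing this identity over $s = 1,\dots,b_{\max}$ (a finite sum, so no interchange issue) yields $\E\bigl[\sum_{s=1}^{b_i} X_s \mid \theta_i\bigr] = 0$, which is exactly the second equality in the lemma.

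The main obstacle, if any, is making sure the stopping-time structure is spelled out cleanly — in particular that $b_i$ depends only on previously observed samples (as the authors state informally right before the lemma), so that the indicator $\mathbb{1}[s \leq b_i]$ is predictable with respect to $\mathcal{F}_{s-1}$. Once this is in place, the boundedness $b_i \leq b_{\max}$ sidesteps any integrability subtleties from Wald/optional stopping, and the conclusion is immediate from the unbiasedness of each individual stochastic gradient.
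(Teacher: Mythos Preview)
Your proposal is correct and follows essentially the same approach as the paper: both arguments identify the partial sums $M_j=\sum_{s=1}^{j}(g_i^s-\nabla\mathcal{L}(\mathcal{D},\theta_i))$ as a martingale and use the bounded stopping time $b_i\le b_{\max}$ to conclude $\E[M_{b_i}\mid\theta_i]=0$. The only difference is presentational --- the paper invokes Doob's optional stopping theorem by name, whereas you unroll its proof via the indicator decomposition $\sum_{s=1}^{b_{\max}} X_s\,\mathbb{1}[s\le b_i]$ and the predictability of $\{s\le b_i\}$; this is the standard direct proof of optional stopping for bounded stopping times, so the two arguments are the same in substance.
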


Now, using standard analysis for Online Gradient Descent \citep{hazan2016introduction} with the update rule of \eqref{eq:Update} gives,
$$
\sum_{i=1}^S \alpha_i g_i\cdot(\theta_i-\theta^*) \leq \frac{\|\theta_1-\theta^*\|^2}{\eta} + \eta \sum_{i=1}^S \alpha_i^2 \|g_i\|^2
$$
Taking expectation and using Lemma~\ref{lem:Unbiased} gives,
$$
\E\sum_{i=1}^S \alpha_i \nabla\mathcal{L}(\mathcal{D},\theta_i)\cdot(\theta_i-\theta^*) \leq \frac{\|\theta_1-\theta^*\|^2}{\eta} + \eta \E\sum_{i=1}^S \alpha_i^2 \|g_i\|^2
$$
From convexity we know that $0\leq \mathcal{L}(\mathcal{D},\theta_i)-\mathcal{L}(\mathcal{D},\theta^*)\leq \nabla \mathcal{L}(\mathcal{D},\theta_i)\cdot(\theta_i-\theta^*)$, therefore the above implies,
\al \label{eq:SGD_GUARANTEES}
\E\sum_{i=1}^S \alpha_i( \mathcal{L}(\mathcal{D},\theta_i)-\mathcal{L}(\mathcal{D},\theta^*) ) \leq \frac{\|\theta_1-\theta^*\|^2}{\eta} + \eta \E\sum_{i=1}^S \alpha_i^2 \|g_i\|^2
\eal

Now, we write $g_i = \nabla \mathcal{L}(\mathcal{D},\theta_i) + \xi_i$ where $\xi_i = g_i - \nabla \mathcal{L}(\mathcal{D},\theta_i)$ and note that
$$\alpha_i \xi_i  = \sum_{s=1}^{b_i}(g_i^s - \nabla \mathcal{L}(\mathcal{D},\theta_i)) =  \sum_{s=1}^{b_i} \xi_i^i$$
where we denote $\xi_i^s: = g_i^s - \nabla \mathcal{L}(\mathcal{D},\theta_i)$.
Next, we shall use the following lemma,
\begin{lemma}\label{lem:BoundG1}
The following holds,
\al
\label{eq:VarStep2_FinalA}
\E \alpha_i^2 \|g_i\|^2 
&\leq
2 b_{\max} \E \alpha_i \|\nabla \mathcal{L}(\mathcal{D},\theta_i)\|^2 + 2\sigma^2 \E b_i~.
\eal 
Moreover, due to the $L$-smoothness of $f(\cdot)$, and global optimality of $\theta^*$, the following holds,
\al 
\label{eq:VarStep2_FinalB}
\E \alpha_i^2 \|g_i\|^2 
&\leq
4 b_{\max}L \E \alpha_i ( \mathcal{L}(\mathcal{D},\theta_i)-\mathcal{L}(\mathcal{D},\theta^*)) + 2\sigma^2 \E b_i~.
\eal 
\end{lemma}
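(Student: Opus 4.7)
The plan is to split the gradient into its signal and noise components, bound them separately, and use the stopping time property of $b_i$ to handle the noise term via a Wald-type second moment identity.

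First, I would write $\alpha_i g_i = \sum_{s=1}^{b_i} g_i^s = \alpha_i \nabla\mathcal{L}(\mathcal{D},\theta_i) + \sum_{s=1}^{b_i}\xi_i^s$ where $\xi_i^s = g_i^s - \nabla\mathcal{L}(\mathcal{D},\theta_i)$, and apply the elementary inequality $\|a+b\|^2 \le 2\|a\|^2 + 2\|b\|^2$ to obtain
\begin{equation*}
\alpha_i^2 \|g_i\|^2 \le 2\alpha_i^2 \|\nabla\mathcal{L}(\mathcal{D},\theta_i)\|^2 + 2\Big\|\sum_{s=1}^{b_i}\xi_i^s\Big\|^2 \,.
\end{equation*}
For the signal term, since $\alpha_i = b_i \le b_{\max}$ almost surely, one factor of $\alpha_i$ can be bounded deterministically to give $2\alpha_i^2 \|\nabla\mathcal{L}\|^2 \le 2 b_{\max} \alpha_i \|\nabla\mathcal{L}\|^2$, which already matches the first term in \eqref{eq:VarStep2_FinalA} after taking expectation.

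The noise term is the main obstacle because $b_i$ is itself random and depends on the observed samples. The key is to exploit the stopping-time assumption: $\{b_i \ge s\}$ is determined by $\xi_i^1,\ldots,\xi_i^{s-1}$. I would expand
\begin{equation*}
\Big\|\sum_{s=1}^{b_i}\xi_i^s\Big\|^2 = \sum_{s,s'} \langle \xi_i^s, \xi_i^{s'}\rangle \mathbb{1}\{b_i\ge s\}\mathbb{1}\{b_i\ge s'\} \,,
\end{equation*}
and argue that the cross terms $s\neq s'$ vanish in expectation: conditional on $\theta_i$ and the history up to the larger of $s,s'$, the indicator $\mathbb{1}\{b_i\ge \max(s,s')\}$ together with the earlier factor $\xi_i^{\min(s,s')}$ is measurable, while $\xi_i^{\max(s,s')}$ is conditionally mean zero by Assumption \ref{assumption:convergence}. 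Hence
\begin{equation*}
\E\Big\|\sum_{s=1}^{b_i}\xi_i^s\Big\|^2 = \E\sum_{s=1}^{b_i}\|\xi_i^s\|^2 = \sum_{s\ge 1} \E\big[\|\xi_i^s\|^2\mathbb{1}\{b_i\ge s\}\big] \le \sigma^2 \sum_{s\ge 1}\Pr(b_i\ge s) = \sigma^2 \E b_i \,,
\end{equation*}
where the bound on $\E\|\xi_i^s\|^2$ comes from the bounded variance assumption. Combined with the factor $2$ from the initial splitting, this yields $2\sigma^2 \E b_i$, completing the proof of \eqref{eq:VarStep2_FinalA}.

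For \eqref{eq:VarStep2_FinalB}, I would invoke the standard $L$-smoothness consequence that, for any $L$-smooth function with global minimizer $\theta^*$, one has $\|\nabla\mathcal{L}(\mathcal{D},\theta_i)\|^2 \le 2L(\mathcal{L}(\mathcal{D},\theta_i)-\mathcal{L}(\mathcal{D},\theta^*))$ (obtained by minimizing the upper quadratic model of $\mathcal{L}$ at $\theta_i$ over a descent step and comparing with $\mathcal{L}(\mathcal{D},\theta^*)$). Substituting this into the first term of \eqref{eq:VarStep2_FinalA} immediately produces the $4b_{\max}L\,\E\alpha_i(\mathcal{L}(\mathcal{D},\theta_i)-\mathcal{L}(\mathcal{D},\theta^*))$ factor, giving \eqref{eq:VarStep2_FinalB}. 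The only delicate part of the argument is the stopping-time orthogonality step; once that is in place, the remainder is routine.
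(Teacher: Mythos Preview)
Your proposal is correct and follows essentially the same route as the paper: the same signal/noise split via $\|a+b\|^2\le 2\|a\|^2+2\|b\|^2$, the same $\alpha_i^2\le b_{\max}\alpha_i$ bound, and the same smoothness inequality $\|\nabla\mathcal{L}\|^2\le 2L(\mathcal{L}-\mathcal{L}^*)$ for the second part. The only cosmetic difference is in the noise term: the paper packages the argument by defining an explicit supermartingale $Q_k=\sum_{s\le k}\|\xi_i^s\|^2-k\sigma^2+2\sum_{s<n\le k}\xi_i^s\cdot\xi_i^n$ and invoking Doob's optional stopping, whereas you unroll the same computation directly with indicators $\mathbb{1}\{b_i\ge s\}$ and the predictability of $\{b_i\ge s\}$ with respect to $\mathcal{F}_{s-1}$; both yield $\E\|\sum_{s=1}^{b_i}\xi_i^s\|^2\le\sigma^2\E b_i$ for the same reason.
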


\paragraph{Final Bound:}
Plugging the above lemma back into Eq.~\eqref{eq:SGD_GUARANTEES} gives,
\al
\label{eq:SGD_GUARANTEES2}
\E\sum_{i=1}^S \alpha_i( \mathcal{L}(\mathcal{D},\theta_i)-\mathcal{L}(\mathcal{D},\theta^*) ) 
&\leq 
\frac{\|\theta_1-\theta^*\|^2}{\eta} +  4\eta b_{\max} \E\sum_{i=1}^S\alpha_i L( \mathcal{L}(\mathcal{D},\theta_i)-\mathcal{L}(\mathcal{D},\theta^*)) + 8\eta\sigma^2 \E\sum_{i=1}^S b_i \non 
&\leq 
\frac{\|\theta_1-\theta^*\|^2}{\eta} +  4\eta b_{\max}L \E\sum_{i=1}^S\alpha_i ( \mathcal{L}(\mathcal{D},\theta_i)-\mathcal{L}(\mathcal{D},\theta^*)) + 8\eta\sigma^2 K ~,
\eal
where we used $K = \sum_{i=1}^S b_i$.

Now if we pick $\eta$ such that $4\eta b_{\max}L \leq 1/2$ then we can move the second term in the RHS to the LHS and obtain,
\al
\label{eq:SGD_GUARANTEES3}
\frac{1}{2}\E\sum_{i=1}^S \alpha_i( \mathcal{L}(\mathcal{D},\theta_i)-\mathcal{L}(\mathcal{D},\theta^*) ) 
&\leq 
\frac{\|\theta_1-\theta^*\|^2}{\eta} +  8\eta\sigma^2 K ~,
\eal
Thus, choosing $\eta = \min\left\{ \frac{\|\theta_1-\theta^*\|}{\sigma\sqrt{8K}} , \frac{1}{8L b_{\max}} \right\}$ gives the following bound,
\al
\label{eq:SGD_GUARANTEES4}
\E\sum_{i=1}^S \alpha_i( \mathcal{L}(\mathcal{D},\theta_i)-\mathcal{L}(\mathcal{D},\theta^*) ) 
&\leq 
8Lb_{\max}\|\theta_1-\theta^*\|^2 + 6\sigma \|\theta_1-\theta^*\|\sqrt{K}
\eal
Now, recalling that $K : = \sum_{i=1}^S b_i= \sum_{i=1}^S \alpha_i$ and using Jensen's inequality together with $\bar{\theta} = \frac{1}{\alpha_{1:S}}\sum_{i=1}^S \alpha_i \theta_i$ yields,
\al
\label{eq:SGD_GUARANTEES5}
\E[\mathcal{L}(\mathcal{D},\bar{\theta}) - \mathcal{L}(\mathcal{D},\theta^*)]
&\leq
\E\sum_{i=1}^S \frac{\alpha_i}{\alpha_{1:S}}( \mathcal{L}(\mathcal{D},\theta_i)-\mathcal{L}(\mathcal{D},\theta^*) ) 
\leq 
\frac{8Lb_{\max}\|\theta_1-\theta^*\|^2 + 6\sigma \|\theta_1-\theta^*\|\sqrt{K}}{\alpha_{1:S}}\non 
&\leq 
\frac{8Lb_{\max}\|\theta_1-\theta^*\|^2}{K} + \frac{6\sigma \|\theta_1-\theta^*\|}{\sqrt{K}}
\eal
where we used $\alpha_{1:S} = K$. \hspace{0.5cm}\qedsymbol{}

\subsection{Proof for non-convex case} \label{appendix:non_convex_proof}

\textbf{Proof of theorem \ref{theorem:non_convex}}

We use the same notation for $b_i$ and $g_i$ as before. And again used weights,
$$ \alpha_i = b_i $$
We also assume that $b_i\leq b_{\max}~,\forall i$.

The update rule is the following,
$$ \theta_{i+1} = \theta_i - \eta \alpha_i g_i $$
And the output is $\bar{\theta}$, where we define,
$$ \bar{\theta} = \theta_i~;\quad \text{w.p.}~~ \frac{\alpha_i}{\alpha_{1:S}} $$
Thus,
$$ \E \|\nabla \mathcal{L}(\mathcal{D},\bar{\theta})\|^2=\frac{1}{\alpha_{1:S}} \sum_{i=1}^S \alpha_i \E\|\nabla \mathcal{L}(\mathcal{D},\theta_i)\|^2 $$

Using smoothness,
\als
\mathcal{L}(\mathcal{D},\theta_{i+1}) 
&\leq 
\mathcal{L}(\mathcal{D},\theta_i) - \nabla \mathcal{L}(\mathcal{D},\theta_i)\cdot (\theta_{i+1}-\theta_i) + \frac{L}{2}\|\theta_{i+1}-\theta_i \|^2 \non
&\leq 
\mathcal{L}(\mathcal{D},\theta_i) - \eta \alpha_i \nabla \mathcal{L}(\mathcal{D},\theta_i)\cdot g_i + \frac{L\eta^2}{2}\|\alpha_i g_i \|^2 \non
&\leq 
\mathcal{L}(\mathcal{D},\theta_i) - \eta \alpha_i \|\nabla \mathcal{L}(\mathcal{D},\theta_i)\|^2-\eta \alpha_i \nabla \mathcal{L}(\mathcal{D},\theta_i)\cdot (g_i-\nabla \mathcal{L}(\mathcal{D},\theta_i)) + \frac{L\eta^2}{2}\|\alpha_i g_i \|^2 
\eals
Re-arranging the above yields,
\als
\eta \alpha_i \|\nabla \mathcal{L}(\mathcal{D},\theta_i)\|^2
&\leq 
\mathcal{L}(\mathcal{D},\theta_i) - \mathcal{L}(\mathcal{D},\theta_{i+1}) -\eta \alpha_i \nabla \mathcal{L}(\mathcal{D},\theta_i)\cdot (g_i-\nabla \mathcal{L}(\mathcal{D},\theta_i)) + \frac{L\eta^2}{2}\|\alpha_i g_i \|^2 
\eals
Summing the above, and dividing by $\eta$, we obtain,
\al \label{eq:NonCvx1}
\sum_{i=1}^S \alpha_i \|\nabla \mathcal{L}(\mathcal{D},\theta_i)\|^2
&\leq 
\frac{1}{\eta}(\mathcal{L}(\mathcal{D},\theta_1) - \mathcal{L}(\mathcal{D},\theta_{S+1})) - \sum_{i=1}^S\alpha_i \nabla \mathcal{L}(\mathcal{D},\theta_i)\cdot (g_i-\nabla \mathcal{L}(\mathcal{D},\theta_i)) + \frac{L\eta}{2}\sum_{i=1}^S\|\alpha_i g_i \|^2  \non 
&\leq 
\frac{1}{\eta}(\mathcal{L}(\mathcal{D},\theta_1) - \mathcal{L}(\mathcal{D},\theta^*)) - \sum_{i=1}^S\alpha_i \nabla \mathcal{L}(\mathcal{D},\theta_i)\cdot (g_i-\nabla \mathcal{L}(\mathcal{D},\theta_i)) + \frac{L\eta}{2}\sum_{i=1}^S\|\alpha_i g_i \|^2  
\eal
where we uses $\mathcal{L}(\mathcal{D},\theta^*) \leq \mathcal{L}(\mathcal{D},\theta_{S+1})$ since $\theta^*$ is the global minimum of $\mathcal{L}(\mathcal{D},\cdot)$.

Now recall that from Lemma~\ref{lem:Unbiased} we have,
\al \label{eq:Doob2B}
\E[\alpha_i (g_i-\nabla \mathcal{L}(\mathcal{D},\theta_i))\vert \theta_i] = 0~. 
\eal
And from Lemma~\ref{lem:BoundG1} we have,
\al
\label{eq:VarStep2B}
\E \alpha_i^2 \|g_i\|^2 
&\leq
2 b_{\max} \E \alpha_i \|\nabla \mathcal{L}(\mathcal{D},\theta_i)\|^2 + 2\sigma^2 \E b_i~.
\eal 

Thus, taking expectation in Eq.~\eqref{eq:NonCvx1}, and  plugging Eq.~\eqref{eq:Doob2B} and \eqref{eq:VarStep2B}, yields,
\al \label{eq:NonCvx2}
\E \sum_{i=1}^S \alpha_i \|\nabla \mathcal{L}(\mathcal{D},\theta_i)\|^2
&\leq 
\frac{1}{\eta}(\mathcal{L}(\mathcal{D},\theta_1) - \mathcal{L}(\mathcal{D},\theta^*)) - \sum_{i=1}^S\E \alpha_i \nabla \mathcal{L}(\mathcal{D},\theta_i)\cdot (g_i-\nabla \mathcal{L}(\mathcal{D},\theta_i)) + \frac{L\eta}{2}\sum_{i=1}^S \E\|\alpha_i g_i \|^2 \non
&\leq 
\frac{1}{\eta}(\mathcal{L}(\mathcal{D},\theta_1) - \mathcal{L}(\mathcal{D},\theta^*)) + L\eta b_{\max} \sum_{i=1}^S \E\alpha_i \|\nabla \mathcal{L}(\mathcal{D},\theta_i)\|^2 
+L\eta \sigma^2 \E \sum_{i=1}^S b_i \non
&\leq 
\frac{1}{\eta}(\mathcal{L}(\mathcal{D},\theta_1) - \mathcal{L}(\mathcal{D},\theta^*)) + L\eta b_{\max} \sum_{i=1}^S \E\alpha_i \|\nabla \mathcal{L}(\mathcal{D},\theta_i)\|^2 
+L\eta \sigma^2 \cdot K
\eal
where the last line uses $K:= \sum_{i=1}^S b_i$.

Now if we pick $\eta$ such that $\eta b_{\max}L \leq 1/2$ then we can move the second term in the RHS to the LHS and obtain,
\al \label{eq:NonCvx3}
\frac{1}{2}\E \sum_{i=1}^S \alpha_i \|\nabla \mathcal{L}(\mathcal{D},\theta_i)\|^2
&\leq 
\frac{1}{\eta}(\mathcal{L}(\mathcal{D},\theta_1) - \mathcal{L}(\mathcal{D},\theta^*))  
+L\eta \sigma^2 \cdot K
\eal
Thus, choosing $\eta = \min\left\{ \frac{\sqrt{\mathcal{L}(\mathcal{D},\theta_1) - \mathcal{L}(\mathcal{D},\theta^*)}}{\sigma\sqrt{LK}} , \frac{1}{2L b_{\max}} \right\}$ gives the following bound,
\al \label{eq:NonCvx4}
\E \sum_{i=1}^S \alpha_i \|\nabla \mathcal{L}(\mathcal{D},\theta_i)\|^2
&\leq  
2L b_{\max}(\mathcal{L}(\mathcal{D},\theta_1) - \mathcal{L}(\mathcal{D},\theta^*))  
+2\sigma\sqrt{L(\mathcal{L}(\mathcal{D},\theta_1) - \mathcal{L}(\mathcal{D},\theta^*)) } \cdot \sqrt{K}
\eal
Dividing by $K:=\alpha_{1:S}$ and using the definition of $\bar{\theta}$ yields,
\al \label{eq:NonCvx5}
\E\|\nabla \mathcal{L}(\mathcal{D},\bar{\theta})\|^2 = 
\E \frac{1}{\alpha_{1:S}}\sum_{i=1}^S \alpha_i \|\nabla \mathcal{L}(\mathcal{D},\theta_i)\|^2
&\leq 
\frac{2L b_{\max}(\mathcal{L}(\mathcal{D},\theta_1) - \mathcal{L}(\mathcal{D},\theta^*))}{K}  
+\frac{2\sigma\sqrt{L(\mathcal{L}(\mathcal{D},\theta_1) - \mathcal{L}(\mathcal{D},\theta^*)) }}{\sqrt{K}}
\eal 

\subsection{Remaining Proofs}
\subsubsection{Proof of Lemma~\ref{lem:BoundG1}} 

\begin{proof}[Proof of Lemma~\ref{lem:BoundG1}]
We can write,
\al \label{eq:VarStep1}
\alpha_i^2 \|g_i\|^2 
&= 
\|b_i \nabla \mathcal{L}(\mathcal{D},\theta_i) + \sum_{s=1}^{b_i}\xi_i^s\|^2 \non
&\leq 
2\|b_i \nabla \mathcal{L}(\mathcal{D},\theta_i)\|^2 + 2\|\sum_{s=1}^{b_i}\xi_i^s\|^2 \non
&=
2 b_i^2 \|\nabla \mathcal{L}(\mathcal{D},\theta_i)\|^2 + 2\|\sum_{s=1}^{b_i}\xi_i^s\|^2 \non
&\leq
2 b_{\max} \alpha_i \|\nabla \mathcal{L}(\mathcal{D},\theta_i)\|^2 + 2\|\sum_{s=1}^{b_i}\xi_i^s\|^2 \non
\eal
where the second line uses $\|a+b\|^2 \leq 2\|a\|^2+2\|b\|^2$; the fourth line uses $b_i\leq b_{\max}$ as well as $\alpha_i = b_i$ implying that $b_i^2 \leq b_{\max}\alpha_i$.

\paragraph{Bounding $E\|\sum_{s=1}^{b_i}\xi_i^s\|^2$:}
Given $i$ and $\theta_i$, Let us define the following sequence, $Q_0=0$, $Q_1 = \|\xi_i^1\|^2-\sigma$, and for any $k>1$
$$
Q_k = \sum_{s=1}^k\|\xi_i^k\|^2-\sigma^2 \cdot k + 2\sum_{s=1}^j \sum_{n=s+1}^j \xi_i^s\cdot \xi_i^n   
$$
It can be directly shown that $\{Q_k\}_k$ is a Supermartingale sequence, and that
$$
\|\sum_{s=1}^{b_i}\xi_i^s\|^2 = Q_{b_i}+ \sigma^2 \cdot b_i
$$
Thus, since $b_i$ is a bounded stopping time, we can use Doob's optional stopping theorem which implies that,
$$
\E \|\sum_{s=1}^{b_i}\xi_i^s\|^2 = \E Q_{b_i}+ \sigma^2 \E\cdot b_i \leq EQ_0 + \sigma^2 \E\cdot b_i = 0+\sigma^2 \E b_i
$$
Plugging the above back into Eq.~\eqref{eq:VarStep1} yields,
\al
\label{eq:VarStep2_Pre}
\E \alpha_i^2 \|g_i\|^2 
&\leq
2 b_{\max} \E \alpha_i \| \nabla \mathcal{L}(\mathcal{D},\theta_i)\|^2 + 2\sigma^2 \E b_i~.
\eal 
Now, since $\mathcal{L}(\mathcal{D},\cdot)$ is $L$-smooth and $\theta^*$ is its global minima, then the following holds: $\|\nabla \mathcal{L}(\mathcal{D},\theta_i)\|^2 \leq 2L( \mathcal{L}(\mathcal{D},\theta_i)-\mathcal{L}(\mathcal{D},\theta^*))$; See e.g.~\citet{levy2017online} for the proof.
Plugging this into the above equation we obtain,
\al
\label{eq:VarStep2_Post}
\E \alpha_i^2 \|g_i\|^2 
&\leq
4 b_{\max} \E \alpha_i L( \mathcal{L}(\mathcal{D},\theta_i)-\mathcal{L}(\mathcal{D},\theta^*)) + 2\sigma^2 \E b_i~.
\eal 
\end{proof}

\subsubsection{Proof of Lemma~\ref{lem:Unbiased}}
\begin{proof}[Proof of Lemma~\ref{lem:Unbiased}]
We can  define the following Martingale sequence for each  step $s$: $M_0 = 0$, and
$M_j = \sum_{s=1}^j (g_i^s - \nabla \mathcal{L}(\mathcal{D},\theta_i))$ for any $j=1,2,\ldots$.

Thus, since the mixing time $b_i$ is bounded by $b_{\max}$, then according to Doob's optional stopping theorem \citep{levin2017markov} we have that,
\al \label{eq:Doobs}
\E[M_{b_i}\vert \theta_i] =  \E [\sum_{s=1}^{b_i} (g_i^s - \nabla \mathcal{L}(\mathcal{D},\theta_i))\vert \theta_i] = \E [M_0\vert \theta_i] = 0~. 
\eal

Now, notice that for any $i$, we have $\alpha_i (g_i-\nabla \mathcal{L}(\mathcal{D},\theta_i)) = M_{b_i}$, and therefore,
\al \label{eq:Doob2}
\E[\alpha_i (g_i-\nabla \mathcal{L}(\mathcal{D},\theta_i))\vert \theta_i] =\E[ M_{b_i}\vert \theta_i] = 0~. 
\eal

\end{proof}

\subsection{Generalization discussion} \label{app:generalization_discussion}
An interesting observation arising from our results is the small impact of gradient dropping as measured in final test accuracy. One explanation for this can be based on viewing \textit{DropCompute} as noise induced over gradients. Optimization using variants of SGD is inherently noisy due to the use of data samples used to evaluate the intermediate error. The stochastic nature of computed weight gradients was previously found to provide generalization benefits for the final trained model, although this is still part of ongoing debate \citep{geiping2022stochastic}. Nevertheless, several works found generalization benefits with the \emph{injection} of noise into the weights' gradients \citep{gradnoise} or their use in computed update rule \citep{lrdropout}.

\end{document}